\DeclareMathOperator*{\E}{\mathbb{E}}
\newcommand{\s}{\mathcal{S}}
\newcommand{\A}{\mathcal{A}}
\newcommand{\T}{\mathcal{T}}
\newcommand{\TL}{\mathcal{F}}
\newcommand{\Q}{\mathcal{Q}}
\theoremstyle{plain}
\newtheorem{theorem}{Theorem}[section]
\newtheorem{lemma}[theorem]{Lemma}
\theoremstyle{definition}
\newtheorem{definition}[theorem]{Definition}
\theoremstyle{remark}
\newtheorem{remark}[theorem]{Remark}
\theoremstyle{plain}
\providecommand{\customgenericname}{}
\newcommand{\newcustomtheorem}[2]{%
  \newenvironment{#1}[1]
  {%
   \renewcommand\customgenericname{#2}%
   \renewcommand\theinnercustomgeneric{##1}%
   \innercustomgeneric
  }
  {\endinnercustomgeneric}
}
\newcommand{\overbar}[1]{\mkern 1.5mu\overline{\mkern-1.5mu#1\mkern-1.5mu}\mkern 1.5mu}
\title{Bounding the Optimal Value Function in Compositional Reinforcement Learning}
\author[1]{\href{mailto:<jacob.adamczyk001@umb.edu>?Subject=Your UAI 2023 paper on Compositional RL}{Jacob~Adamczyk}{}}
\author[2]{Volodymyr~Makarenko}
\author[1]{Argenis~Arriojas}
\author[2]{Stas~Tiomkin}
\author[1]{Rahul~V.~Kulkarni}
\affil[1]{%
Department of Physics\\
University of Massachusetts Boston\\
Boston, MA, USA
}
\affil[2]{%
    Department of Computer Engineering\\
    San Jos\'e State University\\
    San Jos\'e, CA, USA
}
\begin{document}
\maketitle

\newif\ifarx
\arxfalse

\begin{abstract}
In the field of reinforcement learning (RL), agents are often tasked with solving a variety of problems differing only in their reward functions.
In order to quickly obtain solutions to unseen problems with new reward functions, a popular approach involves functional composition of previously solved tasks. 
However, previous work using such functional composition has primarily focused on specific instances of composition functions whose limiting assumptions allow for exact zero-shot composition. 
Our work unifies these examples and provides a more general framework for compositionality in both standard and entropy-regularized RL. 
We find that, for a broad class of functions, the optimal solution for the composite task of interest can be related to the known primitive task solutions. Specifically, we present double-sided inequalities relating the optimal composite value function to the value functions for the primitive tasks. We also show that the regret of using a zero-shot policy can be bounded for this class of functions.  
The derived bounds can be used to develop clipping approaches for reducing uncertainty during training, allowing agents to quickly adapt to new tasks. 
\end{abstract}

\section{Introduction}\label{sec:intro}
Reinforcement learning has seen great success recently, but still suffers from poor sample complexity and task generalization. 
Generalizing and transferring domain knowledge to similar tasks remains a major challenge in the field.
To combat this, different methods of transfer learning have been proposed; such as the option framework \citep{sutton_4room, barreto2019option}, successor features \citep{dayan1993improving, barreto_sf, hunt_diverg, nemecek}, and functional composition \citep{Todorov, Haarnoja2018, peng_MCP, boolean_stoch, vanNiekerk}. In this work, we focus on the latter method of ``compositionality'' for transfer learning.

Research in compositionality has focused on the development of approaches to combine previously learned optimal behaviors to obtain solutions to new tasks. In the process, many instances of functional composition in the literature have required limiting assumptions on the dynamics and allowable class of reward functions (goal-based rewards in \citep{boolean}) in order to derive exact results. Furthermore, previous work has focused on isolated examples of particular functions in either standard or entropy-regularized RL and a framework for studying a general class of composition functions without limiting assumptions is currently lacking. One of the main contributions of our work is to provide a unifying general framework to study compositionality in reinforcement learning.

In our approach, we focus on ``primitive'' tasks which differ only in their associated reward functions. 
More specifically, we consider those downstream tasks whose reward functions can be written as a global function of the known source tasks' reward functions. 
To maintain generality, we do not assume that transition dynamics are deterministic. We also do not assume that reward functions are limited to the goal-based setting, in which there are a limited number of absorbing ``goal'' states \citep{Todorov, vanNiekerk} defining the primitive task. Given the generality of this setting, we cannot expect to obtain exact solutions for compositions as in prior work. Instead, we provide a class of functions which can be used to obtain approximate solutions and bounds on the corresponding downstream tasks.

Given the solutions to a set of primitive tasks, we show that it is possible to leverage such information to obtain approximate solutions for a large class of compositely-defined tasks.
To do so, we relate the solution of the downstream composite task to the solved primitive (source) tasks. Specifically, we derive relations on the optimal value function of interest. From such a relation, a ``zero-shot'' (i.e. not requiring further training) policy can be extracted for use in the composite domain of interest.
We then show that the suboptimality (regret) of this zero-shot policy is upper bounded.

Our results support the idea that RL agents can focus on obtaining domain knowledge for simpler tasks, and later use this knowledge to effectively solve more difficult tasks. The primary contributions of the present work are as follows:

\textbf{Main contributions}
\begin{itemize}
    \item Establishing a general framework for analyzing reward transformations and compositions for the case of stochastic dynamics, globally varying reward structures, and continuing tasks.
    
    \item Derivation of bounds on the respective optimal value functions for transformed and composite tasks.
    
    \item Demonstration of zero-shot approximate solutions and value-based clipping of new tasks based on the known optimal solutions for primitive tasks. 
\end{itemize}

\section{Background}

In this work, we analyze the case of finite, discrete state and action spaces, with the Markov Decision Process (MDP) model \citep{suttonBook}. Let $\Delta(X)$ represent the set of probability distributions over $X$. Then the MDP is represented as a tuple $\T=\langle \s,\A,p, \mu, r,\gamma \rangle$ where $\s$ is the set of available states; $\A$ is the set of possible actions; $p: \s \times \A \to \Delta(\s)$ is a transition function describing the system dynamics; $\mu \in \Delta(\s)$ is the initial state distribution; $r: \s \times \A \to \mathbb{R}$ is a (bounded) reward function which associates a reward (or cost) with each state-action pair; and $\gamma \in (0,1)$ is a discount factor which discounts future rewards and guarantees the convergence of total reward for infinitely long trajectories ($T \to \infty$).

In ``standard'' (un-regularized) RL, the agent maximizes an objective function which is the expected future reward:
\begin{equation}\label{eq:std_rl_obj}
J(\pi) = \underset{\tau \sim{} p, \pi}{\mathbb{E}}
\left[ \sum_{t=0}^{\infty} \gamma^{t} r(s_t,a_t) \right].
\end{equation}

This objective has since been generalized for the setting of  entropy-regularized RL \citep{ZiebartThesis, LevineTutorial}, which augments the standard RL objective in Eq. \eqref{eq:std_rl_obj} by appending an entropic regularization term for the policy:

\begin{equation}
J(\pi)=
\underset{\tau \sim{} p, \pi}{\mathbb{E}}
\left[ \sum_{t=0}^{\infty} \gamma^{t} \left( r(s_t,a_t) - \frac{1}{\beta}
\log\left(\frac{\pi(a_t|s_t)}{\pi_0(a_t|s_t)} \right) \right) \right]
\label{eq:ent_rl_obj}
\end{equation}
where $\pi_0: \s \to \Delta(\A)$ is the fixed prior policy. The inverse temperature parameter, $\beta \in (0, \infty)$, regulates the contribution of entropic costs relative to the accumulated rewards.

The additional entropic control cost discourages the agent from choosing policies that deviate too much from the prior policy. Importantly, entropy-regularized MDPs lead to stochastic optimal policies that are provably robust to perturbations of rewards and dynamics \citep{eysenbach}; making them a more suitable choice for real-world problems.

By ``solution to the RL problem'', we hereon refer to the corresponding \textit{optimal} action-value function $Q(s,a)$ from which an \textit{optimal} control policy can be derived: $\pi(s) \in \text{argmax}_a Q(s,a)$ for standard RL; and $\pi(a|s) \propto \exp(\beta Q(s,a))$ for entropy-regularized RL. Note that these definitions are consistent with the limit $\beta \to \infty$ in which the standard RL objective is recovered from Eq.~\eqref{eq:ent_rl_obj}. For both standard and entropy-regularized RL, the optimal $Q$-function can be obtained by iterating a recursive Bellman equation. For standard RL, the Bellman optimality equation is given by \citep{suttonBook}:

\begin{equation}
    Q(s,a) = r(s,a) + \gamma \mathbb{E}_{s' \sim{} p(\cdot|s,a)} \max_{a'} \left( Q(s',a') \right)
\end{equation}

The entropy term in the objective function for entropy-regularized RL modifies the previous optimality equation to \citep{ZiebartThesis, Haarnoja_SAC}:

\begin{equation}
    Q(s,a) = r(s,a) + \frac{\gamma}{\beta} \E_{s'} \log \mathbb{E}_{a' \sim{} \pi_0(\cdot|s')}  e^{ \beta Q(s',a') }
\end{equation}

One of the primary goals of research in compositionality and transfer learning is deriving results for the optimal $Q$ function for new tasks based on the known optimal $Q$ function(s) for primitive tasks. There exist many forms of composition and transfer learning in RL, as discussed by \cite{taylor_survey}. In this paper, we focus on the case of concurrent skill composition by a single agent as opposed to an options-based approach \citep{sutton_4room}, or other hierarchical compositions \citep{hierarchy, saxe_hier}. We elaborate on this point with the definitions below.

To formalize our problem setup, we adopt the relevant definitions provided by \citep{ourAAAI}:

\begin{definition}
A \textbf{primitive RL task} is specified by an MDP $\mathcal{T} = \langle \s,\A,p,r, \gamma \rangle$ for which the optimal $Q$ function is known.
\end{definition}

In this work, we focus on primitive tasks with general reward functions, i.e. including both goal-based (sparse rewards on absorbing sets such as in the linearly solvable MDP framework of \citep{Todorov, boolean, vanNiekerk}) \textit{and} arbitrary reward landscapes \citep{Haarnoja2018}. 

\begin{definition}
The \textbf{transformation of an RL task} is defined by its (bounded and continuous) transformation function: $f: \mathbb{R} \to \mathbb{R}$ and a primitive task $\mathcal{T}$. The transformed task shares the same states, actions, dynamics, and discount factor as $\mathcal{T}$ but has a transformed reward function $\widetilde{r}(s,a) = f(r(s,a))$.
\end{definition}

\begin{definition}
The \textbf{composition of $M$ RL tasks} is defined by a (bounded and continuous) function $F: \mathbb{R}^M \to \mathbb{R}$ and a set of primitive tasks $\{\mathcal{T}^{(k)}\}$. The \textbf{composite} RL task is defined by a new reward function $\widetilde{r}(s,a)~=~F(\{r^{(k)}(s,a)\})$; and shares the same states, actions, dynamics, and discount factor as all the primitive RL tasks.
\end{definition}

Finally, we define the Transfer Library, the set of functions which obey the hypotheses of our subsequent results (see Sections \ref{sec:std_lemmas} and \ref{sec:entropy-regularized_lemmas}). This definition serves to facilitate the general discussion of results obtained.
\begin{definition}
Given a set of primitive tasks $\{\mathcal{T}^{(k)}\}$, the \textbf{Transfer Library}, denoted by $\TL$, is the set of all transformation (or composition, when $M > 1$) functions $f$ which admit double-sided bounds (see Sections \ref{sec:std_lemmas} and \ref{sec:entropy-regularized_lemmas}) 
on the composite task's optimal $Q$ function ($\widetilde{Q}$).
\end{definition}

Specifically,  $\TL = \{f\ | f\ \textrm{satisfies Lemma 4.1 or 4.3}\}$ for standard RL and  $\TL = \{f\ | f\ \textrm{satisfies Lemma 5.1 or 5.3}\}$ for entropy-regularized RL.

We have empirically found (cf. Fig.~\ref{fig:tasks} and Supplementary Material) that by using the derived bounds for the optimal value function, the agent can learn the optimal policy more efficiently for tasks in the Transfer Library.

\section{Previous Work}\label{sec:prev_work}

There is much previous work concerning compositionality and transfer learning in reinforcement learning. In this section we will give a brief overview by highlighting the work most relevant for the current discussion.

In this work, we focus on value-based composition; rather than policy-based composition \cite{peng_MCP}, features-based composition \citep{barreto_sf}, or hierarchical (e.g. options-based) composition \citep{alver, sutton_4room, barreto2019option}.

Value based methods of composition use the optimal value functions of lower-level or simpler ``primitive'' tasks to derive an approximation (or in some cases exact solution) for the composite task of interest.
In the optimal control framework, \citep{Todorov}
has shown that optimal value functions can be composed exactly for linearly-solvable MDPs with a $\textrm{LogSumExp}$ or ``soft OR'' composition over primitive tasks; assuming that tasks share the same absorbing set (boundary states). With a similar assumption of the shared absorbing set, \citep{boolean} show that exact optimal value functions for Boolean compositions may be recovered from primitive task solutions; thereby allowing an exponential improvement in knowledge acquisition.

In more recent work, in the context of MaxEnt RL, \cite{Haarnoja2018} have shown that linear convex-weighted compositions in stochastic environments result in a bound on optimal value functions, and the policy extracted from this zero-shot bound is indeed useful for solving the composite task. The same premise of convex-weighted reward structures was studied by \cite{hunt_diverg} where the difference between the bound of \citep{Haarnoja2018} and the optimal value function can itself be learned, effectively tightening the bound until convergence. This notion of a corrective function was subsequently generalized by \cite{ourAAAI} to allow for arbitrary functions of composition in entropy-regularized RL.

Other authors have considered the question of linear task decomposition, for instance \citep{barreto_sf} where a convex weight vector over learned \textit{features} can be calculated to solve the transfer problem over linearly-decomposable reward functions in standard RL. More recent developments on this line of research include \citep{hong2022bilinear} where a more general ``bilinear value decomposition'', conditioned on various goals, is learned. In \citep{kimconstrained}, the authors consider the successor features (SFs) framework of \citep{barreto_sf}, and propose lower and upper bounds on the optimal value function of interest. They show that by replacing standard generalized policy improvement (GPI) with a constrained version which respects their bounds, they are able to transfer knowledge more successfully to future tasks in the successor features framework. 

With our reduced assumptions (any constant dynamics, constant discount factor, any rewards) it is not generally possible to solve the transformed or composed tasks based only on primitive knowledge. Nevertheless, we are able to derive bounds on the optimal $Q$-functions in both standard and entropy-regularized RL, from which we can immediately derive policies which fare well in the transformed and composed problem settings. Additionally, we are able to prove that the derived policies have a bounded regret, in a similar form as \cite{Haarnoja2018}'s Theorem 1; but in a more general setting.

\section{Standard RL}\label{sec:std_lemmas}
\subsection{Transformation of Primitive Task}
In this section, we consider \textbf{transformations of a primitive task} in the ``standard'' (un-regularized) RL setting. We assume a solved primitive task is given with reward function $r(s,a)$. Transforming this underlying reward function gives rise to a new reward function, $f(r(s,a))$ which specifies a new RL task to solve. All other variables defining the MDP ($ \s, \A, p, \mu, \gamma $) are assumed to be fixed. In this new setting, we consider how to use the solution to the primitive task (that is, with rewards $r$) to inform the solution of the new, transfer task (that is, with rewards $f(r)$). The set of all applicable functions $f$ for which we can derive bounds, forms the aforementioned \textit{Transfer Library} with respect to the primitive task, for standard RL.

For a general class of transformations of reward functions (as defined below), we show that the optimal value function for the transformed task is bounded by an analogous functional transformation of the optimal value function for the primitive task. (The proofs for all theoretical results are provided in the Supplementary Material.)

We use the following definitions in the subsequent (standard RL) results: Let $X$ be the codomain for the $Q$ function of the primitive task ($Q: \s \times \A \to X \subseteq \mathbb{R}$). Let $V_f$ denote the state-value function derived from the transformation function $f(Q)$: $V_f(s) = \max_a f(Q(s,a))$.
\begin{lemma}[Convex Conditions]\label{thm:convex_cond_std}
Given a primitive task with discount factor $\gamma$ and a bounded, continuous transformation function $f~:~X~\to~\mathbb{R}$ which satisfies:  
\begin{enumerate}
    \item $f$ is convex on its domain $X$\footnote{This condition is not required for deterministic dynamics.\label{dynamics condition}};
    \item $f$ is sublinear:
    \begin{enumerate}[label=(\roman*)]
        \item $f(x+y) \leq f(x) + f(y)$ for all $x,y \in X$
        \item $f(\gamma x) \leq \gamma f(x)$ for all $x \in X$ 
    \end{enumerate}
    \item $f\left( \max_{a} \mathcal{Q}(s,a) \right) \leq \max_{a}~f\left( \mathcal{Q}(s,a) \right)$ for all functions $\mathcal{Q}:~\s~\times~\A \to X.$\footnote{Although this condition is automatically satisfied, it allows for a smoother connection to the analogous hypotheses in Lemmas \ref{thm:concave_cond_std}, \ref{thm:forward_cond_entropy-regularized}, \ref{thm:reverse_cond_entropy-regularized} and compositional results in the Supplementary Material.}
\end{enumerate}

then the optimal action-value function for the transformed rewards, $\widetilde{Q}$, is now related to the optimal action-value function with respect to the original rewards  by:

\begin{equation}\label{eqn:convex_std}
    f(Q(s,a)) \leq \widetilde{Q}(s,a) \leq f(Q(s,a)) + C(s,a)
\end{equation}

where $C(s,a)$ is the optimal value function for a task with reward
\begin{equation}\label{eq:std_convex_C_def}
    r_C(s,a) = f(r(s,a)) + \gamma \E_{s' \sim{} p } V_f(s') - f(Q(s,a)).
\end{equation}
that is, $C$ satisfies the following recursive equation:
\begin{equation}
    C(s,a) = r_C(s,a) + \gamma \E_{s' \sim{} p} \max_{a'} C(s',a').
\end{equation}
\end{lemma}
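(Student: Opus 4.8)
The plan is to prove each half of \eqref{eqn:convex_std} by exhibiting $f\circ Q$ and $f\circ Q + C$ as, respectively, a sub-solution and a super-solution of the Bellman optimality equation for the transformed task, and then to invoke monotonicity (and the contraction property) of the Bellman optimality operator to transfer these inequalities to its fixed point $\widetilde{Q}$. Let $\widetilde{\mathcal{B}}$ denote the Bellman optimality operator of the transformed task, $(\widetilde{\mathcal{B}} g)(s,a) = f(r(s,a)) + \gamma\, \E_{s'\sim p(\cdot|s,a)} \max_{a'} g(s',a')$, a $\gamma$-contraction in sup-norm whose unique fixed point is $\widetilde{Q}$. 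Recall also that $V(s) = \max_a Q(s,a)$ and $Q(s,a) = r(s,a) + \gamma \E_{s'} V(s')$, and that $Q$ is bounded since $r$ is bounded and $\gamma \in (0,1)$.

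\textbf{Lower bound.} First I would show the pointwise inequality $(\widetilde{\mathcal{B}}(f\circ Q))(s,a) \geq f(Q(s,a))$. Starting from $f(Q(s,a)) = f\bigl(r(s,a) + \gamma \E_{s'} V(s')\bigr)$, apply sublinearity (i) to split the argument of $f$, then sublinearity (ii) to pull out the factor $\gamma$, then convexity of $f$ (Jensen's inequality) to move $f$ inside $\E_{s'}$, and finally condition 3 to obtain $f(V(s')) = f\bigl(\max_{a'} Q(s',a')\bigr) \leq \max_{a'} f(Q(s',a')) = V_f(s')$. Chaining these gives $f(Q(s,a)) \leq f(r(s,a)) + \gamma \E_{s'} V_f(s') = (\widetilde{\mathcal{B}}(f\circ Q))(s,a)$. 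Iterating the monotone operator $\widetilde{\mathcal{B}}$ on $f\circ Q$ then yields a non-decreasing sequence converging to $\widetilde{Q}$, so $\widetilde{Q} \geq f\circ Q$. (When dynamics are deterministic the Jensen step is vacuous, which is exactly why assumption 1 is not needed there.)

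\textbf{Upper bound.} Observe that $r_C(s,a) = (\widetilde{\mathcal{B}}(f\circ Q))(s,a) - f(Q(s,a)) \geq 0$ by the lower-bound computation, so $r_C$ is a bounded, nonnegative reward (boundedness uses boundedness of $f$ and of $Q$); hence its optimal value function $C$ is well-defined and nonnegative, which also makes the two-sided bound consistent. Put $g = f\circ Q + C$. Using $\max_{a'}\bigl(f(Q(s',a')) + C(s',a')\bigr) \leq V_f(s') + \max_{a'} C(s',a')$, then the definition \eqref{eq:std_convex_C_def} of $r_C$, and finally the Bellman equation $C(s,a) = r_C(s,a) + \gamma \E_{s'} \max_{a'} C(s',a')$, one computes $(\widetilde{\mathcal{B}} g)(s,a) \leq f(Q(s,a)) + C(s,a) = g(s,a)$, so $g$ is a super-solution. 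Monotonicity of $\widetilde{\mathcal{B}}$ then gives $\widetilde{Q} \leq g = f\circ Q + C$, which completes the proof.

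\textbf{Main obstacle.} The individual estimates are short; the delicate part is arranging the chain of inequalities in the lower bound so that each hypothesis is used at the correct place — in particular, sublinearity must be applied to the \emph{argument} $r + \gamma \E_{s'} V$ of $f$ rather than to $Q$ after the fact, and the Jensen step (the only place convexity enters) is precisely what would fail under stochastic dynamics without assumption 1. A secondary point requiring care is the well-posedness of $C$ and the routine-but-essential appeal to monotonicity and contraction of $\widetilde{\mathcal{B}}$ to pass from the sub/super-solution properties to the stated inequalities on the fixed point $\widetilde{Q}$.
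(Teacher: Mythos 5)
Your proof is correct, and it rests on exactly the same three key inequalities as the paper's: sublinearity to split $f\bigl(r+\gamma\E_{s'}V\bigr)$, Jensen's inequality via convexity to move $f$ inside the expectation over $s'$, and condition 3 to pass from $f(\max_{a'}Q)$ to $V_f$; and, for the upper bound, splitting the max over $f(Q)+C$ and recognizing the definition of $r_C$ together with the Bellman equation for $C$. The organization, however, is genuinely different. The paper argues by induction on the number of Bellman backups: for the lower bound it compares the iterates $\widetilde{Q}^{(N)}$ with $f(Q^{(N)})$, where $Q^{(N)}$ are the primitive task's value-iteration iterates, and then lets $N\to\infty$ using continuity of $f$; for the upper bound it tracks iterates $C^{(N)}$ in the same way. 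You instead verify the inequality chain once, directly at the fixed point $Q$, exhibiting $f\circ Q$ as a sub-solution and $f\circ Q+C$ as a super-solution of the transformed task's Bellman optimality operator, and conclude by its monotonicity and contraction. This buys a few small advantages: you never apply $f$ to the non-optimal iterates $Q^{(N)}$ (which need not lie in the stated domain $X$), you do not need the continuity-of-$f$ limiting step, and the nonnegativity of $r_C$ (hence of $C$, which is what makes the two-sided bound consistent) falls out of the lower-bound computation rather than requiring the separate verification the paper supplies after its proof. Conversely, the paper's unrolled induction is essentially the same monotonicity argument made explicit, so the mathematical content is equivalent; your version is simply the more standard comparison-principle packaging of it.
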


With this result, we have a double-sided bound on the values of the optimal $Q$-function for the composite task. 
In particular, the lower bound ($f(Q)$) provides a zero-shot approximation for the optimal $Q$-function. It is thus of interest to analyze how well a policy $\pi_f$ extracted from such an estimate ($f(Q)$) might perform.
To this end, we provide the following result which bounds the suboptimality of $\pi_f$ as compared to the optimal policy.

\begin{lemma}
Consider the value of the policy $\pi_f(s) = \max_{a} f(Q(s,a))$ on the transformed task of interest, denoted by $\widetilde{Q}^{\pi_f}(s,a)$.
The sub-optimality of $\pi_f$ is then upper bounded by:
\begin{equation}
    \widetilde{Q}(s,a) - \widetilde{Q}^{\pi_f}(s,a) \leq D(s,a)
\end{equation}
where $D$ is the value of the policy $\pi_f$ in a task with reward
\begin{align*}
     r_D(s,a) = \gamma \E_{s'\sim{} p}\E_{a' \sim{} \pi_f} &\biggr[ \max_{b} \big\{ f(Q(s',b)) + C(s',b) \big\} \\ &- f(Q(s',a')) \biggr]
\end{align*}
that is, $D$ satisfies the following recursive equation:
\begin{equation}
    D(s,a) = r_D(s,a) + \gamma \E_{s'\sim{}p} \E_{a' \sim{} \pi_f} D(s',a').
\end{equation}
\end{lemma}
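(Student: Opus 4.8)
The plan is to run the standard performance-difference argument on the transformed MDP and then substitute the two-sided bound of Lemma~\ref{thm:convex_cond_std} at the appropriate places. First I would write the two Bellman relations for the transformed task (rewards $f(r)$): the optimality equation $\widetilde{Q}(s,a) = f(r(s,a)) + \gamma\,\E_{s'}\max_{a'}\widetilde{Q}(s',a')$, and the policy-evaluation equation $\widetilde{Q}^{\pi_f}(s,a) = f(r(s,a)) + \gamma\,\E_{s'}\E_{a'\sim\pi_f}\widetilde{Q}^{\pi_f}(s',a')$ for the greedy policy $\pi_f(s)\in\arg\max_a f(Q(s,a))$; both are finite since $f$ is bounded and $\gamma<1$, and $\delta := \widetilde{Q}-\widetilde{Q}^{\pi_f}\ge 0$ by optimality of $\widetilde{Q}$. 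Subtracting the two equations and inserting $\pm\,\E_{a'\sim\pi_f}\widetilde{Q}(s',a')$ inside the expectation gives
\[
    \delta(s,a) = \gamma\,\E_{s'}\Bigl[\max_{a'}\widetilde{Q}(s',a') - \E_{a'\sim\pi_f}\widetilde{Q}(s',a')\Bigr] + \gamma\,\E_{s'}\E_{a'\sim\pi_f}\delta(s',a').
\]

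Next I would bound the first bracket term-by-term using Lemma~\ref{thm:convex_cond_std}: on the maximized term use the upper bound $\widetilde{Q}(s',a')\le f(Q(s',a'))+C(s',a')$, and on the averaged (subtracted) term use the lower bound $\widetilde{Q}(s',a')\ge f(Q(s',a'))$. Since $\pi_f$ is deterministic, the quantity $\max_b\{f(Q(s',b))+C(s',b)\}$ does not depend on $a'$ and can be pulled inside $\E_{a'\sim\pi_f}$, so the first bracket is at most $\E_{a'\sim\pi_f}\bigl[\max_b\{f(Q(s',b))+C(s',b)\}-f(Q(s',a'))\bigr]$, and multiplying by $\gamma$ and taking $\E_{s'}$ reproduces exactly $r_D(s,a)$. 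This yields the one-sided recursion
\[
    \delta(s,a) \;\le\; r_D(s,a) + \gamma\,\E_{s'}\E_{a'\sim\pi_f}\delta(s',a') \;=:\; \bigl(\mathcal{B}^{\pi_f}\delta\bigr)(s,a),
\]
where $\mathcal{B}^{\pi_f}$ is the affine, monotone Bellman evaluation operator for $\pi_f$ with reward $r_D$, whose unique fixed point is $D$.

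Finally I would close the argument by iteration: monotonicity of $\mathcal{B}^{\pi_f}$ upgrades $\delta\le\mathcal{B}^{\pi_f}\delta$ to $\delta\le(\mathcal{B}^{\pi_f})^n\delta$ for every $n\ge 1$, and since $\mathcal{B}^{\pi_f}$ is a $\gamma$-contraction in the sup-norm (the rewards $r_D$ are bounded because $f$, the range $X$ of $Q$, and $C$ are all bounded), $(\mathcal{B}^{\pi_f})^n\delta\to D$; taking $n\to\infty$ gives $\delta\le D$, i.e.\ $\widetilde{Q}(s,a)-\widetilde{Q}^{\pi_f}(s,a)\le D(s,a)$, as claimed. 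I expect the main obstacle to be the middle step's bookkeeping --- pairing each occurrence of $\widetilde{Q}$ with the correct side of the Lemma~\ref{thm:convex_cond_std} bound (upper bound on the maximized term, lower bound on the averaged term) so that the residual reward collapses to precisely $r_D$, together with the (harmless) observation that $\pi_f$ being deterministic lets $\max_b\{\cdot\}$ commute with $\E_{a'\sim\pi_f}$. The contraction/monotonicity wrap-up is then routine.
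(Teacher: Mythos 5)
Your proposal is correct and relies on the same essential steps as the paper's proof: apply the lower bound $\widetilde{Q}\ge f(Q)$ to the $\pi_f$-averaged term and the upper bound $\widetilde{Q}\le f(Q)+C$ to the maximized term $\max_{a'}\widetilde{Q}(s',a')$, which collapses the residual exactly to $r_D$ and yields the one-sided recursion for the gap. The only difference is packaging — you subtract the two Bellman fixed-point equations and close with monotonicity plus $\gamma$-contraction of the evaluation operator, whereas the paper runs an explicit induction on policy-evaluation backups with a co-evolving $D^{(N)}$ and takes $N\to\infty$; these are equivalent formalizations of the same argument.
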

Interestingly, the previous result shows that for functions $f$ admitting a tight double-sided bound (that is, a relatively small value of $C$), the associated zero-shot policy $\pi_f$ can be expected to perform near-optimally in the composite domain.

Another class of functions for which general bounds can be derived arises when $f$ satisfies the following ``reverse'' conditions.
\begin{lemma}[Concave Conditions]\label{thm:concave_cond_std}
Given a primitive task with discount factor $\gamma$ and a bounded, continuous transformation function $f~:~X~\to~\mathbb{R}$ which satisfies:  
\begin{enumerate}
    \item $f$ is concave on its domain $X$\textsuperscript{\ref{dynamics condition}};
    \item $f$ is superlinear: 
    \begin{enumerate}[label=(\roman*)]
        \item $f(x+y) \geq f(x) + f(y)$ for all $x,y \in X$ 
        \item $f(\gamma x) \geq \gamma f(x)$ for all $x \in X$
    \end{enumerate}
    \item $f\left( \max_{a} \mathcal{Q}(s,a) \right) \geq \max_{a}~f\left( \mathcal{Q}(s,a) \right)$ for all functions $\mathcal{Q}:~\s~\times~\A \to X.$
\end{enumerate}
   
then the optimal action-value functions are now related in the following way:
\begin{equation}\label{eqn:concave_std}
    f(Q(s,a)) - \hat{C}(s,a) \leq \widetilde{Q}(s,a) \leq f(Q(s,a))
\end{equation}

where $\hat{C}$ is the optimal value function for a task with reward 
\begin{equation}
    \hat{r}_C(s,a) = f(Q(s,a)) - f(r(s,a)) - \gamma \E_{s' \sim{} p} V_f(s')
\end{equation}
\end{lemma}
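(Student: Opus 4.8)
The plan is to mirror the proof of Lemma~\ref{thm:convex_cond_std}, reversing every inequality. Write $\widetilde r = f(r)$ and let $\mathcal{B}$ be the Bellman optimality operator of the transformed task, $(\mathcal{B}g)(s,a) = f(r(s,a)) + \gamma\,\E_{s'\sim p}\max_{a'}g(s',a')$. Since rewards are bounded and $\gamma\in(0,1)$, $\mathcal{B}$ is a monotone $\gamma$-contraction in the sup-norm with unique fixed point $\widetilde Q$; hence $\mathcal{B}g\le g$ implies $\widetilde Q\le g$, and $\mathcal{B}g\ge g$ implies $\widetilde Q\ge g$ (iterate $\mathcal{B}$, use monotonicity, pass to the limit). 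Boundedness of $Q$, of $f$, and of $\hat r_C$ makes $f(Q)$ and $\hat C$ bounded, so these sandwiching arguments are legitimate.

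\textbf{Upper bound $\widetilde Q \le f(Q)$.} I would establish $\mathcal{B}[f(Q)]\le f(Q)$. Starting from the primitive Bellman equation $Q(s,a) = r(s,a) + \gamma\,\E_{s'}V(s')$ with $V(s')=\max_{a'}Q(s',a')$: superadditivity (Condition~2(i)) gives $f(Q(s,a)) \ge f(r(s,a)) + f(\gamma\,\E_{s'}V(s'))$; Condition~2(ii) gives $f(\gamma\,\E_{s'}V(s'))\ge\gamma f(\E_{s'}V(s'))$; Jensen's inequality for the concave $f$ gives $f(\E_{s'}V(s'))\ge\E_{s'}f(V(s'))$ (this is the only use of concavity, and it is vacuous when dynamics are deterministic, cf.\ footnote~\ref{dynamics condition}); and Condition~3 gives $f(V(s')) = f(\max_{a'}Q(s',a')) \ge \max_{a'}f(Q(s',a')) = V_f(s')$. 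Chaining these yields $f(Q(s,a)) \ge f(r(s,a)) + \gamma\,\E_{s'}\max_{a'}f(Q(s',a')) = \mathcal{B}[f(Q)](s,a)$, so $\widetilde Q\le f(Q)$.

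\textbf{Lower bound $\widetilde Q \ge f(Q) - \hat C$.} The same chain shows $f(Q(s,a)) - f(r(s,a)) - \gamma\,\E_{s'}V_f(s')\ge 0$, i.e.\ $\hat r_C\ge 0$, so $\hat C\ge 0$ and $f(Q)-\hat C$ is a genuine lower estimate. Set $g = f(Q) - \hat C$. Substituting the recursion $\hat C(s,a) = \hat r_C(s,a) + \gamma\,\E_{s'}\max_{a'}\hat C(s',a')$ together with the definition of $\hat r_C$ gives $g(s,a) = f(r(s,a)) + \gamma\,\E_{s'}V_f(s') - \gamma\,\E_{s'}\max_{a'}\hat C(s',a')$. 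Comparing with $\mathcal{B}g(s,a) = f(r(s,a)) + \gamma\,\E_{s'}\max_{a'}g(s',a')$, the desired inequality $\mathcal{B}g\ge g$ reduces to the state-wise claim $\max_{a'}\big(f(Q(s',a')) - \hat C(s',a')\big) \ge V_f(s') - \max_{a'}\hat C(s',a')$, which holds by evaluating the left-hand side at an action attaining $V_f(s')=\max_{a'}f(Q(s',a'))$ and using $\hat C(s',a')\le\max_{a'}\hat C(s',a')$. Hence $\widetilde Q\ge g = f(Q) - \hat C$, which together with the upper bound gives \eqref{eqn:concave_std}.

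\textbf{Main obstacle.} The delicate step is the lower bound's ``max of a difference'': one must verify that the elementary inequality $\max(A-B)\ge\max A-\max B$ is the one that applies (its reverse is false) and that it suffices here. This is entangled with checking that $\hat C$ is nonnegative and bounded, which in turn forces us to use Conditions~2 and 3 in their reversed (superlinear, ``$\ge$'') form and to apply Jensen for concave rather than convex $f$; getting any of these directions wrong collapses the argument.
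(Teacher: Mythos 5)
Your proof is correct and is essentially the paper's argument: you chain the same ingredients (superadditivity, the $\gamma$-scaling condition, Jensen for concave $f$, Condition~3, and the $\max(A-B)\ge\max A-\max B$ step for the $\hat{C}$ bound, including the check that $\hat{r}_C\ge 0$ so $\hat{C}\ge 0$). The only difference is presentational — you package the argument as a sub-/super-solution comparison for the monotone contraction $\mathcal{B}$, whereas the paper runs the equivalent explicit induction on Bellman backup iterates $\widetilde{Q}^{(N)}$, $\hat{C}^{(N)}$ and then passes to the limit.
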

One obvious way to satisfy the final condition in the preceding lemma is to consider functions $f(x)$ which are monotonically increasing. 
Note that the definitions of $C$ and $\hat{C}$ guarantee them to be positive, as is required for the bounds to be meaningful (this statement is shown explicitly in the Supplementary Material).
Furthermore, by again considering the derived policy $\pi_f(a|s)$, we next provide a similar result for concave conditions, noting the difference in definitions between $D$ and $\hat{D}$.
\begin{lemma}
Consider the value of the policy $\pi_f(s) = \max_{a} f(Q(s,a))$ on the transformed task of interest, denoted by $\widetilde{Q}^{\pi_f}(s,a)$.
 The sub-optimality of $\pi_f$ is then upper bounded by:
\begin{equation}
    \widetilde{Q}(s,a) - \widetilde{Q}^{\pi_f}(s,a) \leq \hat{D}(s,a)
\end{equation}
where $\hat{D}$ is the value of the policy $\pi_f$ in a task with reward
\begin{equation*}
     \hat{r}_D = \gamma \underset{s'\sim{} p\ }{\mathbb{E}} \underset{a' \sim{} \pi_f}{\mathbb{E}} \biggr[ V_f(s') - f(Q(s',a')) + \hat{C}(s',a') \biggr]
\end{equation*}
\end{lemma}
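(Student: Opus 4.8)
The plan is to run the same policy-evaluation comparison that underlies the convex-conditions regret lemma above, now fed with the \emph{concave} bounds of Lemma~\ref{thm:concave_cond_std} rather than the convex ones. Write $g(s,a) := \widetilde{Q}(s,a) - \widetilde{Q}^{\pi_f}(s,a)$, which is nonnegative since $\widetilde{Q}$ is optimal for the transformed task. Both $\widetilde{Q}$ and $\widetilde{Q}^{\pi_f}$ satisfy Bellman-type equations with the \emph{same} reward $f(r(s,a))$ (the optimality equation for $\widetilde{Q}$, the policy-evaluation equation for $\widetilde{Q}^{\pi_f}$ under $\pi_f$); subtracting them cancels the reward term and leaves
\begin{equation*}
g(s,a) = \gamma \E_{s' \sim p}\!\left[ \max_{a'} \widetilde{Q}(s',a') - \E_{a' \sim \pi_f} \widetilde{Q}^{\pi_f}(s',a') \right].
\end{equation*}
The first step is to overestimate the $\max$ term using the upper bound of Lemma~\ref{thm:concave_cond_std}, $\widetilde{Q}(s',a') \leq f(Q(s',a'))$, which gives $\max_{a'}\widetilde{Q}(s',a') \leq \max_{a'} f(Q(s',a')) = V_f(s')$.

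Next I would telescope, inside the $a'$-expectation and using that $V_f(s')$ does not depend on $a'$,
\begin{equation*}
V_f(s') - \widetilde{Q}^{\pi_f}(s',a') = \big[V_f(s') - f(Q(s',a'))\big] + \big[f(Q(s',a')) - \widetilde{Q}(s',a')\big] + g(s',a'),
\end{equation*}
and bound the middle bracket by $\hat{C}(s',a')$ via the lower bound $f(Q) - \hat{C} \leq \widetilde{Q}$ of the same lemma. The $\gamma\,\E_{s'\sim p}\,\E_{a'\sim\pi_f}$-expectation of the first two brackets is exactly $\hat{r}_D(s,a)$, so one obtains the self-referential inequality $g(s,a) \leq \hat{r}_D(s,a) + \gamma\, \E_{s' \sim p}\,\E_{a' \sim \pi_f}\, g(s',a')$. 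To conclude, let $\mathcal{B}$ denote the Bellman policy-evaluation operator for $\pi_f$ with reward $\hat{r}_D$; it is monotone and a $\gamma$-contraction in the sup-norm with unique fixed point $\hat{D}$. The inequality reads $g \leq \mathcal{B} g$, hence $g \leq \mathcal{B} g \leq \mathcal{B}^2 g \leq \cdots \to \hat{D}$, so $g \leq \hat{D}$, as claimed. (As a check, $V_f(s') - f(Q(s',a')) \geq 0$ and $\hat{C} \geq 0$, so $\hat{r}_D \geq 0$ and $\hat{D} \geq 0$, consistent with $g \geq 0$.)

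The step requiring the most care is keeping the inequality directions straight: compared with the convex regret lemma, the two bounds of Lemma~\ref{thm:concave_cond_std} swap roles. Here $\widetilde{Q} \leq f(Q)$ is what controls the $\max$ term --- producing the $V_f(s')$ appearing in $\hat{r}_D$ in place of the $\max_b\{f(Q)+C\}$ term that shows up for $\pi_f$'s regret under convex conditions --- while $f(Q) - \hat{C} \leq \widetilde{Q}$ is what reintroduces $\widetilde{Q}$ and closes the recursion. Everything past that is the routine monotone-contraction fixed-point comparison, identical in form to the convex case.
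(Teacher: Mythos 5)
Your proof is correct and takes essentially the same route as the paper's: it rests on exactly the two bounds of Lemma \ref{thm:concave_cond_std} — $\widetilde{Q}\le f(Q)$ (hence $\max_{a'}\widetilde{Q}\le V_f$) to control the maximization term, and $f(Q)-\hat{C}\le\widetilde{Q}$ to close the recursion — fed through the policy-evaluation recursion for $\pi_f$. The only difference is presentational: the paper runs an explicit induction on the backup iterates $\widetilde{Q}^{\pi_f(N)}$ and $\hat{D}^{(N)}$ and passes to the limit, whereas you subtract the optimality and evaluation Bellman equations to cancel $f(r)$ and conclude from $g\le\mathcal{B}g$ via the monotone $\gamma$-contraction comparison, which is the same argument packaged as a fixed-point inequality.
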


\renewcommand{\arraystretch}{1.5} % Default value: 1
\begin{table}[ht]
    \centering
    \begin{tabular}{ll}
        \toprule
        \multicolumn{2}{c}{Standard RL Results}                   \\
        \cmidrule(r){1-2}
        Transformation     &  Result  \\
        \midrule
            Linear Map: & $\widetilde{Q}(s,a) = k Q(s,a)$  \\
            Convex conditions: & $\widetilde{Q}(s,a) \geq f(Q(s,a))$ \\
            Concave conditions: & $\widetilde{Q}(s,a) \leq f(Q(s,a))$  \\
            OR Composition: & $\widetilde{Q}(s,a) \geq \max_k \{Q^{(k)}(s,a)\}$  \\
            AND Composition: & $\widetilde{Q}(s,a) \leq \min_{k} \{ Q^{(k)}(s,a)\}$   \\
            NOT Gate:
            & $ \widetilde{Q}(s,a) \geq   
            - Q(s,a)$  \\
            Conical combination: & $\widetilde{Q}(s,a) \leq \sum_k \alpha_k Q^{(k)}(s,a)$  \\

        \bottomrule
    \end{tabular}
    % \vspace{0.3em}
    
        \caption{\textbf{Standard Transfer Library.} Lemmas \ref{thm:convex_cond_std}, \ref{thm:concave_cond_std} stated in Section \ref{sec:std_lemmas} lead to a broad class of applicable transfer functions in standard RL. In this table we list several common examples which are demonstrated throughout the paper and in the Supplementary Materials. We show only one side of the bounds from Eq. \eqref{eqn:convex_std}, \eqref{eqn:concave_std} which requires no additional training.}\label{tab:std_rl}
    
\end{table}

We remark that the conditions imposed on the function $f$ are not very restrictive. For example, the Boolean functions and linear combinations considered in previous work are all included in our framework, while we also include novel transformations not considered in previous work (see Table~\ref{tab:std_rl}). Furthermore, the conditions for $f$ can be further relaxed if specific conditions are met. For the case of deterministic dynamics, the first condition is not required ($f$ need not be convex nor concave). 

We have shown that in the standard RL case, quite general conditions (convexity and sublinearity) lead to a wide class of applicable functions defining the Transfer Library. The conditions given in Lemmas \ref{thm:convex_cond_std} and \ref{thm:concave_cond_std} are straightforward to check for general functions. When given a primitive task defined by a reward function $r$, one can therefore bound the optimal $Q$ function for a general transformation of the rewards, $f(r)$, when $f$ obeys the conditions above. This new set of transformed tasks defines the Transfer Library from a given set of primitive tasks.

The previous (and following) results are presented for the case in which the primitive task $Q$-values are known \textit{exactly}. In practice, however, this is not typically the case, even in tabular settings. In continuous environments where the use of function approximators is necessary, the error that is present in learned $Q$-values is further increased. To address this issue, we provide an extension of all double-sided bounds for the case where an $\varepsilon$-optimal estimate of the primitive task's $Q$-values is known, such that $|Q(s,a)~-~\bar{Q}(s,a)|~\leq~\varepsilon$ for all $s,a$. To derive such an extension, we further require that the composition function $f$ is $L$-Lipschitz continuous (essentially a bounded first derivative), i.e. $|f(x_1)-f(x_2)|\leq L|x_1-x_2|$ for all $x_1,x_2 \in X$, the domain of $f$ (in the present case, the $x_i$ are the primitive task's $Q$-values). To maintain the focus of the main text, we provide these results and the corresponding proofs in the Supplementary Material. We note that all functions listed in Table \ref{tab:std_rl} and \ref{tab:entropy-regularized} are indeed $1$-Lipschitz continuous.

\subsection{Generalization to Composition of Primitive Tasks}\label{sec:comp_std}

The previous lemmas can be extended to the case of multivariable transformations (see Supplementary Material for details), where $X \to \bigotimes X^{(k)}$ (the Cartesian product of primitive codomains). That is, with a function $F: \bigotimes X^{(k)} \to \mathbb{R}$ and a collection of $M$ subtasks, $\{r^{(k)}(s,a)\}_{k=1}^{M}$, one can synthesize a new, \textbf{composition of subtasks}, with reward defined by $r^{(c)}(s,a) = F(r^{(1)}(s,a), \dotsc, r^{(M)}(s,a))$. 

In this vectorized format, $F$ must obey the above conditions in each argument:
\begin{itemize}
    \item $F$ is convex (concave) in each argument,
    \item $F$ is sublinear (superlinear) in each argument.
\end{itemize}
For the final conditions, we also require a similar vectorized inequality, which we spell out in detail in the Supplementary Material.

As an example of composition in standard RL, we consider the possible sums of reward functions, with each task having a positive weight associated to it.

In such a setup, the agent has learned to solve a set of primitive tasks, then it must solve a task with a new compositely-defined reward function, say $f\left(r^{(1)}, \dotsc, r^{(M)}\right)~\doteq~\sum_{k=1}^{M} \alpha_k r^{(k)}$ for (possibly many) target tasks defined by the weights $\{\alpha_k\}$.
To determine which bound is satisfied for such a composition function, we look to the vectorized conditions above. This function is linear in all arguments, so we must only check the final condition. Since the inequality 
 \begin{equation}
        \sum_k \alpha_k  \max_a  Q^{(k)}(s,a) \geq \max_a \sum_k \alpha_k Q^{(k)}(s,a)
  \end{equation}
holds for any set of $\alpha_k > 0$, this function conforms to the concave vectorized conditions, implying that $\widetilde{Q}(s,a)~\leq~f(Q^{(k)}(s,a))=\sum_k~\alpha_k Q^{(k)}(s,a)$. We can then use the right-hand side of this bound to calculate the associated state-value function ($V_f(s) = \max_a f(Q^{(k)}(s,a))$) and the associated greedy policy ($\pi_f(s) = \text{argmax}_a f(Q^{(k)}(s,a))$). This result agrees with an independent result by \cite{nemecek} (the upper bound in Theorem 1 therein) without accounting for approximation errors.

\section{entropy-regularized RL}\label{sec:entropy-regularized_lemmas}
\subsection{Transformation of Primitive Task}
We will now extend the results obtained in the previous section to the case of entropy-regularized RL. 
Again we first consider the single-reward transformation $f(r)$ for some function $f$. Here we state the conditions that must be met by functions $f$, which define the Transfer Library for entropy-regularized RL.

We now use the following definitions in the subsequent (entropy-regularized RL) results.
In the following results, we set $\beta=1$ for brevity, and the expectation in the final condition is understood to be over actions, sampled from the prior policy. Full details can be found in the proofs provided in the Supplementary Material.
\begin{lemma}[Convex Conditions]
\label{thm:forward_cond_entropy-regularized}
Given a primitive task with discount factor $\gamma$ and a bounded, continuous transformation function $f~:~X~\to~\mathbb{R}$ which satisfies:  
\begin{enumerate}
    \item $f$ is convex on its domain $X$\textsuperscript{\ref{dynamics condition}};
    \item $f$ is sublinear:
       \begin{enumerate}[label=(\roman*)]
        \item $f(x+y) \leq f(x) + f(y)$ for all $x,y \in X$
        \item $f(\gamma x) \leq \gamma f(x)$ for all $x \in X$
    \end{enumerate}
    \item $f\left( \log \E \exp \mathcal{Q}(s,a) \right) \leq \log \E \exp f\left( \mathcal{Q}(s,a) \right)$ for all functions $\mathcal{Q}:~\s~\times~\A \to X.$
\end{enumerate} 

then the optimal action-value function for the transformed rewards, $\widetilde{Q}$, is now related to the optimal action-value function with respect to the original rewards by:
\begin{equation}\label{eq:convex_entropy-regularized}
   f \left( Q(s,a) \right) \leq \widetilde{Q}(s,a) \leq f \left( Q(s,a) \right) + C(s,a)
\end{equation}
\end{lemma}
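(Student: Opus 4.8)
The plan is to sandwich the optimal transformed action-value function $\widetilde{Q}$ between $f(Q)$ and $f(Q)+C$ by exhibiting the former as a sub-solution and the latter as a super-solution of the soft Bellman optimality equation of the transformed task. Write $\widetilde{\mathcal{B}}$ for the soft Bellman optimality operator of that task, so that $\widetilde{\mathcal{B}}[g](s,a) = f(r(s,a)) + \gamma\,\E_{s'\sim p}\log\E_{a'\sim\pi_0}e^{g(s',a')}$ and $\widetilde{Q}$ is its unique fixed point. Since $g\mapsto\log\E_{a'\sim\pi_0}e^{g}$ is monotone and $1$-Lipschitz in the sup-norm, $\widetilde{\mathcal{B}}$ is monotone and a $\gamma$-contraction; hence $g\le\widetilde{\mathcal{B}}[g]$ implies $g\le\widetilde{\mathcal{B}}^{\,n}[g]\uparrow\widetilde{Q}$, so $g\le\widetilde{Q}$, and dually for super-solutions. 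Everything then reduces to checking two one-step inequalities.

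For the lower bound I would start from the primitive soft Bellman equation $Q(s,a)=r(s,a)+\gamma\,\E_{s'}V(s')$ with $V(s')=\log\E_{a'}e^{Q(s',a')}$, apply $f$ to both sides, and chain the hypotheses in order: sublinearity~(i) to split off $f(r(s,a))$; sublinearity~(ii) to extract the discount, $f(\gamma\,\E_{s'}V(s'))\le\gamma f(\E_{s'}V(s'))$; convexity (condition~1) and Jensen's inequality to push $f$ inside the transition expectation, $f(\E_{s'}V(s'))\le\E_{s'}f(V(s'))$; and finally condition~3 to obtain $f(V(s'))=f(\log\E_{a'}e^{Q(s',a')})\le\log\E_{a'}e^{f(Q(s',a'))}$. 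The concatenation is precisely $f(Q)\le\widetilde{\mathcal{B}}[f(Q)]$, which gives the left inequality, and it simultaneously shows that the Bellman residual $r_C(s,a):=\widetilde{\mathcal{B}}[f(Q)](s,a)-f(Q(s,a))\ge 0$, which is what makes $C$ (its optimal value) non-negative and the upper bound nontrivial. When the dynamics are deterministic the transition expectation is a point mass, so the Jensen step is vacuous and condition~1 is not needed --- this is the footnote.

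For the upper bound I would verify $\widetilde{\mathcal{B}}[f(Q)+C]\le f(Q)+C$ by unfolding the recursion defining $C$ and cancelling terms, exactly as in the proof of Lemma~\ref{thm:convex_cond_std}, but with one essential change: the action-wise term $\log\E_{a'}e^{f(Q(s',a'))+C(s',a')}$ cannot be split as a sum, because $\log\E e^{(\cdot)}$ is not subadditive. Instead I would ``peel off'' one summand using the valid bound $\log\E_{a'}e^{g+h}\le\log\E_{a'}e^{g}+\max_{a'}h$ (from $e^{g+h}\le e^{g}e^{\max_{a'}h}$), which collapses the expression to a soft term in $f(Q)$ plus a hard-max term in $C$ and then telescopes against the recursion for $C$; getting this peeling step to line up exactly with how $r_C$ and $C$ are defined (which of the two pieces retains the soft operator) is the main obstacle, while everything else is bookkeeping of the kind already carried out in the standard-RL case. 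Combining the two one-step inequalities with the contraction argument yields $f(Q)\le\widetilde{Q}\le f(Q)+C$, and the same sub-/super-solution scheme, with the vectorized hypotheses replacing conditions~1--3, carries the result over to the composition of $M$ tasks treated in the Supplementary Material.
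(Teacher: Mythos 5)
Your proposal is correct and follows essentially the same route as the paper: the one-step inequality chains are identical (sublinearity to split off $f(r)$ and the discount, Jensen/convexity for the transition expectation, condition~3 for the log-sum-exp, and the peeling bound $\log\E_{a'}e^{g+h}\le\log\E_{a'}e^{g}+\max_{a'}h$ against the recursion $C(s,a)=r_C(s,a)+\gamma\,\E_{s'}\max_{a'}C(s',a')$), and your observation that the lower-bound computation also yields $r_C\ge 0$, hence $C\ge 0$, matches the paper. The only difference is packaging: the paper runs an explicit induction on backup iterates and passes to the limit, whereas you verify sub-/super-solution inequalities at the fixed points and invoke monotonicity plus the $\gamma$-contraction of the soft Bellman operator, which is the same machinery in compressed form.
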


\begin{table}[ht] %Results for Maximum Entropy
    \centering
    \begin{tabular}{ll}
        \toprule
        \multicolumn{2}{c}{Entropy-Regularized RL Results}  \\
        \cmidrule(r){1-2}
        Transformation& Result\\
        \midrule
            Linear Map, $k \in (0,1)$\tablefootnote{Note that linear reward scaling can also be viewed as a linear scaling in the temperature parameter.}:& $\widetilde{Q}(s,a) \geq k Q(s,a) $ \\
            Linear Map, $k > 1$:& $\widetilde{Q}(s,a) \leq k Q(s,a)$ \\
            Convex conditions:  & $\widetilde{Q} \geq f(Q(s,a))$ \\
            Concave conditions: & $\widetilde{Q} \leq f(Q(s,a))$ \\
            OR Composition: & $\widetilde{Q}(s,a) \geq \max_k \{Q^{(k)}(s,a)\}$       \\
            AND Composition:    & $ \widetilde{Q}(s,a) \leq \min_k \{ Q^{(k)}(s,a)\}$ \\
            NOT Gate: & $\widetilde{Q}(s,a) \geq 
            -Q(s,a)$          \\
            Convex combination\tablefootnote{This extends to the case $\sum_k \alpha_k \geq 1$ by composing with a linear scaling, which respects the same inequality.}:    & $\widetilde{Q}(s,a) \leq \sum_k \alpha_k Q^{(k)}(s,a)$ \\
        \bottomrule
    \end{tabular}
    % \vspace{0.2em}
    \caption{\textbf{Entropy-Regularized Transfer Library.} Lemmas \ref{thm:forward_cond_entropy-regularized}, \ref{thm:reverse_cond_entropy-regularized} lead to a broad class of applicable transfer functions in entropy-regularized RL. In this table we list several common examples which are demonstrated throughout the paper and in the Supplementary Materials. We show only one side of the bounds from Eq. \eqref{eq:convex_entropy-regularized}, \eqref{eq:concave_entropy-regularized} which requires no additional training.}\label{tab:entropy-regularized}
\end{table}

We note that $C$ has the same definition as before, but with $V_f$ replaced by its entropy-regularized analog: $V_f(s)~\doteq~\log \E_{a \sim{} \pi_0} \exp f\left(Q(s,a)\right)$.

\begin{lemma}
Consider the soft value of the policy $\pi_f(a|s)~=~\pi_0(a|s)\exp\left( f(Q(s,a)) - V_f(s) \right)$ on the transformed task of interest, denoted by $\widetilde{Q}^{\pi_f}(s,a)$.
 The sub-optimality of $\pi_f$ is then upper bounded by:
\begin{equation}
    \widetilde{Q}(s,a) - \widetilde{Q}^{\pi_f}(s,a) \leq D(s,a)
\end{equation}
where $D$ is the soft value of the policy $\pi_f$
 with reward
\begin{equation*}
    r_D(s,a) = \gamma \E_{s'} 
\left[ \max_{b} \left\{ f\left(Q(s',b)\right) + C(s',b) \right\} -V_f(s') \right].
\end{equation*}
\end{lemma}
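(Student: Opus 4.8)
The plan is to mirror the regret-bound argument from the standard-RL case (Lemma 4.2), now in the entropy-regularized setting, by relating the soft value of $\pi_f$ on the transformed task to the soft optimal value $\widetilde{Q}$ through a chain of Bellman-type inequalities and then identifying the gap as the soft value of an auxiliary task with reward $r_D$. First I would write the soft Bellman equation satisfied by $\widetilde{Q}^{\pi_f}$, namely $\widetilde{Q}^{\pi_f}(s,a) = \widetilde r(s,a) + \gamma \E_{s'}\E_{a'\sim \pi_f}\bigl[\widetilde{Q}^{\pi_f}(s',a') - \log(\pi_f(a'|s')/\pi_0(a'|s'))\bigr]$, and substitute the explicit form of $\pi_f$, which gives $\log(\pi_f(a'|s')/\pi_0(a'|s')) = f(Q(s',a')) - V_f(s')$. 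Likewise, $\widetilde{Q}$ satisfies its own soft Bellman optimality equation whose one-step backup is $\log\E_{a'\sim\pi_0}\exp\widetilde{Q}(s',a')$. The quantity of interest is $\Delta(s,a) \doteq \widetilde{Q}(s,a) - \widetilde{Q}^{\pi_f}(s,a)$, and subtracting the two fixed-point equations (the reward terms $\widetilde r$ cancel) yields a recursion for $\Delta$ driven by the difference of the two backup operators.

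The key step is to upper-bound that driving term by $r_D$. Using the upper bound from Lemma 5.1, $\widetilde{Q}(s',b) \le f(Q(s',b)) + C(s',b)$, and the elementary inequality $\log\E_{a'\sim\pi_0}\exp g(a') \le \max_{b} g(b)$, I would bound the soft-max backup of $\widetilde{Q}$ at $s'$ by $\max_b\{f(Q(s',b)) + C(s',b)\}$. On the other side, the $\pi_f$-backup of $\widetilde{Q}^{\pi_f}$ contributes $\E_{a'\sim\pi_f}[\widetilde{Q}^{\pi_f}(s',a') - f(Q(s',a')) + V_f(s')]$. Collecting terms, $\Delta(s,a) \le \gamma\E_{s'}\bigl[\max_b\{f(Q(s',b)) + C(s',b)\} - V_f(s')\bigr] + \gamma\E_{s'}\E_{a'\sim\pi_f}\Delta(s',a')$, where I have also used $V_f(s') = \log\E_{a'\sim\pi_0}\exp f(Q(s',a'))$ together with the identity $\E_{a'\sim\pi_f}[V_f(s') - f(Q(s',a'))] = \E_{a'\sim\pi_f}[-\log(\pi_f/\pi_0)]$ to reorganize the entropic terms so the $V_f$ cancellation is clean. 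The bracketed $s'$-only term is exactly $r_D(s,a)$ (note it has no $a$ dependence, matching the stated definition), so $\Delta$ satisfies $\Delta(s,a) \le r_D(s,a) + \gamma\E_{s'}\E_{a'\sim\pi_f}\Delta(s',a')$.

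Finally, since $D$ is by definition the soft value of $\pi_f$ under reward $r_D$, it is the fixed point of the affine operator $\mathcal{U}X(s,a) = r_D(s,a) + \gamma\E_{s'}\E_{a'\sim\pi_f}X(s',a')$, which is a $\gamma$-contraction and monotone; the displayed inequality says $\Delta \le \mathcal{U}\Delta$, so by monotone iteration $\Delta \le \mathcal{U}^n\Delta \to D$, giving $\widetilde{Q}(s,a) - \widetilde{Q}^{\pi_f}(s,a) \le D(s,a)$. The main obstacle I anticipate is the bookkeeping of the entropic terms in the second step: one must be careful that the $-\log(\pi_f/\pi_0)$ reward correction for the policy evaluation of $\pi_f$, the $V_f$ normalizer, and the $\log\E\exp$ backup of $\widetilde{Q}$ combine so that the residual reward is precisely the stated $r_D$ and contains no leftover $a'$- or $a$-dependence; handling the $\beta\neq 1$ case would just reinstate the $1/\beta$ factors but is otherwise identical, and I would note that at the end.
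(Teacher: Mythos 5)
Your proposal is correct and follows essentially the same route as the paper's proof: the same substitution $\log(\pi_f/\pi_0)=f(Q)-V_f$, the same use of the double-sided bound of Lemma 5.1 (note that absorbing the leftover $f(Q(s',a'))-\widetilde{Q}^{\pi_f}(s',a')$ term into $\Delta(s',a')$ when ``collecting terms'' also requires the lower bound $\widetilde{Q}\ge f(Q)$, not only the upper bound you cite), and the same bound $\widetilde{V}(s')\le\max_b\{f(Q(s',b))+C(s',b)\}$ yielding the stated $r_D$. The only cosmetic difference is that you close the argument with a monotone $\gamma$-contraction fixed-point iteration for $\Delta$, whereas the paper runs an explicit induction on soft-policy-evaluation iterates $\widetilde{Q}^{\pi_f(N)}$ with $D^{(N)}$ and takes $N\to\infty$.
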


Conversely, for concave conditions we have
\begin{lemma}[Concave Conditions]
\label{thm:reverse_cond_entropy-regularized}
Given a primitive task with discount factor $\gamma$ and a bounded, continuous transformation function $f~:~X~\to~\mathbb{R}$ which satisfies:  
\begin{enumerate}
    \item $f$ is concave on its domain $X$\textsuperscript{\ref{dynamics condition}}; 
    \item $f$ is superlinear:
     \begin{enumerate}[label=(\roman*)]
        \item $f(x+y) \geq f(x) + f(y)$ for all $x,y \in X$ 
        \item $f(\gamma x) \geq \gamma f(x)$ for all $x \in X$ 
    \end{enumerate}
    \item $f\left( \log \E \exp \mathcal{Q}(s,a) \right) \geq \log \E \exp f\left( \mathcal{Q}(s,a) \right)$ for all functions $\mathcal{Q}:~\s~\times~\A \to X$.

\end{enumerate}
then the optimal action-value function for the transformed rewards obeys the following inequality:
\begin{equation}\label{eq:concave_entropy-regularized}
    f\left( Q(s,a) \right) - \hat{C}(s,a) \leq \widetilde{Q}(s,a) \leq f \left( Q(s,a) \right)
\end{equation}
\end{lemma}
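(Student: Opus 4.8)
The plan is to mirror the proof of Lemma~\ref{thm:forward_cond_entropy-regularized} with all inequalities reversed, phrased in terms of the soft Bellman optimality operator of the transformed task,
\[
  (\widetilde{\mathcal{T}}g)(s,a) \;=\; f(r(s,a)) + \gamma\,\E_{s'\sim p(\cdot\mid s,a)}\log\E_{a'\sim\pi_0(\cdot\mid s')} e^{g(s',a')}.
\]
Because $f$ is bounded, $\widetilde{\mathcal{T}}$ is a monotone $\gamma$-contraction in the sup-norm with unique fixed point $\widetilde{Q}$. It therefore suffices to exhibit a super-solution and a sub-solution of $\widetilde{\mathcal{T}}$ sandwiching $\widetilde{Q}$: iterating $\widetilde{\mathcal{T}}$ from a super-solution $h$ yields a decreasing sequence $h \ge \widetilde{\mathcal{T}}h \ge \widetilde{\mathcal{T}}^2 h \ge \cdots$ converging (Banach) to $\widetilde{Q}$, hence $\widetilde{Q}\le h$, and dually for a sub-solution.

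For the upper bound I would verify that $f(Q)$ is a super-solution, i.e.\ $\widetilde{\mathcal{T}}(f(Q)) \le f(Q)$. Writing $V(s)=\log\E_{a\sim\pi_0}e^{Q(s,a)}$, the soft Bellman equation for $Q$ reads $Q(s,a)=r(s,a)+\gamma\,\E_{s'}V(s')$, and $\widetilde{\mathcal{T}}(f(Q))(s,a)=f(r(s,a))+\gamma\,\E_{s'}V_f(s')$. Condition~3 (in its log-sum-exp form) gives $V_f(s')\le f(V(s'))$ pointwise; concavity of $f$ and Jensen give $\E_{s'}f(V(s'))\le f(\E_{s'}V(s'))$; superlinearity gives $\gamma f(\E_{s'}V(s'))\le f(\gamma\,\E_{s'}V(s'))$ and then $f(r(s,a))+f(\gamma\,\E_{s'}V(s'))\le f\!\big(r(s,a)+\gamma\,\E_{s'}V(s')\big)=f(Q(s,a))$. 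Chaining these four steps gives $\widetilde{\mathcal{T}}(f(Q))\le f(Q)$, hence $\widetilde{Q}\le f(Q)$. The Jensen step is the only place concavity enters, and it is vacuous for deterministic dynamics, which explains footnote~\ref{dynamics condition}. The same chain shows $\hat r_C = f(Q)-f(r)-\gamma\,\E_{s'}V_f \ge 0$, so $\hat C\ge 0$ and the lower bound is non-trivial.

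For the lower bound I would verify that $f(Q)-\hat C$ is a sub-solution, where $\hat C$ is read (as in Lemma~\ref{thm:concave_cond_std}) as the optimal value function of $\hat r_C$ under the ordinary ($\max$-backup) Bellman equation $\hat C(s,a)=\hat r_C(s,a)+\gamma\,\E_{s'}\max_{a'}\hat C(s',a')$. The crucial step is to extract $\hat C$ from the inner log-sum-exp using the crude bound $-\hat C(s',a')\ge -\max_{b}\hat C(s',b)$, which yields $\log\E_{a'}e^{f(Q(s',a'))-\hat C(s',a')}\ge V_f(s')-\max_{b}\hat C(s',b)$; substituting the Bellman equation for $\hat C$ and the definition of $\hat r_C$ into $f(Q(s,a))-\hat C(s,a)$ then shows the two sides agree term-for-term, so $\widetilde{\mathcal{T}}(f(Q)-\hat C)\ge f(Q)-\hat C$ and therefore $\widetilde{Q}\ge f(Q)-\hat C$. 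The $\varepsilon$-optimal and multivariable-composition versions follow by the same template together with $L$-Lipschitzness and the vectorized conditions, and I would relegate those to the Supplementary Material.

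I expect the lower bound to be the main obstacle. A direct attempt to commute $\hat C$ through the nested expectation and log-sum-exp runs into correlation/FKG-type quantities such as $\E[e^{u}e^{v}]$ versus $\E[e^{u}]\E[e^{v}]$, which have no definite sign in general; the argument works only because the coarse bound $\hat C\le\max_b\hat C$ is exactly absorbed by the $\max$-backup in the definition of $\hat C$ and by the tailored shape of $\hat r_C$. The remaining points --- confirming all operators are genuine sup-norm contractions with the claimed fixed points, and checking that every argument fed to $f$ stays in its domain $X$ --- are routine.
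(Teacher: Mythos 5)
Your proof is correct and follows essentially the same route as the paper's: the paper establishes both bounds by induction on the soft Bellman backup iterates, using exactly your chain (condition 3, Jensen via concavity, superlinearity) for the upper bound and exactly your extraction $-\hat{C}(s',a') \ge -\max_{b}\hat{C}(s',b)$ together with the $\max$-backup definition of $\hat{C}$ and the form of $\hat{r}_C$ for the lower bound. Your one-step super/sub-solution verification combined with monotonicity and the $\gamma$-contraction of the soft backup operator is just a repackaging of that induction, so there is no substantive difference.
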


As in the preceding section, we provide a similar result for the derived policy $\pi_{f}$, given the concave conditions provided. 

\begin{lemma}
Consider the soft value of the policy $\pi_f(a|s)$ on the transformed task of interest, denoted by $\widetilde{Q}^{\pi_f}(s,a)$.
 The sub-optimality of $\pi_f$ is then upper bounded by:
\begin{equation}
    \widetilde{Q}(s,a) - \widetilde{Q}^{\pi_f}(s,a) \leq \hat{D}(s,a)
\end{equation}
where $\hat{D}$ satisfies the following recursive equation
\begin{equation}
    \hat{D}(s,a) = \gamma \E_{s' \sim{} p}\E_{a' \sim{} \pi_f} \left( \hat{C}(s',a') + \hat{D}(s',a') \right).
\end{equation}
\label{lem:concave_regret_maxent}
\end{lemma}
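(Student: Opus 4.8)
\emph{Proof strategy.} The plan is to imitate the structure of the convex-conditions regret lemma (and its standard-RL analog): write the soft Bellman optimality recursion for the transformed task's optimal value $\widetilde{Q}$ and the soft policy-evaluation recursion for $\pi_f$, subtract them to obtain a recursion for the gap $\Delta \doteq \widetilde{Q} - \widetilde{Q}^{\pi_f}$, and then squeeze that recursion using the two-sided inequality of Lemma~\ref{thm:reverse_cond_entropy-regularized}. Concretely, with $\beta=1$ one has $\widetilde{Q}(s,a) = f(r(s,a)) + \gamma\E_{s'}\widetilde{V}(s')$ where $\widetilde{V}(s) = \log\E_{a\sim\pi_0}\exp\widetilde{Q}(s,a)$, while for $\pi_f$ I would expand the soft state value using the explicit log-ratio $\log\bigl(\pi_f(a|s)/\pi_0(a|s)\bigr) = f(Q(s,a)) - V_f(s)$ to get
\begin{equation*}
\widetilde{Q}^{\pi_f}(s,a) = f(r(s,a)) + \gamma\E_{s'}\Bigl[\E_{a'\sim\pi_f}\widetilde{Q}^{\pi_f}(s',a') - \E_{a'\sim\pi_f}f(Q(s',a')) + V_f(s')\Bigr].
\end{equation*}
Subtracting then gives $\Delta(s,a) = \gamma\E_{s'}\bigl[\widetilde{V}(s') - \E_{a'\sim\pi_f}\widetilde{Q}^{\pi_f}(s',a') + \E_{a'\sim\pi_f}f(Q(s',a')) - V_f(s')\bigr]$.

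The next step is to feed in the two halves of Lemma~\ref{thm:reverse_cond_entropy-regularized}. From the upper bound $\widetilde{Q}(s,a)\le f(Q(s,a))$, monotonicity of $\exp$, of the average over $\pi_0$, and of $\log$ yields $\widetilde{V}(s')\le V_f(s')$; substituting this for $\widetilde{V}(s')$ in the expression for $\Delta$ (legitimate since it multiplies the nonnegative factor $\gamma\E_{s'}$) cancels the two $V_f(s')$ terms and leaves $\Delta(s,a)\le \gamma\E_{s'}\E_{a'\sim\pi_f}\bigl[f(Q(s',a')) - \widetilde{Q}^{\pi_f}(s',a')\bigr]$. Then the lower bound $f(Q(s',a'))\le \widetilde{Q}(s',a') + \hat{C}(s',a')$ converts the right-hand side into $\gamma\E_{s'}\E_{a'\sim\pi_f}\bigl[\hat{C}(s',a') + \Delta(s',a')\bigr]$, so $\Delta$ is a pointwise sub-solution of exactly the recursion that defines $\hat{D}$.

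Finally I would close the argument with a standard comparison: the map $\mathcal{B}[g](s,a)\doteq \gamma\E_{s'}\E_{a'\sim\pi_f}[\hat{C}(s',a') + g(s',a')]$ is monotone and a $\gamma$-contraction on bounded functions ($\gamma\in(0,1)$, and $\hat{C}$ is bounded as the optimal value of a bounded-reward task), hence has the unique fixed point $\hat{D}$; from $\Delta\le\mathcal{B}[\Delta]$ and monotonicity, $\Delta\le\mathcal{B}^n[\Delta]\to\hat{D}$, giving the claimed bound. One can also note $\hat{C}\ge 0$, so $\hat{D}\ge 0$ and the bound is non-vacuous. I expect no deep obstacle here; the points requiring care are the bookkeeping of the direction of each one-sided bound, so that it is applied in the sense compatible with the nonnegative coefficient it multiplies, and confirming that $\widetilde{Q}^{\pi_f}$ is well defined — again via contraction of the soft policy-evaluation operator for the fixed policy $\pi_f$.
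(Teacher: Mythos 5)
Your proof is correct and rests on exactly the same ingredients as the paper's: the explicit log-ratio $\log\bigl(\pi_f(a|s)/\pi_0(a|s)\bigr) = f(Q(s,a)) - V_f(s)$, the bound $\widetilde{V}(s) \le V_f(s)$ coming from the upper half of Lemma~\ref{thm:reverse_cond_entropy-regularized}, and $f(Q) \le \widetilde{Q} + \hat{C}$ coming from its lower half. The only difference is the formal device: the paper proves $\widetilde{Q}^{\pi_f(N)}(s,a) \ge \widetilde{Q}(s,a) - \hat{D}^{(N)}(s,a)$ by induction over soft policy-evaluation iterates and takes $N \to \infty$, whereas you subtract the exact fixed-point equations to get $\Delta \le \mathcal{B}[\Delta]$ and close with a monotone $\gamma$-contraction comparison — an equivalent argument, with the minor merit of making explicit that $\widetilde{Q}^{\pi_f}$ and $\hat{D}$ are well defined as unique fixed points.
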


Now, by taking $V_f(s)$ as the previously defined \textit{soft} value function, the fixed points $C$ and $\hat{C}$ have the same definitions as presented in Lemma \ref{thm:convex_cond_std} and \ref{thm:concave_cond_std}, respectively with this new definition of $V_f$.

This final constraint (in Lemma 5.1 and 5.3) on $f$ arises out of the requirements for extending the previous results to entropy-regularized RL.
Although the final condition (similar to a log-convexity) appears somewhat cumbersome, we show that it is nevertheless possible to satisfy it for several non-trivial functions (Table~\ref{tab:entropy-regularized}). For instance, functions defining Boolean composition over subtasks ($\max(\cdot), \ \min(\cdot)$), which have not been considered in previous entropy-regularized results \citep{Haarnoja2018, vanNiekerk} as well as new functional transformations such as the NOT gate (Table~\ref{tab:entropy-regularized}). 

\begin{figure*}[ht]
\begin{minipage}[b]{0.4\textwidth}
  \includegraphics[width=\linewidth]{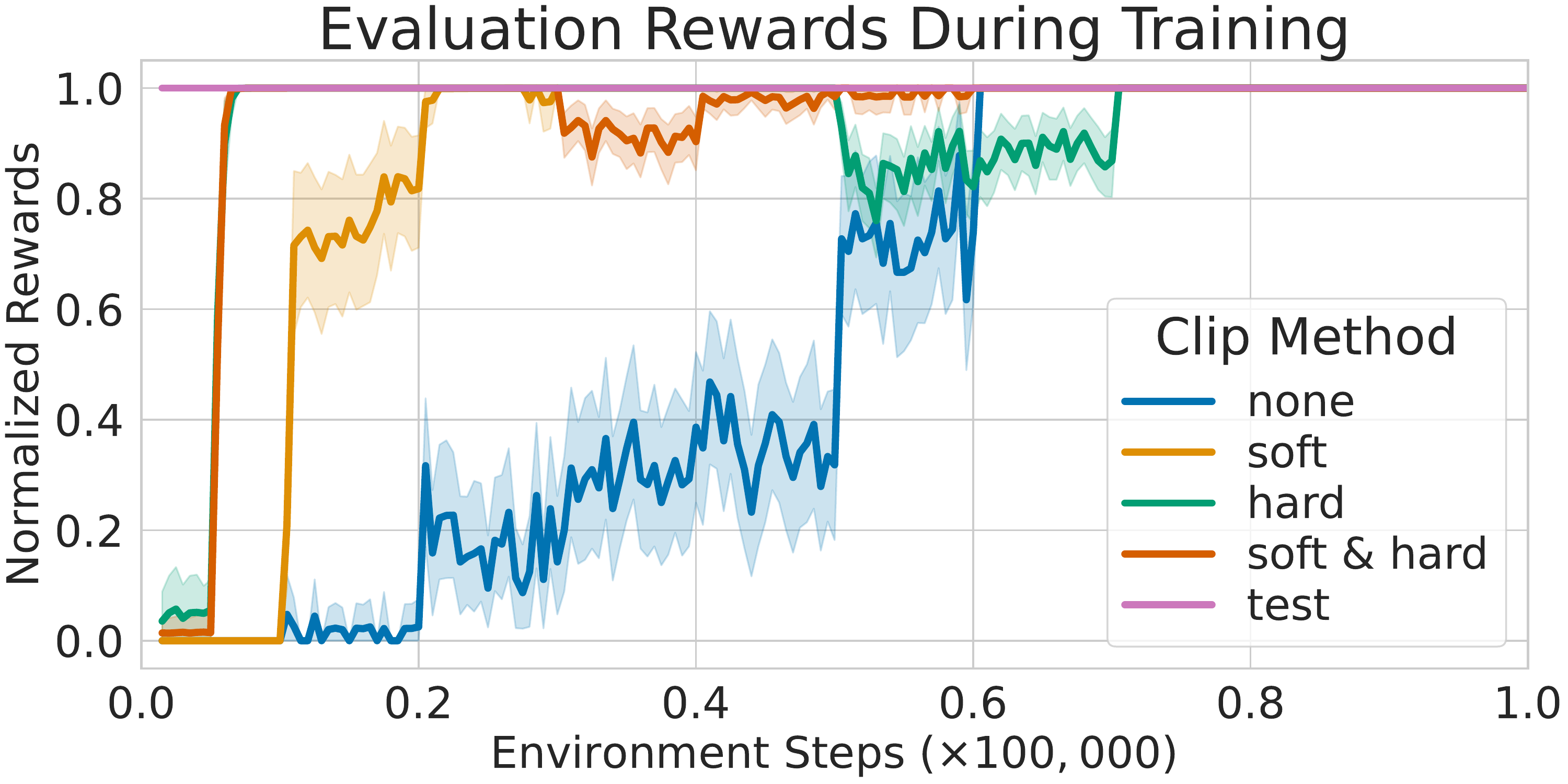}
%   \vspace{-2.em}
\end{minipage}
\hspace{2em}
\begin{minipage}[b]{0.175\textwidth}
  \includegraphics[width=\linewidth]{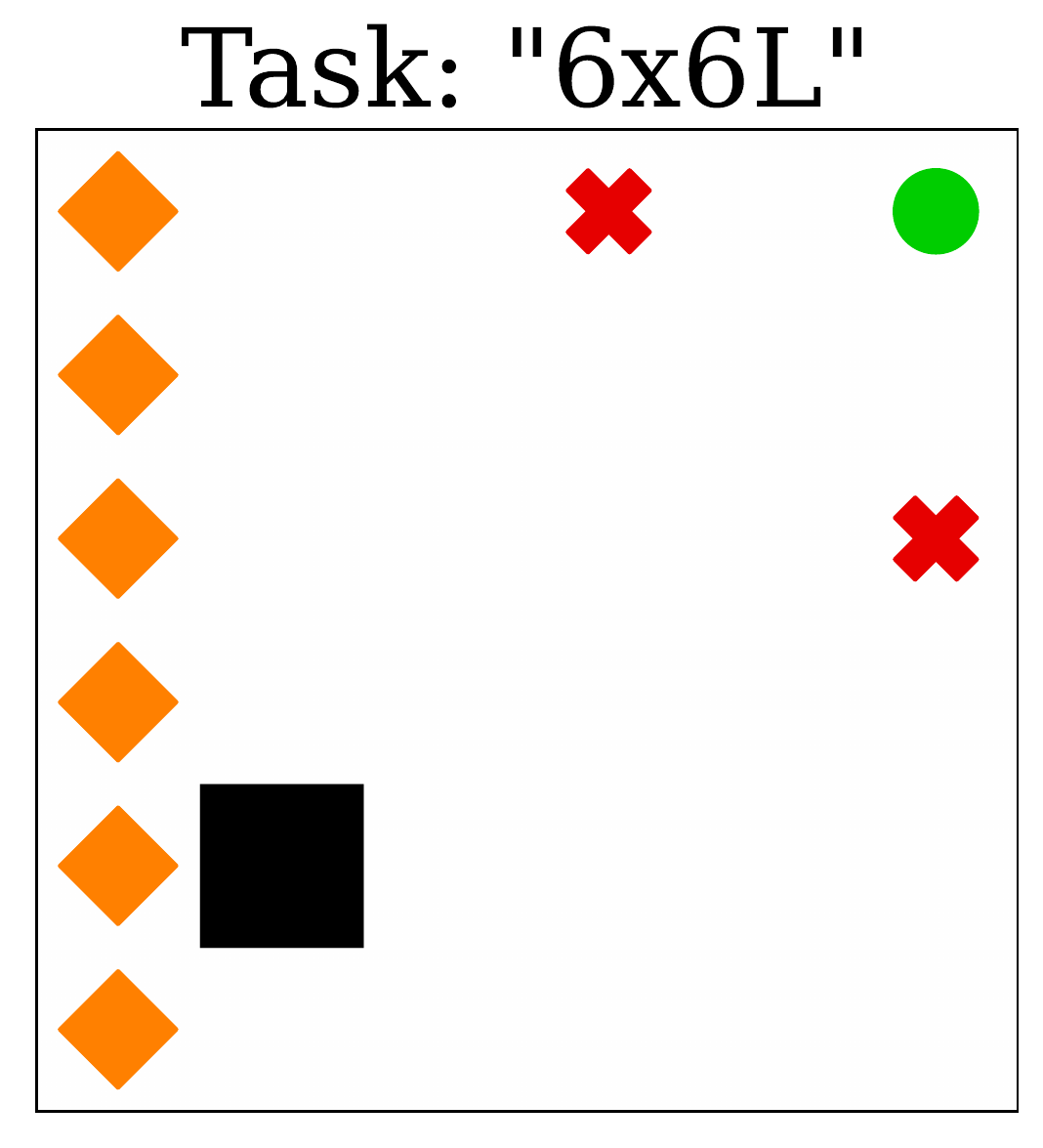}

\end{minipage}
\hspace{0.2em} % maximize the horizontal separation
\begin{minipage}[b]{0.175\textwidth}
  \includegraphics[width=\linewidth]{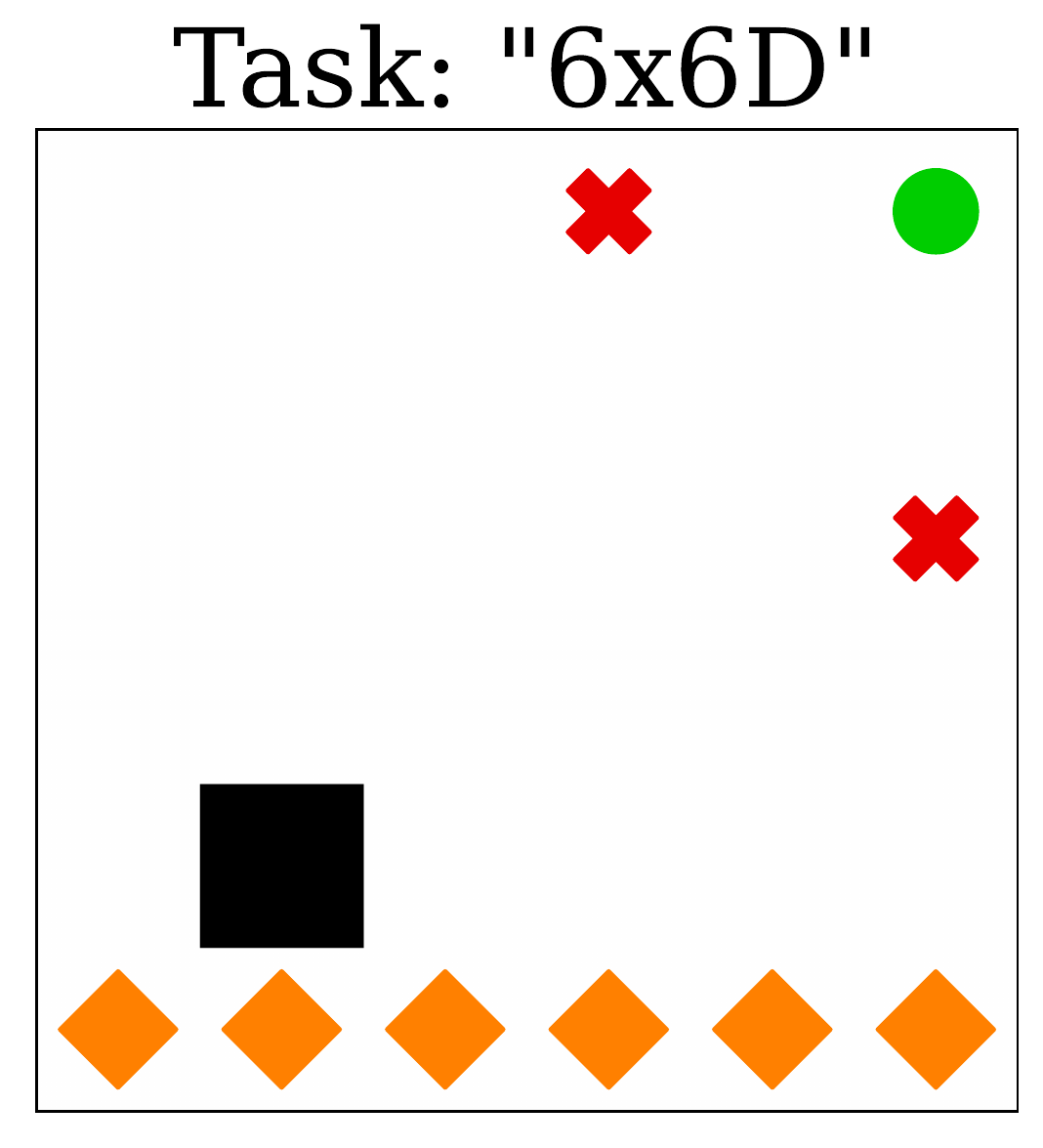}

\end{minipage}
\hspace{0.2em}
\begin{minipage}[b]{0.175\textwidth}
  \includegraphics[width=\linewidth]{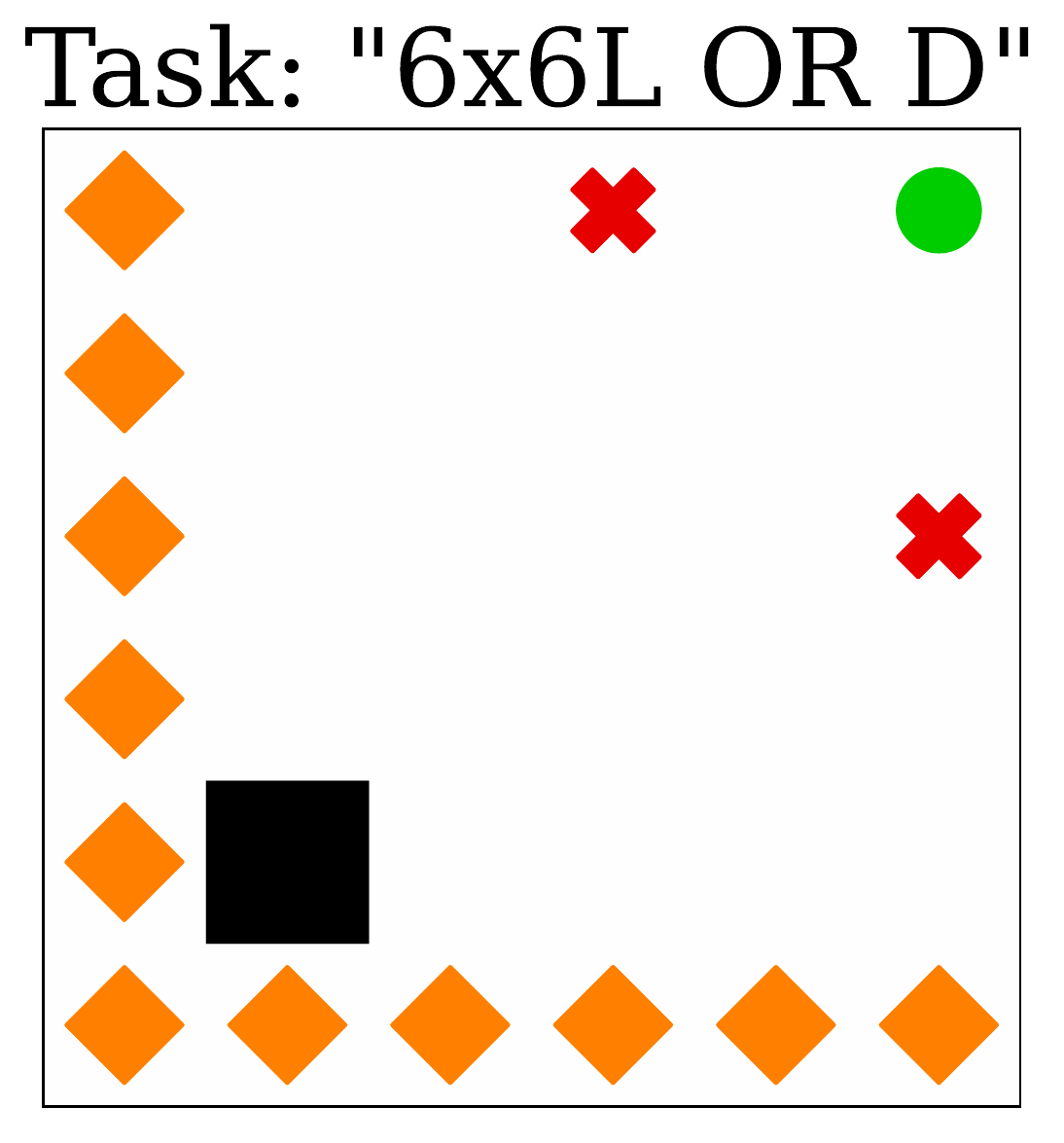}

\end{minipage}

\caption{In the first panel, we show learning curves for each of the clipping methods proposed, averaged over $50$ trials, with a 95\% confidence interval shown in the shaded region. In the next two panels, we depict the primitive tasks with rewarding states (orange diamonds) on the left side and bottom side of the maze, respectively. In the rightmost panel, we show the composite task of interest, with the multivariable ``OR'' composition function $\max_k\{...\}$ used to define its reward function. The agent first solves the two primitive tasks with a deep $Q$-network (DQN, as implemented by Stable-Baselines3 \citep{stable-baselines3}) with results shown in the first panel. Training hyperparameters are given in the Supplementary Material.}
\label{fig:tasks}
\end{figure*}
\subsection{Generalization to Composition of Primitive Tasks}

As we have done in the standard RL setting (Section \ref{sec:comp_std}), we can also extend the previous results to include compositionality: functions operating over multiple primitive tasks.

In this case, \cite{Haarnoja2018} have demonstrated a special case of Lemma \ref{thm:reverse_cond_entropy-regularized} for the composition function $f(\{r^{(k)}\}) = \sum_k \alpha_k r^{(k)}$ for convex weights $\alpha_k$. This can also be shown in our framework by proving the final condition of Lemma \ref{thm:reverse_cond_entropy-regularized} (since the others are automatic given that $f$ is linear). This vectorized condition can be proven via H\"older's inequality.

Besides this previously studied composition function, we can now readily derive value function bounds for other transformations and compositions, for example Boolean compositions as defined previously. The corresponding results for entropy-regularized RL are summarized in Table \ref{tab:entropy-regularized}.

\section{Experiments}

To test our theoretical results using function approximators (FAs), we consider a deterministic ``gridworld'' MDP amenable to task composition\footnote{Source code available at \url{https://github.com/JacobHA/Q-Bounding-in-Compositional-RL}}. Figure ~\ref{fig:tasks} shows the environments of the trained primitive tasks ``$6\times6\ \text{L}$'' and ``$6\times6\ \text{D}$'', whose reward functions are then combined to produce a composite task, ``$6\times6\ \text{L OR D}$''. The agent has $4$ possible actions (in each of the cardinal directions) and begins at the green circle in all cases. The agent's goal is to navigate to the orange states which provide a reward. We note that these states are \textit{not absorbing} unlike the cases considered in prior work. The red ``X'' indicates a penalizing state where the agent's episode is immediately terminated. Finally, a wall (black square) is added for the agent to navigate around. The primitive tasks are assumed to be solved with high accuracy (i.e. we assume the $Q$-values for primitive tasks to be exactly known). Although the domain is rather simple, we use such an experiment as a means of validating our theoretical results while gaining insight on the experimental effects of \textit{clipping} (discussed below) during training.

\begin{figure}[ht!]
    \centering
    \includegraphics[width=0.45\textwidth]{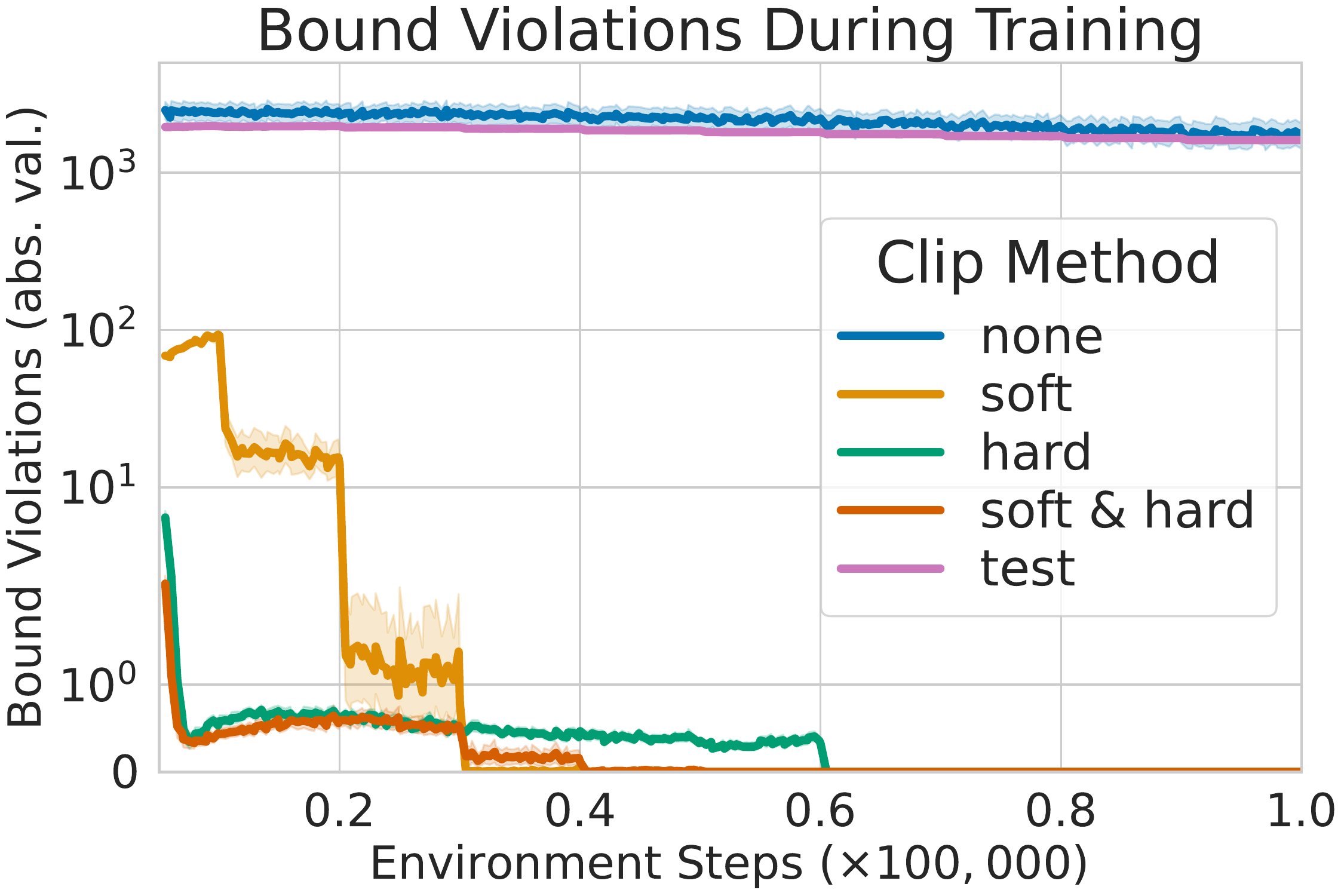}
    \caption{Mean bound violation, shown with shaded 95\% confidence intervals. The bound violation measures the difference between the Q-network's estimate and the allowed bound $\widetilde{Q} \geq f(Q)$ for a given batch during training. Note that the $x$-axis corresponds to zero bound violation (symlog $y$-scale). Although ``test'' clipping does very well in terms of its evaluation performance, it does not respect the bounds, even long after its apparent convergence.}
    \label{fig:bound_viol}
\end{figure}

With the primitive tasks solved, we now consider training on a target composite (``OR'') task. We learn from scratch (with no prior information or bounds being applied) as our baseline (blue line, denoted ``none'', in Fig.~\ref{fig:tasks} and Fig.~\ref{fig:bound_viol}).

To implement the derived bounds, we consider the one-sided bound, thereby not requiring further training. In this case (standard RL, ``OR'' composition) we have the following lower bound (see Table~\ref{tab:std_rl}): $Q^{(\text{OR})} \ge \max\{Q^{(\text{L})}, Q^{(\text{D})}\}$. There are many ways to implement such a bound in practice. One na\"ive method is to simply clip the target network's new (proposed target) value to be within the allowed region for each of the $\widetilde{Q}(s,a)$ that are currently being updated. We term this method ``hard clipping''. Inspired by Section 3.2 of \citep{kimconstrained}, we can also use an additional penalty by adding to the loss function the absolute value of bound violations that occur (the quantity ``$\textrm{BV}$'' defined in Eq.~\eqref{eq:bound_violations}). We term this method as ``soft clipping''. As mentioned by \citep{kimconstrained}, this method of clipping could produce a new hyperparameter (the relative weight for this term relative to the Bellman residual). We keep this coefficient fixed (to unity) for simplicity, and we intend on exploring the possibility of a variable weight in future work.

\begin{equation}
    \textrm{BV} \doteq || \widetilde{Q}(s,a) - f(\{Q^{(k)}(s,a)\}_{k=1}^M) ||
    \label{eq:bound_violations}
\end{equation}

Similar to Eq. (21) of \citep{kimconstrained} we also considered a clipping at test-time only, with some differences in how the bounds are applied. This discrepancy is due to the difference in frameworks: \citep{kimconstrained} leverages the GPI framework, and in our setting we are learning a new policy from scratch while imposing said bounds. Our method is as follows:
Whenever the agent acts greedily and samples from the policy network, it first applies (hard) clipping to the network's value; then the agent extracts an action via greedy argmax. We term this method of clipping as ``test clipping''. The results for each method (as well as a combination of both hard and soft clipping) are shown in Fig.~\ref{fig:tasks} and \ref{fig:bound_viol}.

Interestingly, we find that by directly incorporating the bound violations into the loss function (via the ``soft'' clipping mechanism); the bound violations most quickly become (and remain) zero (Fig.~\ref{fig:bound_viol}) as opposed to the other methods considered. We find that reduction in bound violation also generally correlates with a high evaluation reward during training. One exception to this observation (for the particular environment shown) is the case of ``test'' clipping.

For this particular composition, either primitive task will solve the composite task, thus yielding high evaluation rewards (Fig.~\ref{fig:tasks}). However, the $Q$-values are not accurate, which leads to a high frequency of clipping, comparable to the baseline without clipping (Fig.~\ref{fig:bound_viol}).
In order to ensure the agent has learned accurate $Q$-values, it is therefore important to monitor the bound violations rather than only the evaluation performance which may not be representative of convergence of $Q$-values.

\section{Discussion}
In summary, we have established a general theoretical treatment for functional transformation and composition of primitive tasks. This extends the scope of previous work, which has primarily focused on isolated instances of reward transformations and compositions without general structure. Additionally, we have theoretically addressed the broader setting of stochastic dynamics, with rewards varying on both terminal and non-terminal (i.e. boundary and interior) states. In this work, we have shown that it is possible to derive a general class of functions which obey transfer bounds in standard and entropy-regularized RL beyond those cases discussed in previous work. In particular, we show that by using the same functional form on the optimal $Q$ functions as used on the reward, we can bound the transformed optimal $Q$ function. The derived bound can then be used to calculate a zero-shot solution. We have used these functions to define a Transfer Library: a set of tasks which can immediately be addressed by our bounds. Since our approach via the optimal backup equation is general, we apply it to both standard RL and entropy-regularized RL.

The newly-defined fixed point $C$ ($\hat{C}$) has an interesting interpretation. Rather than simply being an arbitrary function, for both the standard RL and entropy-regularized RL bounds, $C$ represents an optimal value function for a standard RL task with reward function given by $r_C$ ($\hat{r}_C$ for $\hat{C}$). 

The function $C$ bounds the total gap between $f(Q(s,a))$ and $\widetilde{Q}(s,a)$ at the level of state-actions. We also note the simple relationship between reward functions $r_C = -\hat{r}_C$.

The fixed point $D$ ($\hat{D}$) is not an optimal value function, but the value of the zero-shot policy $\pi_f$ in some other auxiliary task. The auxiliary task takes various ``rewards'', e.g. the function $\gamma \hat{C}$ in Lemma \ref{lem:concave_regret_maxent}. Although for general functions $f$, the rewards do not have a simple interpretation (i.e. R\'enyi divergence between two policies as in \citep{Haarnoja2018}), we see that $r_C$ essentially measures the non-linearities of the composition function $f$ with respect to the given dynamics, and hence accounts for the errors made in using the bounding conditions of $f$.
Furthermore, we can bound $C$ (and thus the difference between the optimal value and the suggested zero-shot approximation $f(Q)$) in a simple way: by bounding the rewards corresponding to $C$. By simply calculating the maximum of $r_C$ for example, one easily finds $C(s,a) \le \frac{1}{1-\gamma} \max_{s,a} r_C(s,a)$ (and similarly for $\hat{C}$).

Interestingly, \cite{TL_bound} have shown the provable usefulness of using an upper bound when used for ``warmstarting'' the training in new domains. In particular, it appears that $f(Q)$ (for the concave conditions) is related to their proposed ``$\alpha$-weak admissible heuristic'' for $\widetilde{\T}$. In future work, we hope to precisely connect to such theoretical results in order to obtain provable benefits to our derived bounds.
Experimentally we have observed that this warmstarting procedure does indeed improve convergence times, however a detailed study of this effect is beyond the scope of the present work and will be explored in future work. The derived results have also been used to devise protocols for clipping which improve performance and reduce variance during training based on the experiments presented.

In the future, we hope that the class of functions discussed in this work will be broadened further, allowing for a larger class of non-trivial zero-shot bounds for the Transfer Library. By adding these known transformations and compositions to the Transfer Library, the RL agent will be able to approach significantly more novel tasks without the need for further training. 

The current research has also emphasized questions for transfer learning in this context, such as:
\textit{Which primitives should be prioritized for learning?} (Discussed in \cite{boolean_stoch, nemecek, alver}.) \textit{What other functions can be used for transfer?} \textit{How tight are these bounds?} \textit{How does the Transfer Library depend on the parameters $\gamma$ and $\beta$?}

In this work, we provide general bounds for the discrete MDP setting and an extension of the theory to continuous state-action spaces is deferred to future work. It would be of interest to explore if it is possible to prove general bounds for this extension, given sufficient smoothness conditions on the dynamics and the function of transformation. Other extensions can be considered as well, for instance: the applicability to other value-based or actor-critic methods, the warmstarting of function approximators, learning the $C$ and $D$ functions, and adjusting the ``soft'' clipping weight parameter.

In future work, we also aim to discover other functions satisfying the derived conditions; and will attempt to find necessary (rather than sufficient) conditions that classify the functions $f \in \TL$.
It would be of interest to explore if extensions of the current approach can further enable agents to expand and generalize their knowledge base to solve complex dynamic tasks in Deep RL.

\begin{acknowledgements} % will be removed in pdf for initial submission,
						 % (without ‘accepted’ option in \documentclass)
                         % so you can already fill it to test with the
                         % ‘accepted’ class option
The authors would like to thank the anonymous reviewers
for their helpful comments and suggestions. JA, AA, and
RVK acknowledge funding support from the NSF through
Award No. DMS-1854350. VM and ST acknowledge funding support from the NSF through Award No. 2246221.
JA would like to acknowledge the use of the supercomputing facilities managed by the Research Computing Department at the University of Massachusetts Boston. The
work of JA and AA was supported in part by the College
of Science and Mathematics Dean’s Doctoral Research Fellowship through fellowship support from Oracle, project ID
R20000000025727. JA and RVK would like to acknowledge
support from the Proposal Development Grant provided by
the University of Massachusetts Boston. ST and VM acknowledge support from the Alliance Innovation Lab in
Silicon Valley.
\end{acknowledgements}

\bibliography{uai2023-template}
\nocite{haarnoja2019_learn}
\nocite{hardy1952inequalities}
\nocite{lee2021sunrise}
\nocite{openAI}

\title{Bounding the Optimal Value Function \\in Compositional Reinforcement Learning\\(Supplementary Material)}

\onecolumn
\maketitle

\section{Introduction}
In the following, we discuss the results of additional experiments in the four room domain. In these experiments, we want to answer the following questions:
\begin{itemize}
    \item How do the optimal policies and value functions compare to those calculated from the zero-shot approximations using the derived bounds?
    \item What are other examples of compositions and functional transformations that can be analyzed using our approach?
    \item Does warmstarting (using the derived bounds for initialization) in the tabular case improve the convergence?

\end{itemize}

To address these issues, we modify OpenAI's frozen lake environment \cite{openAI} to allow for stochastic dynamics. 

In the tabular experiments, numerical solutions for the optimal $Q$ functions were obtained by solving the Bellman backup equations iteratively. Iterations are considered converged once the maximum difference between successive iterates is less than $10^{-10}$.

% The figures in the proceeding sections provide the following information: the transformed (composite) reward function, the zero-shot approximation for the $Q$ function (based on the derived results, using the known solutions for the primitive tasks), and the optimal solution  for the $Q$ function (obtained by directly solving the model with the transformed reward function). The results show that the zero-shot approximation and the derived policy are quite close to the optimal solutions. 

Beyond the motivating example shown in the main text, we have included video files demonstrating a full range of zero-shot compositions with convex weights between the Bottom Left (BL) room and Bottom Right (BR) room subtasks, in both entropy-regularized ($\beta=5$) and standard RL with deterministic dynamics. These videos, along with all code for the above experiments are made publicly available at a repository on \url{https://github.com/JacobHA/Q-Bounding-in-Compositional-RL}.

\section{Experiments}
\subsection{Function Approximators}

For function approximator experiments (as shown in the main text), we use the DQN implementation from Stable-Baselines3 \cite{stable-baselines3}. We first fully train the subtasks (seen in Fig. 1 of the main text). Then, we perform hyperparameter sweeps for each possible clipping option. Several hyperparameters are kept fixed (Table~\ref{tab:shared}), and we sweep with the range and distribution shown below in Table~\ref{tab:sweep}. Finally, we use the optimal hyperparameters (as measured by those which maximize the accumulated reward throughout training). These values are shown in Table~\ref{tab:optimal}. 
\clearpage
\begin{center}
\captionof{table}{Hyperparameters shared by all Deep Q Networks}
\begin{tabular}{||c c||} 
 \hline
 Hyperparameter & Value \\ [0.5ex] 
 \hline\hline
 Buffer Size & 1,000,000 \\ 
 \hline
 Discount factor, $\gamma$ & 0.99 \\
  \hline
 $\epsilon_{\text{initial}}$ & 1.0 \\
 \hline
 $\epsilon_{\text{final}}$ & 0.05 \\
  \hline
 ``learning starts'' & 5,000 \\
 \hline
 Target Update Interval & 10000 \\ [1ex] 
 \hline
\end{tabular}
\label{tab:shared}
\end{center}

\begin{center}
\captionof{table}{Hyperparameter Ranges Used for Finetuning}
\begin{tabular}{||c c c c||} 
 \hline
 Hyperparameter & Sampling Distribution & Min. Value & Max. Value\\ [0.5ex] 
  \hline
 Learning Rate & Log Uniform & $10^{-4}$ & $10^{-1}$ \\ 
 \hline
 Batch Size & Uniform & $32$ & $256$ \\ 
 \hline
 Exploration Fraction & Uniform & $0.1$ & $0.3$ \\
 \hline
 Polyak Update, $\tau$ & Uniform & $0.5$ & $1.0$ \\ [1ex] 
 \hline
\end{tabular}
\label{tab:sweep}
\end{center}

\begin{center}
\captionof{table}{Hyperparameters used for different clipping methods}
\begin{tabular}{||c c c c c||} 
 \hline
 Hyperparameter & None & Soft & Hard & Soft-Hard \\ [0.5ex] 
  \hline
 Learning Rate & $7.825\times10^{-4}$ & $3.732\times10^{-3}$ & $1.457\times10^{-3}$  & $3.184\times10^{-3}$ \\ 
 \hline
 Batch Size & 245 & 247 & 146 & 138\\ 
 \hline
 Exploration Fraction & 0.137 & 0.1075 & 0.1243 & 0.1207\\
 \hline
 Polyak Update, $\tau$ & 0.9107 & 0.9898 & 0.5545 & 0.7682 \\ [1ex] 
 \hline
\end{tabular}
\label{tab:optimal}
\end{center}

\subsection{Tabular experiments}
In these experiments we will demonstrate on simple discrete environments the effect of increasingly stochastic dynamics and increasingly dense rewards. As a proxy for measuring the usefulness or accuracy of the bound $f(Q)$, we calculate the mean difference between $f\left(Q(s,a)\right)-\widetilde{Q}(s,a)$, as well as the mean Kullback-Liebler (KL) divergence between $\pi$ (the true optimal policy) and $\pi_f$, the policy derived from the bound $f(Q)$. The proceeding experiments are situated in the entropy-regularized formalism (unless $\beta=\inf$ as shown in Fig.~\ref{fig:beta_inf_stoch_expt}) with the uniform prior policy $\pi_0(a|s) = 1/ |\mathcal{A}|$.

\subsubsection{Stochasticity of Dynamics}
In this experiment, we investigate the effect of stochastic dynamics on the bounds. Specifically, we vary the probability that taking an action will result in the intended action. This is equivalent to a slip probability.

\begin{figure}[ht]
    \centering
    \subfloat[Task 1]{{\includegraphics[width=5cm]{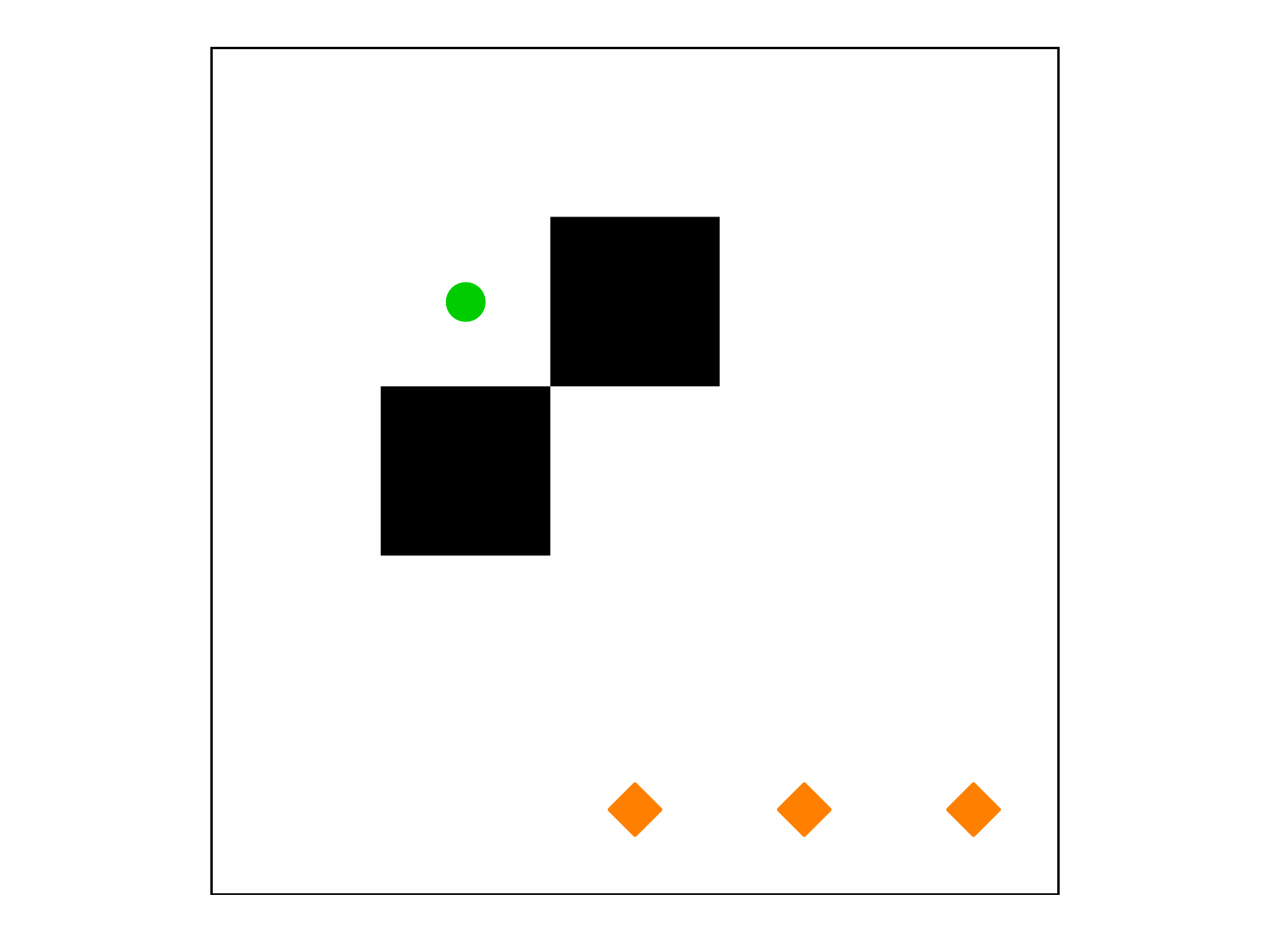} }}
    % \qquad
    \subfloat[Task 2]{{\includegraphics[width=5cm]{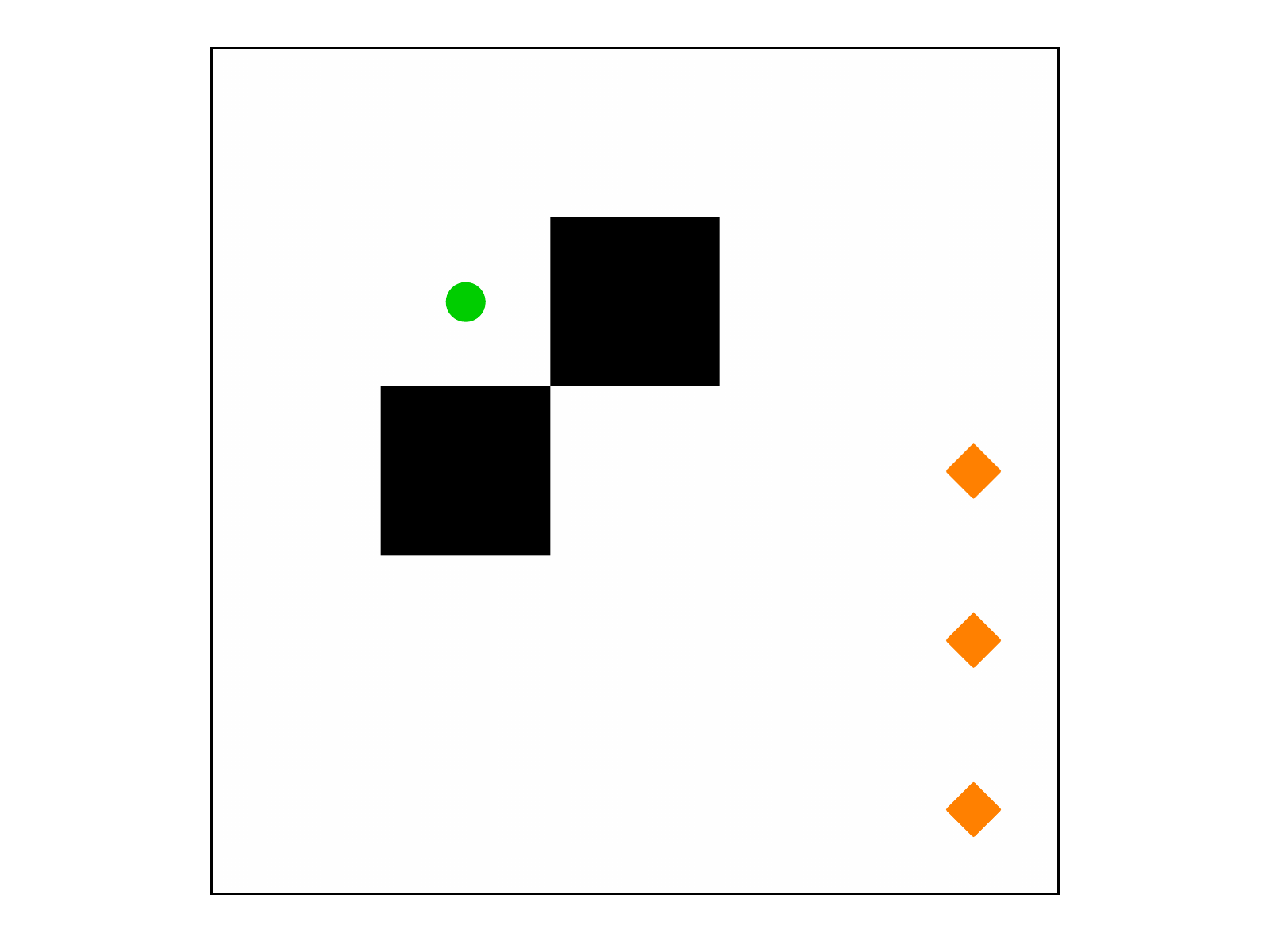} }}
        % \qquad
    \subfloat[Task 3: AND composition]{{\includegraphics[width=5cm]{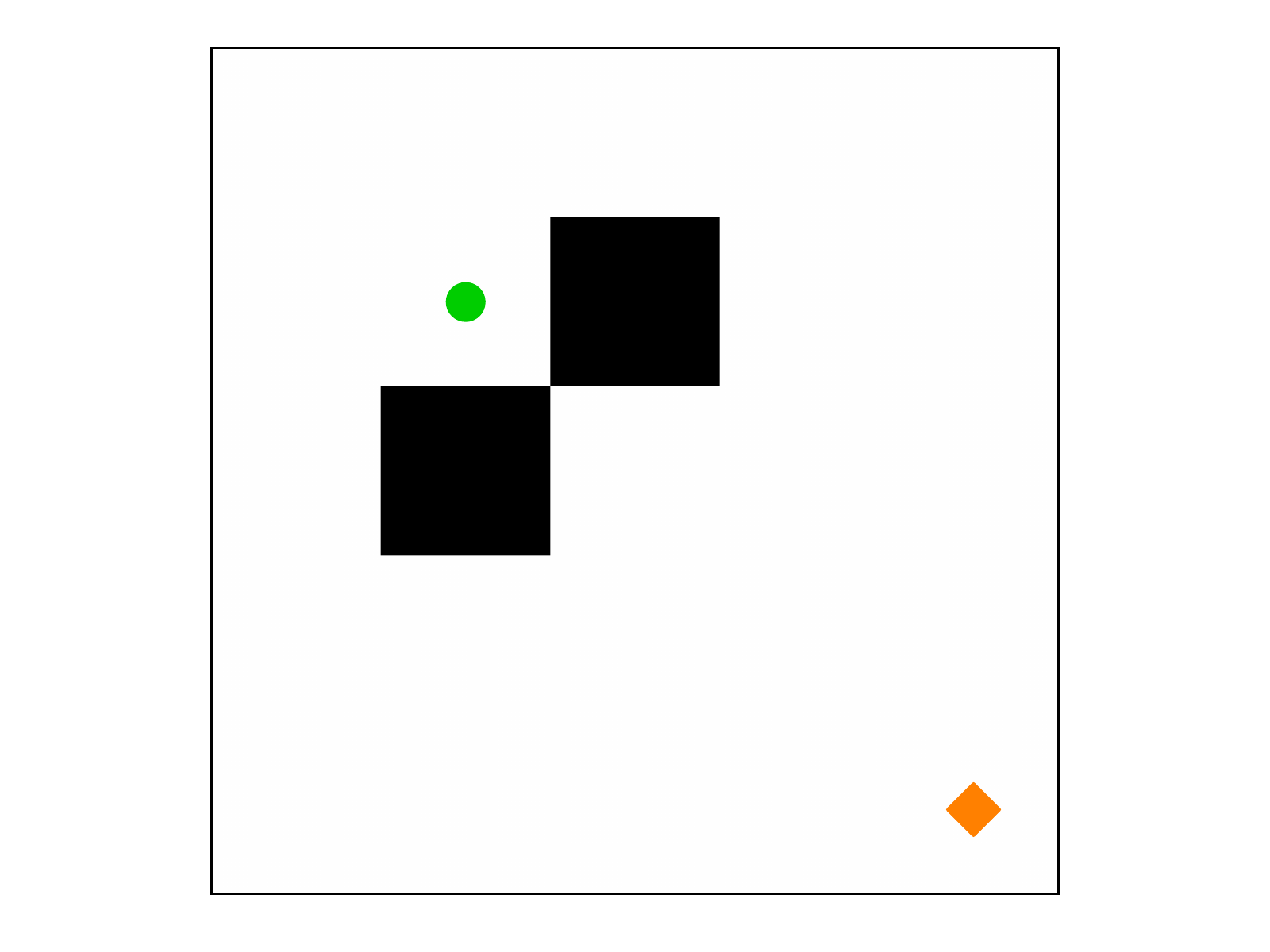} }}
        % \qquad
    \caption{Reward functions for a simple maze domain; used for stochasticity experiments. We place reward (whose cost is half the default step penalty of $-1$) at the edges of the room, denoted by an orange diamond. }
    \label{fig:stochastic_desc}
\end{figure}
We notice in the following plots that at near-deterministic dynamics the bound becomes tighter. We also remark that the Kullback-Liebler divergence is lowest in very highly-stochastic environments. This is because for any $\beta>0$, the cost of changing the policy $\pi$ away from the prior policy is not worth it: the dynamics are so stochastic that there will be no considerable difference in trajectories even if significant controls (nearly deterministic choices) are applied via $\pi$.

\begin{figure}[ht]
    \centering
    \subfloat[]{{\includegraphics[width=7cm]{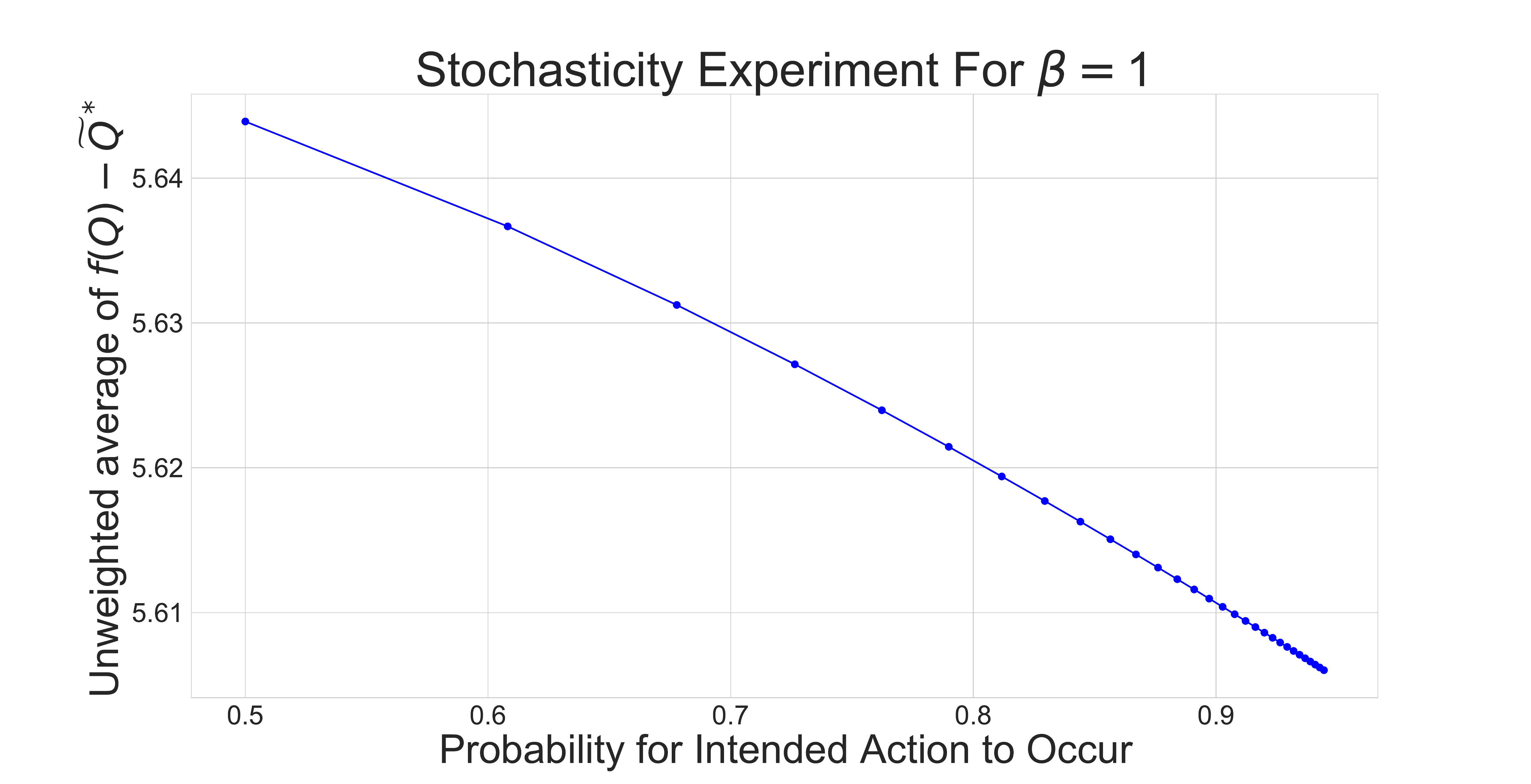} }}
    % \qquad
    \subfloat[]{{\includegraphics[width=7cm]{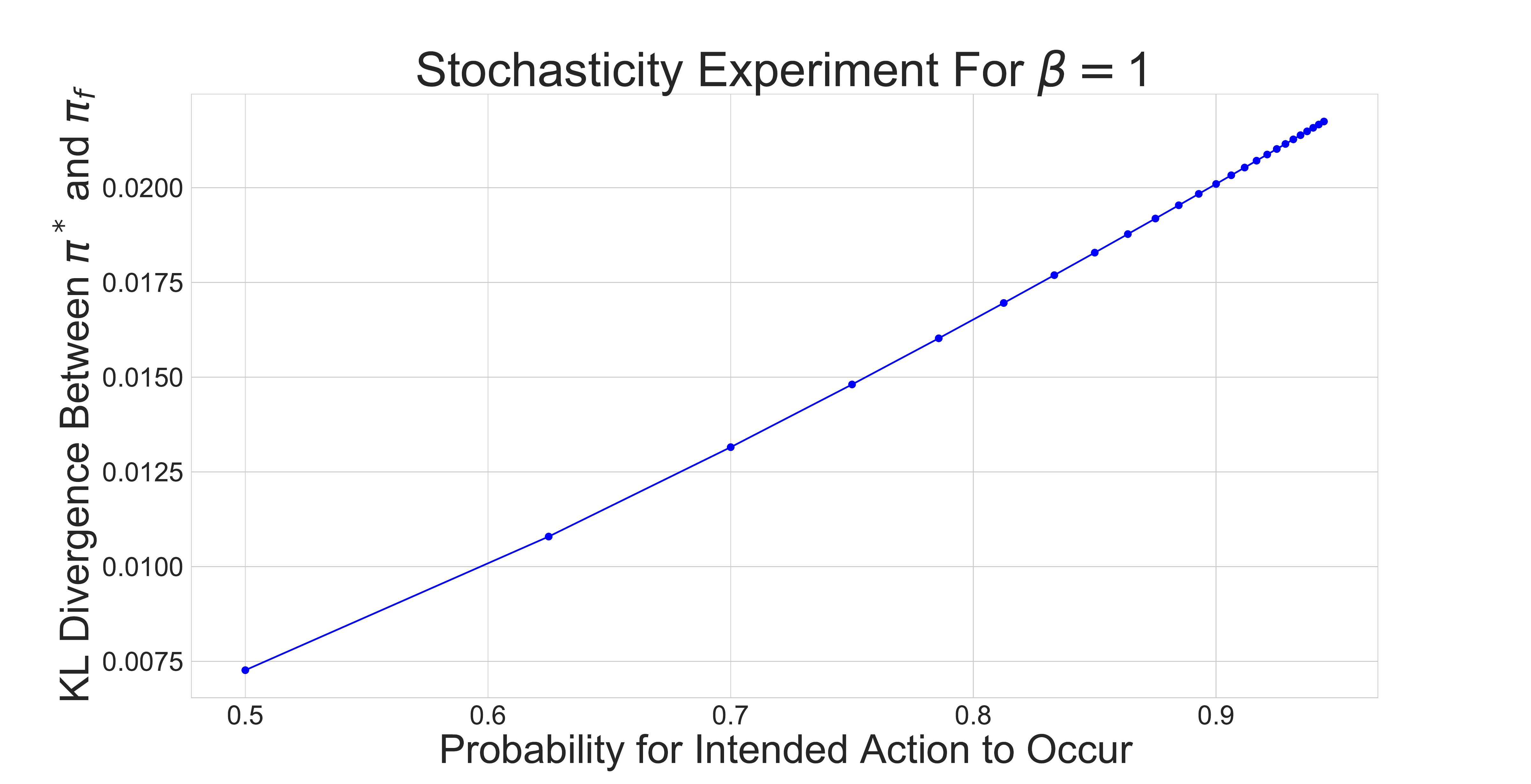} }}

    \caption{$\beta=1$ KL divergence between $\pi$ and $\pi_f$ and average difference between optimal $Q$ function and presented bound.}
    \label{fig:b1}
\end{figure}

\begin{figure}[ht]
    \centering
    \subfloat[]{{\includegraphics[width=8cm]{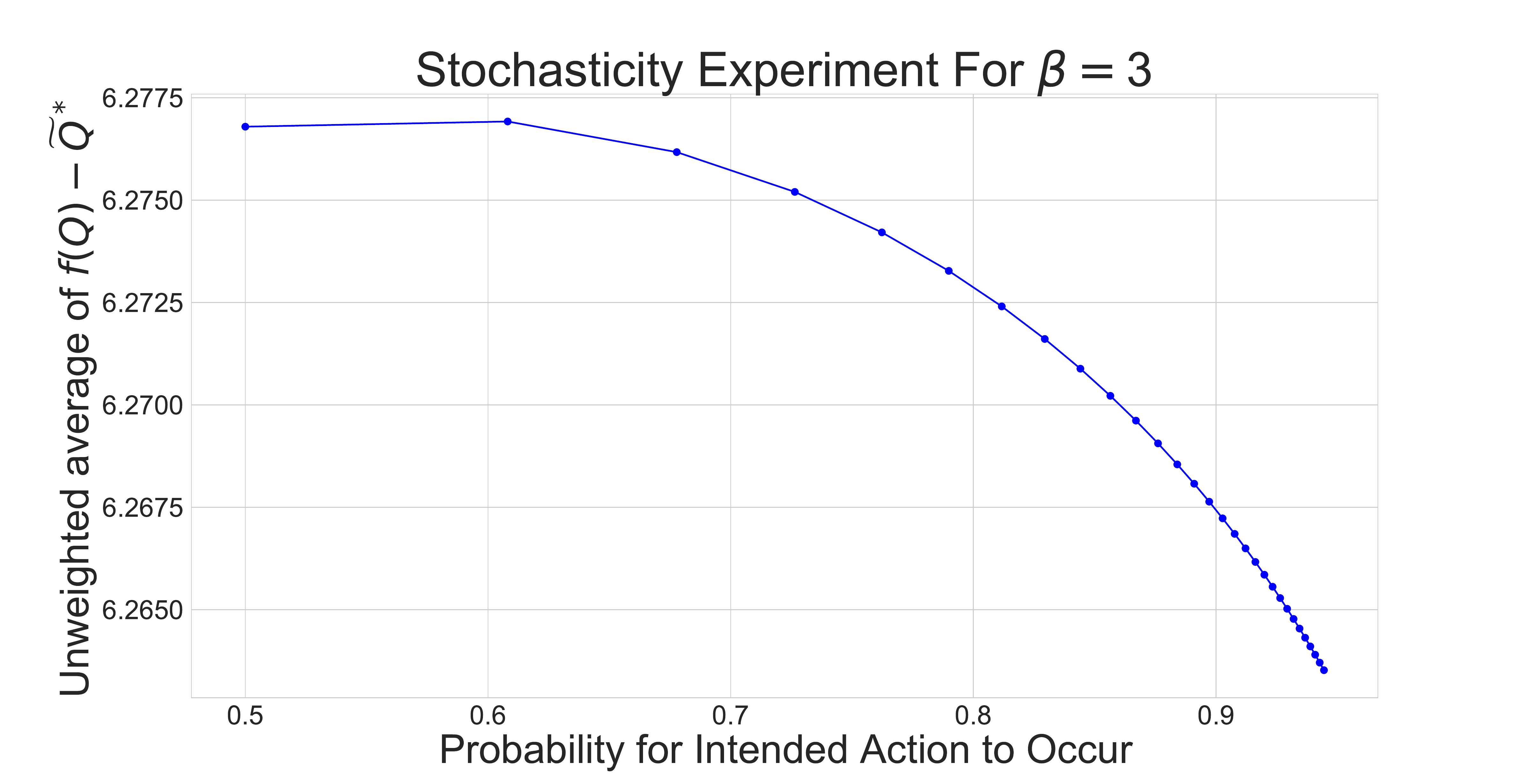} }}
    % \qquad
    \subfloat[]{{\includegraphics[width=7cm]{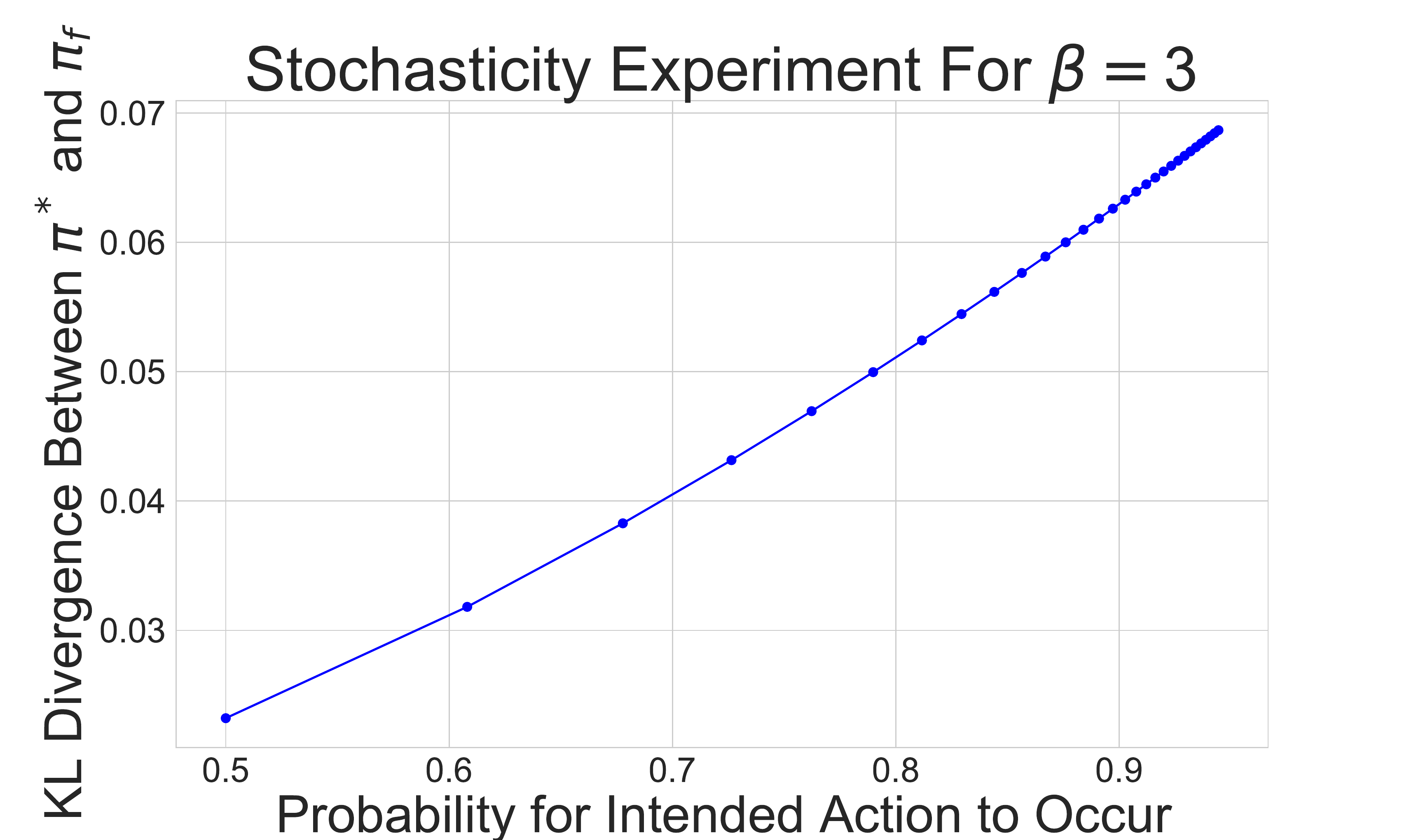} }}

    \caption{$\beta=3$ KL divergence between $\pi$ and $\pi_f$ and average difference between optimal $Q$ function and presented bound.}
    \label{fig:b3}
\end{figure}

\begin{figure}[ht]
    \centering
    \subfloat[]{{\includegraphics[width=7cm]{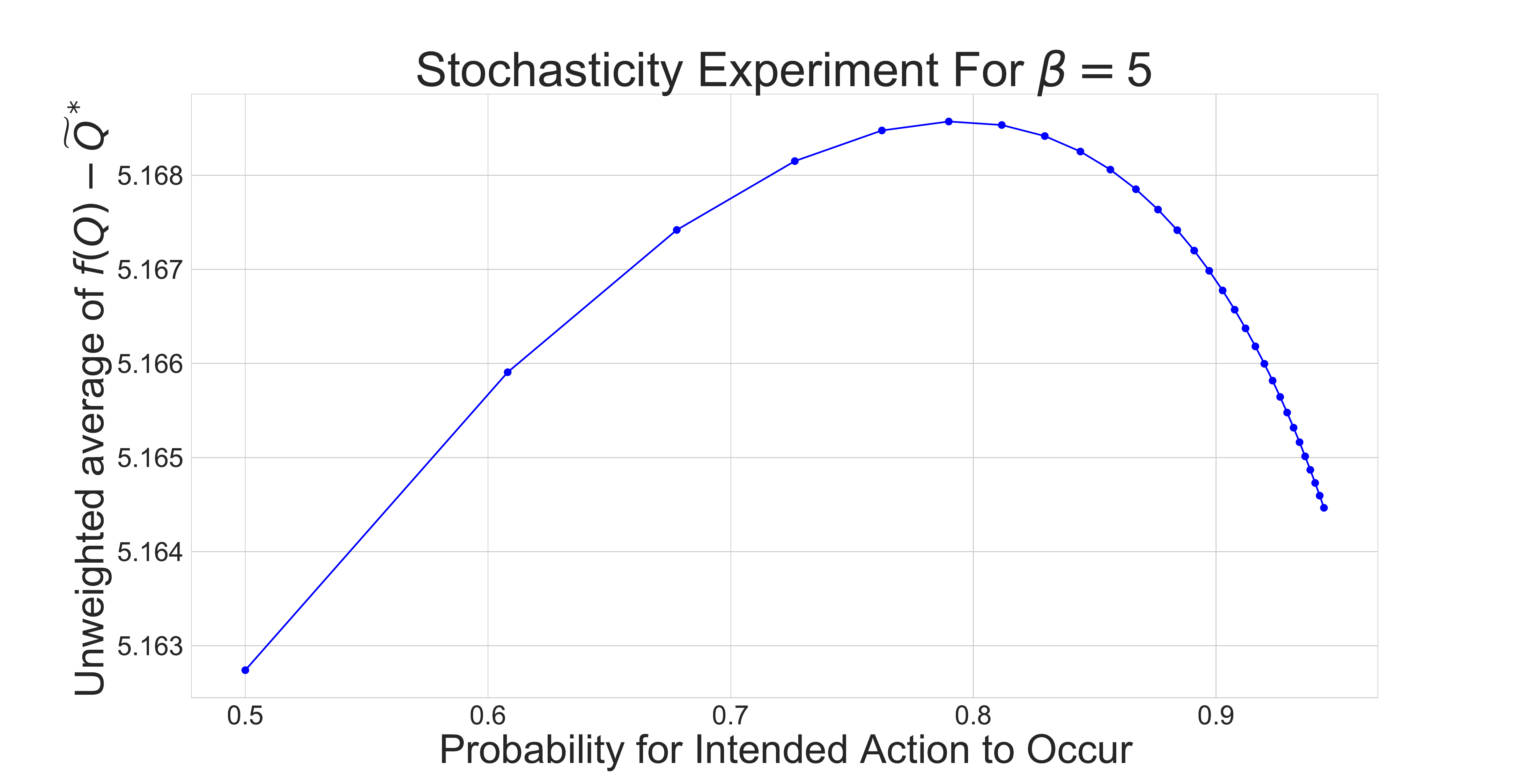} }}
    % \qquad
    \subfloat[]{{\includegraphics[width=7cm]{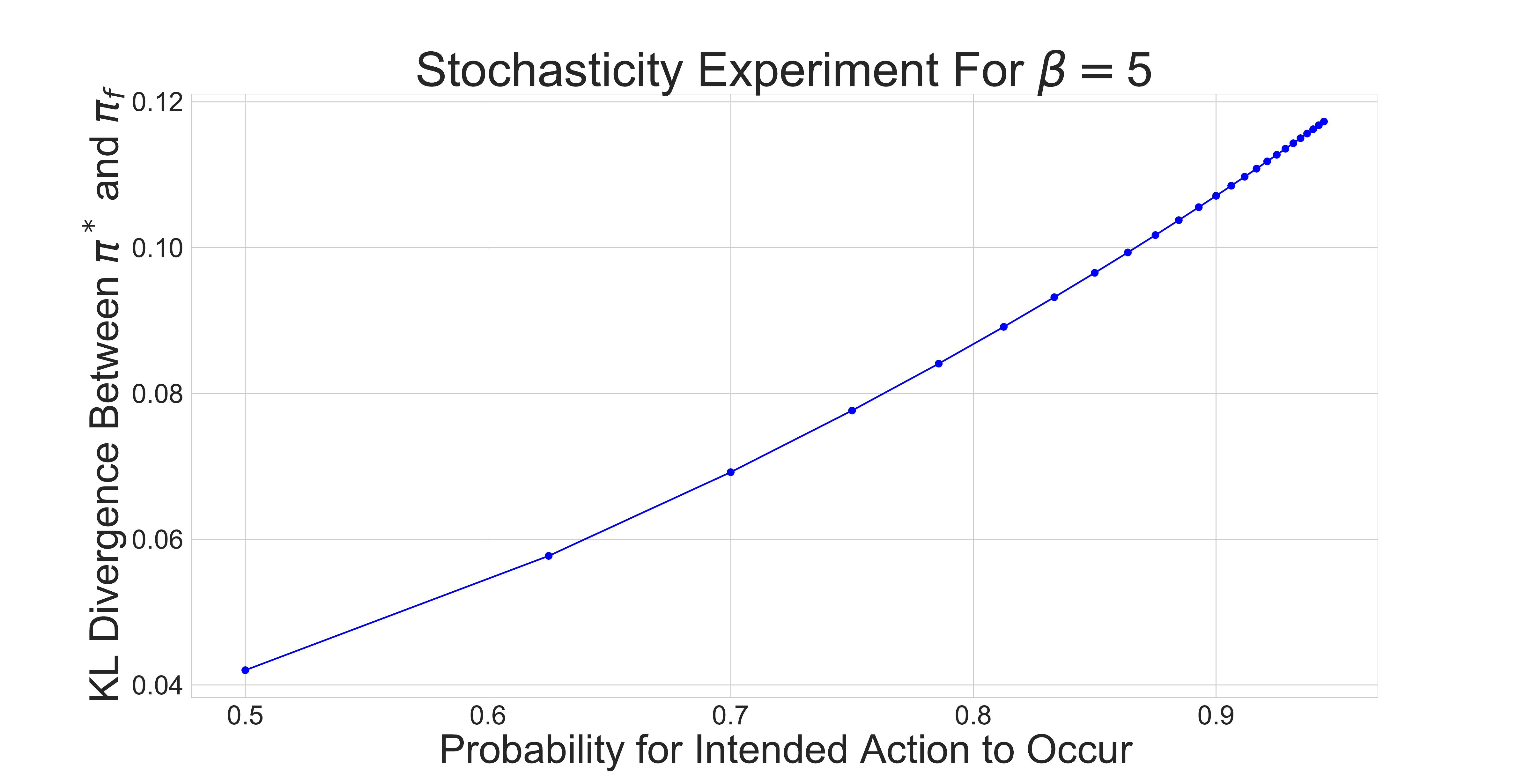} }}

    \caption{$\beta=5$ KL divergence between $\pi$ and $\pi_f$ and average difference between optimal $Q$ function and presented bound.}
    \label{fig:b5}
\end{figure}

\begin{figure}[ht]
    \centering
    \includegraphics[width=12cm]{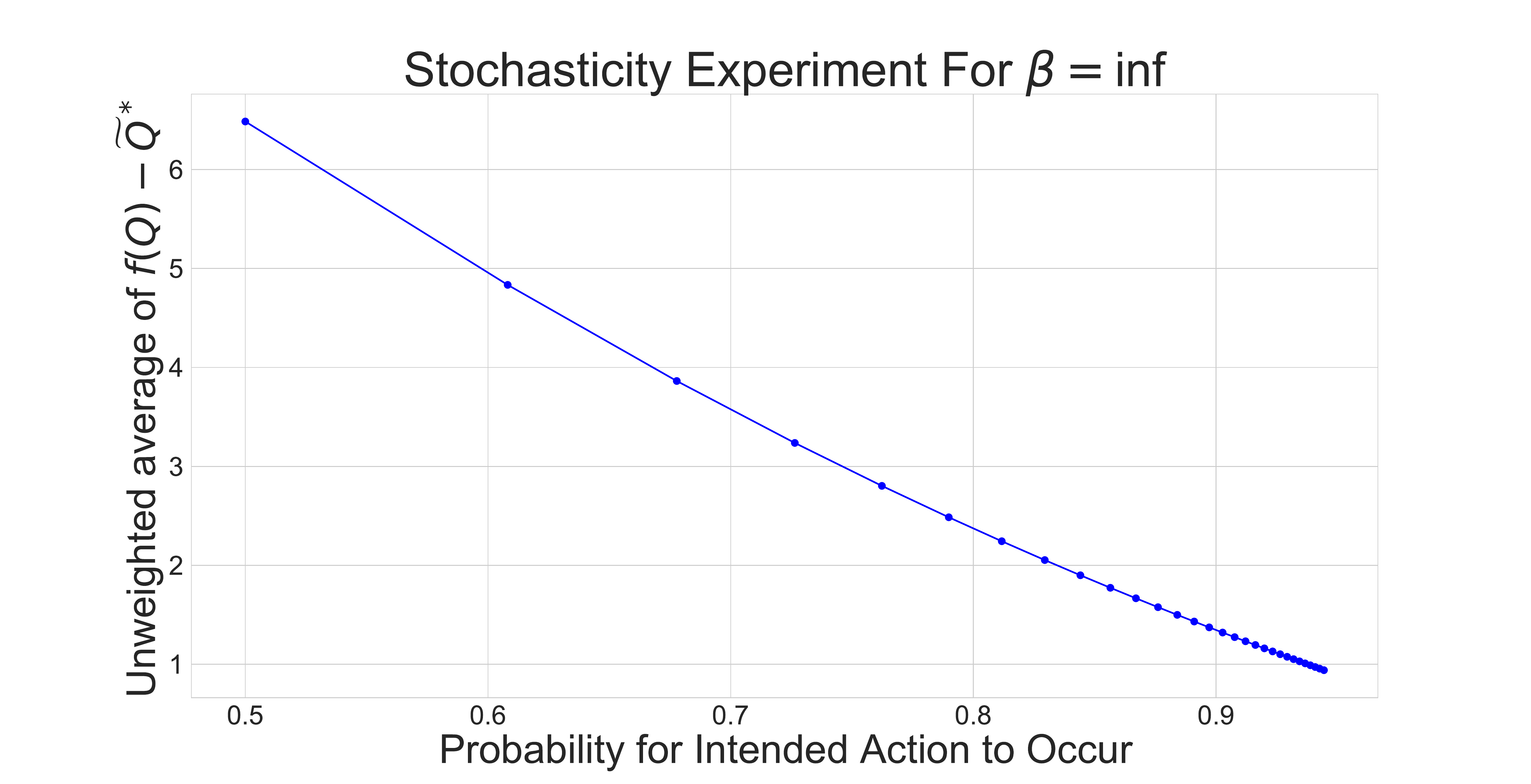} 

    \caption{$\beta=\inf$, standard RL. 
    Average difference between optimal $Q$ function and presented bound. Note that we do not plot a KL divergence for this case as $\pi$ is greedy and hence the divergence is always infinite.}
    \label{fig:beta_inf_stoch_expt}
\end{figure}

\clearpage
% \newpage

\subsection{Sparsity of Rewards}
In this experiment, we consider an empty environment ($|S|\times |S|$ empty square) with reward $r=0$ everywhere and deterministic dynamics. No other rewards or obstacles are present. Then fix an integer $0<n<|S|$. Drawing randomly (without repetition), we choose one of the states of the environment to grant a reward, drawn uniformly between $(0, 1)$. We do this again for another copy of the empty environment.

We then compose these two (randomly generated as described) subtasks by using a simple average $F(r^{(1)}, r^{(2)}) = 0.5r^{(1)} + 0.5 r^{(2)}$. We have used $\beta=5$ for all experiments in this subsection.
\begin{figure}[ht]
    \centering
    \subfloat[]{{\includegraphics[width=7cm]{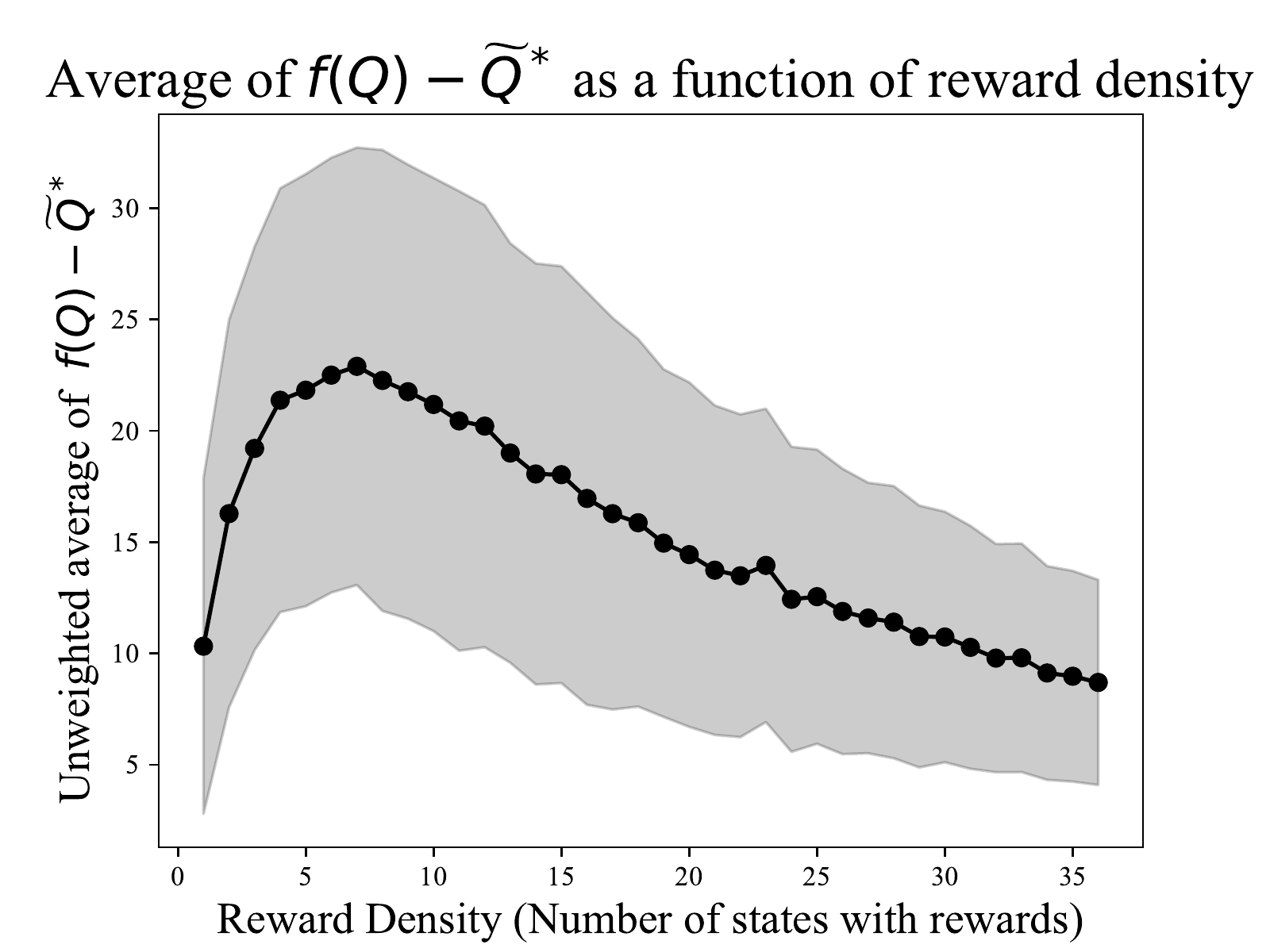} }}
    % \qquad
    \subfloat[]{{\includegraphics[width=7cm]{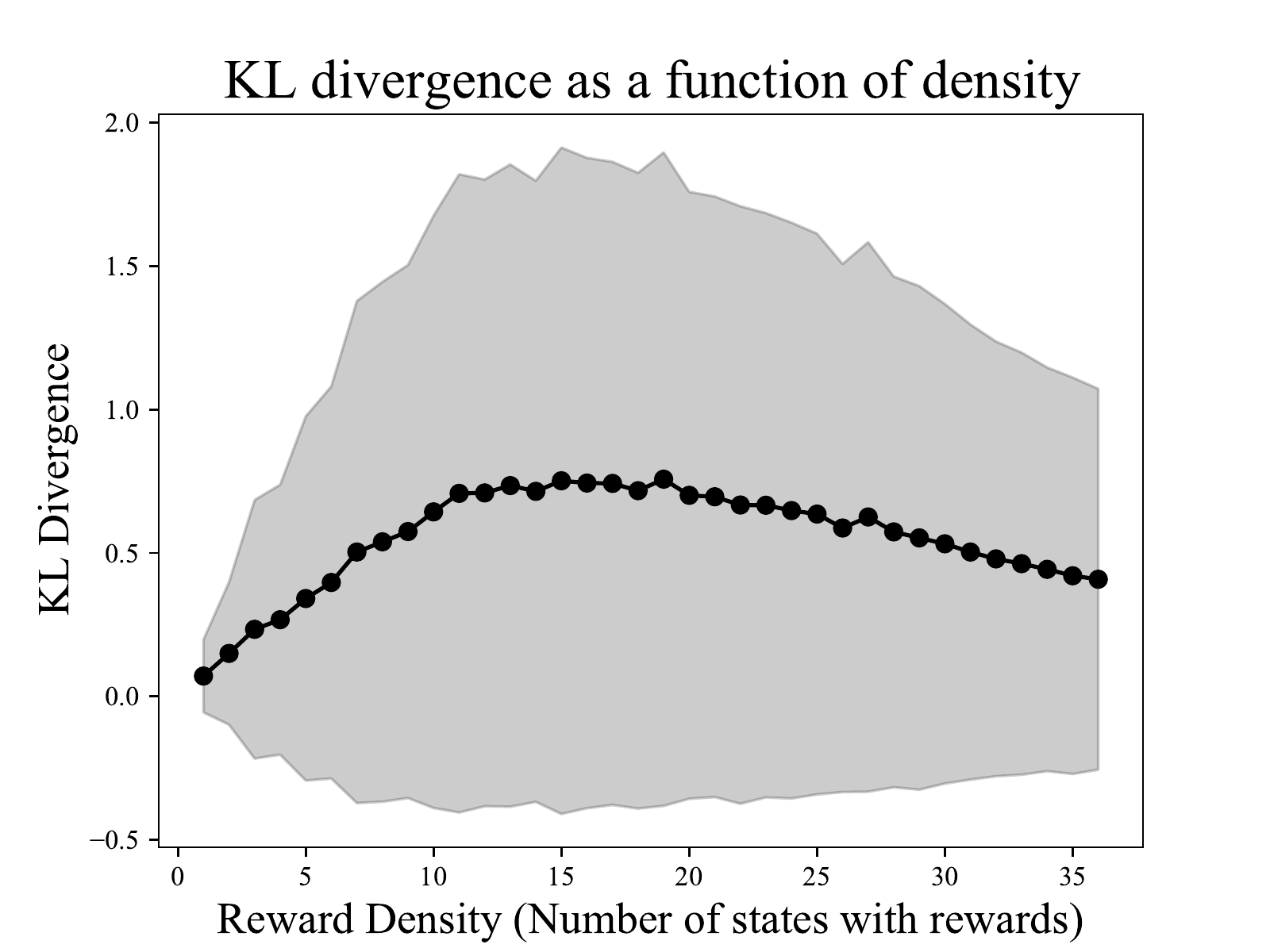} }}

    \caption{$6\times 6$ environment. KL divergence between $\pi$ and $\pi_f$ and average difference between optimal $Q$ function and presented bound, with the shaded region representing one standard deviation over 1000 runs.}
    \label{fig:6x6}
\end{figure}

\begin{figure}[ht]
    \centering
    \subfloat[]{{\includegraphics[width=7cm]{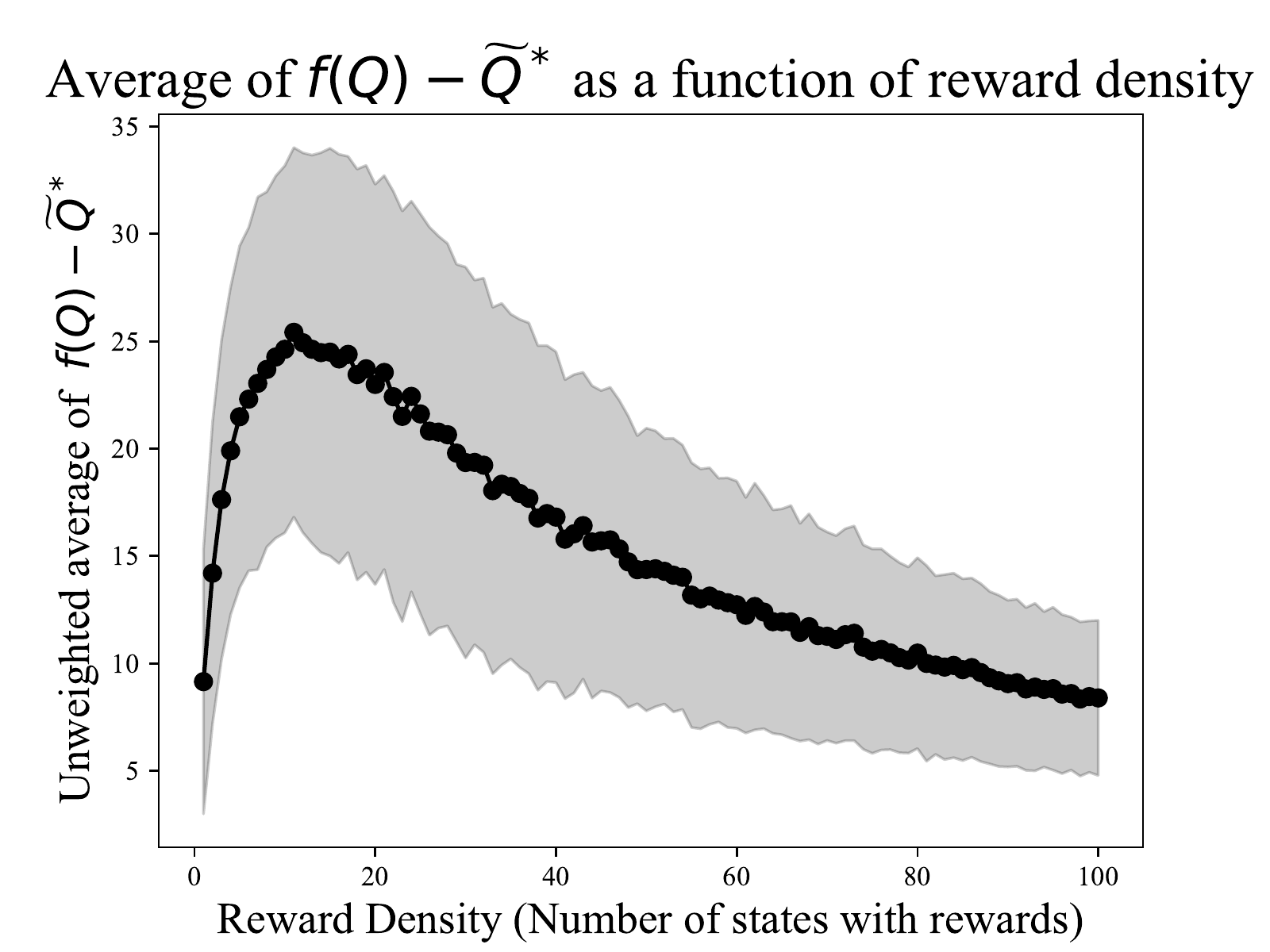} }}
    % \qquad
    \subfloat[]{{\includegraphics[width=7cm]{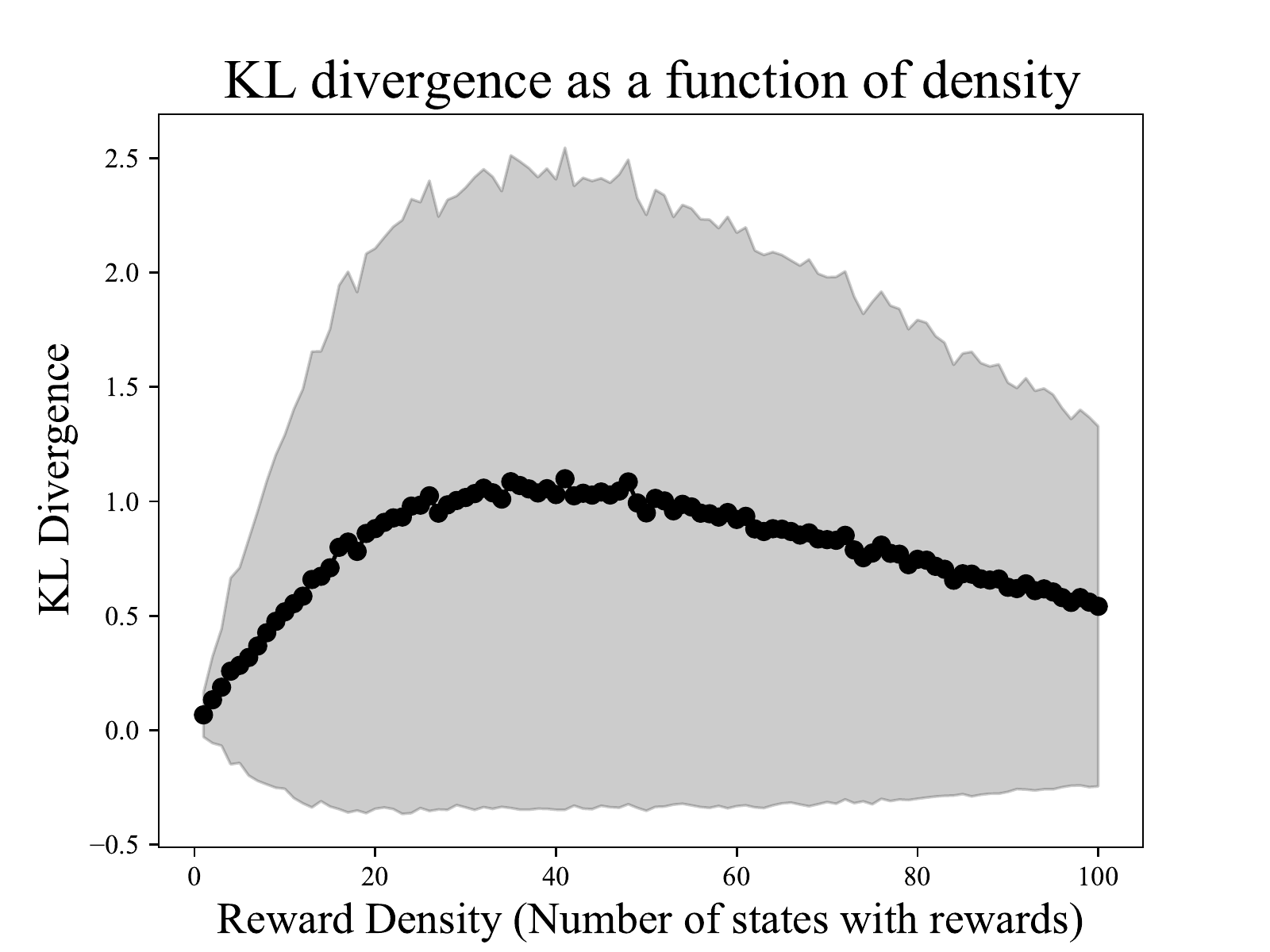} }}

    \caption{$10\times 10$ environment. KL divergence between $\pi$ and $\pi_f$ and average difference between optimal $Q$ function and presented bound, with the shaded region representing one standard deviation over 1000 runs.}
    \label{fig:10x10}
\end{figure}
Interestingly, we find a somewhat universal behavior, in that there is a certain level of density which makes the bound a poor approximation to the true $Q$ function. We also note that the bound is a better approximation at low densities.

\clearpage
\newpage
\section{Boolean Composition Definitions}
In this section, we explicitly define the action of Boolean operators on subtask reward functions. These definitions are similar to those used by \cite{boolean}.

\begin{definition}[OR Composition]
Given subtask rewards $\{r^{(1)}, r^{(2)}, \dotsc , r^{(M)} \}$, the OR composition among them is given by the \textit{maximum} over all subtasks, at each state-action pair:
\begin{equation}
    r^{(\text{OR})}(s,a) = \max_k r^{(k)}(s,a).
\end{equation}
\end{definition}

\begin{definition}[AND Composition]
Given subtask rewards $\{r^{(1)}, r^{(2)}, \dotsc , r^{(M)} \}$, the AND composition among them is given by the \textit{minimum} over all subtasks, at each state-action:
\begin{equation}
    r^{(\text{AND})}(s,a) = \min_k r^{(k)}(s,a).
\end{equation}
\end{definition}

\begin{definition}[NOT Gate]
Given a subtask reward function $r$, applying the NOT gate transforms the reward function by negating all rewards (i.e. rewards $\to$ costs):
\begin{equation}
    r^{(\text{NOT})}(s,a) = - r(s,a),
\end{equation}
% where $r_{\text{max}}$ is added to ensure non-negativity of rewards. 
\end{definition}

The proofs in all subsequent sections follow an inductive form based on the Bellman backup equation, whose solution converges to the optimal $Q$ function. This is a similar approach as employed by \cite{Haarnoja2018} and \cite{hunt_diverg}, but with the extension to all applicable functions; rather than (linear) convex combinations.

\section{Proofs for Standard RL}

Let $X$ be the codomain for the $Q$ function of the primitive task ($Q: \s \times \A \to X \subseteq \mathbb{R}$). 

\begin{lemma}[Convex Conditions]\label{thm:convex_cond_std}
Given a primitive task with discount factor $\gamma$ and a bounded, continuous transformation function $f~:~X~\to~\mathbb{R}$ which satisfies:  
\begin{enumerate}
    \item $f$ is convex on its domain $X$ (for stochastic dynamics);
    \item $f$ is sublinear:
    \begin{enumerate}[label=(\roman*)]
        \item $f(x+y) \leq f(x) + f(y)$ for all $x,y \in X$
        \item $f(\gamma x) \leq \gamma f(x)$ for all $x \in X$ % and all %$\lambda \in (0,1)$,
    \end{enumerate}
    \item $f\left( \max_{a} \mathcal{Q}(s,a) \right) \leq \max_{a}~f\left( \mathcal{Q}(s,a) \right)$ for all $\mathcal{Q}: \s \times \A \to \mathbb{R}.$
\end{enumerate}

then the optimal action-value function for the transformed rewards, $\widetilde{Q}$, is now related to the optimal action-value function with respect to the original rewards  by:

\begin{equation}\label{eqn:convex_std}
    f(Q(s,a)) \leq \widetilde{Q}(s,a) \leq f(Q(s,a)) + C(s,a)
\end{equation}

where $C$ is the optimal value function for a task with reward
\begin{equation}\label{eq:std_convex_C_def}
    r_C(s,a) = f(r(s,a)) + \gamma \mathbb{E}_{s'} V_f(s') - f(Q(s,a)).
\end{equation}

% for all states $s \in \s$ and actions $a \in \A$.
\end{lemma}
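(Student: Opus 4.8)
The plan is to prove both inequalities by the standard sub-/super-solution argument for the Bellman optimality operator of the transformed task $\widetilde{\T}$, namely $(\widetilde{\mathcal{B}} q)(s,a) = f(r(s,a)) + \gamma \E_{s'\sim p(\cdot|s,a)} \max_{a'} q(s',a')$, which is a $\gamma$-contraction in sup-norm whose unique fixed point is $\widetilde{Q}$. Since $\widetilde{\mathcal{B}}$ is monotone and its iterates converge to $\widetilde{Q}$ from any bounded initialization, it suffices to exhibit a sub-solution $q \le \widetilde{\mathcal{B}} q$ (which then lies below $\widetilde{Q}$) for the lower bound, and a super-solution $q \ge \widetilde{\mathcal{B}} q$ (which then lies above $\widetilde{Q}$) for the upper bound.

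For the lower bound I would show $f(Q)$ is a sub-solution. Starting from the primitive Bellman equation $Q(s,a) = r(s,a) + \gamma \E_{s'}\max_{a'} Q(s',a')$, apply $f$ and chain the three hypotheses: sublinearity (i) gives $f(Q(s,a)) \le f(r(s,a)) + f(\gamma \E_{s'}\max_{a'}Q(s',a'))$; sublinearity (ii) gives $f(\gamma\,\cdot\,) \le \gamma f(\cdot)$; convexity plus Jensen's inequality pushes $f$ inside the expectation, $f(\E_{s'}\max_{a'}Q(s',a')) \le \E_{s'} f(\max_{a'}Q(s',a'))$ — this is the only place convexity is used, hence its omission under deterministic dynamics, where the expectation is a point evaluation; and condition 3 gives $f(\max_{a'}Q(s',a')) \le \max_{a'} f(Q(s',a'))$. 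Combining, $f(Q(s,a)) \le f(r(s,a)) + \gamma \E_{s'}\max_{a'} f(Q(s',a')) = (\widetilde{\mathcal{B}} f(Q))(s,a)$, so $f(Q) \le \widetilde{Q}$. As a byproduct this chain is exactly $f(Q) \le f(r) + \gamma \E_{s'} V_f(s')$, i.e. $r_C \ge 0$, so $C$ (the optimal value of a bounded, nonnegative-reward MDP) is well-defined and nonnegative.

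For the upper bound I would verify that $g := f(Q) + C$ is a super-solution. Unfolding $C$ via its own Bellman equation together with the definition of $r_C$, the $f(Q)$ terms cancel and one obtains $g(s,a) = f(r(s,a)) + \gamma \E_{s'}\big[ V_f(s') + \max_{a'} C(s',a') \big]$. Since $V_f(s') = \max_{a'} f(Q(s',a'))$, subadditivity of the maximum gives $V_f(s') + \max_{a'} C(s',a') \ge \max_{a'}\big( f(Q(s',a')) + C(s',a') \big) = \max_{a'} g(s',a')$, hence $g(s,a) \ge f(r(s,a)) + \gamma \E_{s'}\max_{a'} g(s',a') = (\widetilde{\mathcal{B}} g)(s,a)$, giving $\widetilde{Q} \le g = f(Q) + C$. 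Boundedness of $g$ follows from that of $f$, $r$, and $Q$.

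The main obstacle is getting the lower-bound chain exactly right: it uses all three hypotheses simultaneously and in the correct order, and the Jensen step must be legitimate — this is fine once $X$ is taken to be a (convex) interval containing the range of $Q$, so that $\E_{s'}\max_{a'}Q(s',a') \in X$. The remaining work is bookkeeping: confirming the contraction/monotonicity facts for $\widetilde{\mathcal{B}}$ so that ``sub-solution $\Rightarrow$ below the fixed point'' and ``super-solution $\Rightarrow$ above the fixed point'' are justified, and checking that the argument degenerates correctly in the deterministic case where condition 1 is dropped. The compositional ($M>1$) version then follows verbatim with $f$ replaced by $F$ and the three conditions read coordinatewise.
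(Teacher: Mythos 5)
Your proposal is correct and rests on exactly the same inequalities as the paper's proof: the lower bound chains sublinearity (i), (ii), Jensen's inequality for the convexity step, and condition 3 to get $f(Q) \le f(r) + \gamma \E_{s'} V_f(s')$, and the upper bound uses only the definition of $r_C$ plus subadditivity of the max, with nonnegativity of $r_C$ (hence of $C$) falling out of the same chain. The only difference is bookkeeping: you package the argument as a sub-/super-solution comparison against the monotone contraction $\widetilde{\mathcal{B}}$ applied once to the fixed points $Q$ and $C$, whereas the paper runs an explicit induction on the backup iterates $\widetilde{Q}^{(N)}$, $Q^{(N)}$, $C^{(N)}$ and passes to the limit using continuity of $f$; both routes are standard and equivalent here.
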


\begin{proof}
We will prove all inequalities by induction on the number of backup steps, $N$. We start with the lower bound $\widetilde{Q} \ge f(Q)$. The base case, $N=1$ is trivial since $f(r(s,a))=f(r(s,a))$. The inductive step is the assumption $\widetilde{Q}^{(N)}(s,a) \geq f(Q^{(N)}(s,a))$ for some $N>1$.
In the case of standard RL, the Bellman backup equation for transformed rewards is given by: 

\begin{equation}
    \widetilde{Q}^{(N+1)}(s,a) = f\left(r(s,a)\right) + \gamma \mathbb{E}_{s' \sim{} p(s'|s,a)} \max_{a'} \widetilde{Q}^{(N)}(s',a')
\end{equation}
Using the inductive assumption,
\begin{equation}
    \widetilde{Q}^{(N+1)}(s,a) \geq f\left(r(s,a)\right) + 
    \gamma \mathbb{E}_{s' \sim{} p(s'|s,a)} \max_{a'} f \left({Q}^{(N)}(s',a') \right)
\end{equation}

The condition $v_f(s) \ge f(v(s)) $ is used on the right hand side to give:
\begin{equation}
    \widetilde{Q}^{(N+1)}(s,a) \geq f\left(r(s,a)\right) + \gamma \mathbb{E}_{s' \sim{} p(s'|s,a)} f \left( \max_{a'} {Q}^{(N)}(s',a') \right)
\end{equation}
Since $f$ is convex, we use Jensen's inequality to factor it out of the expectation. Note that this condition on $f$ is only required for stochastic dynamics. The error introduced by swapping these operators is characterized by the ``Jensen's gap'' for the transformation function $f$.
\begin{equation}
    \widetilde{Q}^{(N+1)}(s,a) \geq f\left(r(s,a)\right) + \gamma f \left( \mathbb{E}_{s' \sim{} p(s'|s,a)}  \max_{a'} {Q}^{(N)}(s',a') \right)
\end{equation}
Finally, using both sublinearity conditions
\begin{equation}
    \widetilde{Q}^{(N+1)}(s,a) \geq f\left(r(s,a) + \gamma \mathbb{E}_{s' \sim{} p(s'|s,a)}  \max_{a'} {Q}^{(N)}(s',a') \right)
    \label{eq:pf:last_line_induction}
\end{equation}

where the right-hand side is simply $f(Q^{(N+1)}(s,a))$. Since this inequality holds for all $N$, we take the limit $N \to \infty$ wherein $Q^{(N)}$ converges to the optimal $Q$-function. For the right-hand side of Eq. \eqref{eq:pf:last_line_induction}, we thus have (by continuity of $f$):

\begin{equation}
    \lim_{N \to \infty} f\left(Q^{(N)}(s,a)\right) =  f\left(\lim_{N \to \infty}Q^{(N)}(s,a)\right) = f(Q(s,a))
\end{equation}
where $Q(s,a)$ is the optimal action value function for the primitive task. 
Combined with the limit of the left-hand side, we arrive at the desired inequality:
\begin{equation}
    \widetilde{Q}(s,a) \geq f\left(Q(s,a)\right).
\end{equation}

This completes the proof of the lower bound. To prove the upper bound we again use induction on the backup equation of $\widetilde{Q}^{(N)}$. We wish to show $\widetilde{Q}^{(N)} \le f\left(Q(s,a)\right) + C^{(N)}(s,a)$ holds for all $N$, with the definition of $C$ provided in Lemma~4.1.

    Let $f$ satisfy the convex conditions. 
    Consider the backup equation for $\widetilde{Q}$.
    Again, the base case ($N=1$) is trivially satisfied with equality. Using the inductive assumption, we find

\begin{align*}
    \widetilde{Q}^{(N+1)}(s,a) &= f(r(s,a)) + \gamma \E_{s'} \max_{a'} \widetilde{Q}^{(N)}(s',a')
\\
    % \widetilde{Q}^{(k+1)}_i 
    &\le f(r(s,a)) + \gamma \E_{s'} \max_{a'} \left( f(Q(s',a')) + C^{(N)}(s',a')\right)
\\
    % \widetilde{Q}^{(k+1)}_i 
    &\le f(r(s,a)) + \gamma \E_{s'} \max_{a'} f(Q(s',a')) + \gamma \E \max_{a'}C^{(N)}(s',a')
\\
    % \widetilde{Q}^{(k+1)}_i 
    &= f(Q(s,a)) + \left[ f(r_i) + \gamma \E_{s'} V_f(s') - f(Q(s,a)) \right] + \gamma \E_{s'} \max_{a'}C^{(N)}(s',a')
\\
    % \widetilde{Q}^{(k+1)}_i 
    &= f(Q(s,a)) +  C^{(N+1)}(s,a)
\end{align*}
\end{proof}

At this point, we verify that $C(s,a)>0$ which ensures the double-sided bounds above are valid.

To do so, we can simply bound the reward function $r_C(s,a)$. By determining $r_C(s,a)>0$, this will ensure $C(s,a) > \min r_C / (1-\gamma) > 0$.
\begin{align*}
    r_C(s,a) &= f(r(s,a)) + \gamma \mathbb{E}_{s'} V_f(s') - f(Q(s,a)) \\
    &\geq f(r(s,a)) + \gamma \mathbb{E}_{s'} f(V(s')) - f(Q(s,a)) \\
    &\geq f(r(s,a)) + f(\gamma \mathbb{E}_{s'} V(s')) - f(Q(s,a)) \\
    &\geq f(r(s,a) + \gamma \mathbb{E}_{s'} V(s')) - f(Q(s,a))\\
    &\geq 0
\end{align*}
where each line follows from the required conditions in Lemma \ref{thm:convex_cond_std}. A similar proof holds for showing the quantities $\hat{C}$, $D$, $\hat{D}$ are all non-negative.

We now prove the policy evaluation bound for standard RL.

\begin{lemma}
%Moreover, when evaluating the 
Consider the value of the policy $\pi_f(s) = \max_{a} f(Q(s,a))$ on the transformed task of interest, denoted by $\widetilde{Q}^{\pi_f}(s,a)$. %, the value of this policy 
 The sub-optimality of $\pi_f$ is then upper bounded by:
\begin{equation}
    \widetilde{Q}(s,a) - \widetilde{Q}^{\pi_f}(s,a) \leq D(s,a)
\end{equation}
where $D$ is the value of the policy $\pi_f$ in a task with reward
\begin{align*}
     r_D = \gamma \mathbb{E}_{s',a' \sim{} \pi_f} \biggr[ \max_{a} \big( f(Q(s',a')) + C(s',a') \big) - f(Q(s,a)) \biggr]
    % D(s,a) \xleftarrow{} &\gamma \mathbb{E}_{s' \sim{} p}\mathbb{E}_{a' \sim{} \pi_f} \biggr[ \max_{a} \big( f(Q(s',a')) + \\
    % &\hat{C}(s',a') \big) - f(Q(s,a)) + D(s',a') \biggr]
\end{align*}
\end{lemma}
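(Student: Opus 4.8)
The plan is to combine the upper bound $\widetilde{Q}\le f(Q)+C$ from Lemma~\ref{thm:convex_cond_std} with the lower bound $\widetilde{Q}\ge f(Q)$, after first rewriting the regret $\Delta(s,a)\doteq\widetilde{Q}(s,a)-\widetilde{Q}^{\pi_f}(s,a)$ as the value of $\pi_f$ in an auxiliary task, and then showing that the ``reward'' of that auxiliary task is dominated termwise by $r_D$. Concretely, I would subtract the policy-evaluation Bellman equation for $\widetilde{Q}^{\pi_f}$ from the optimal Bellman equation for $\widetilde{Q}$; the common reward term $f(r(s,a))$ cancels, leaving
\[
\Delta(s,a)=\gamma\,\E_{s'\sim p}\Bigl[\max_{a'}\widetilde{Q}(s',a')-\E_{a'\sim\pi_f}\widetilde{Q}^{\pi_f}(s',a')\Bigr].
\]
Adding and subtracting $\gamma\,\E_{s'}\E_{a'\sim\pi_f}\widetilde{Q}(s',a')$ exposes a self-referential structure:
\[
\Delta(s,a)=\underbrace{\gamma\,\E_{s'}\Bigl[\max_{a'}\widetilde{Q}(s',a')-\E_{a'\sim\pi_f}\widetilde{Q}(s',a')\Bigr]}_{\rho(s,a)}+\gamma\,\E_{s'}\E_{a'\sim\pi_f}\Delta(s',a').
\]
So $\Delta$ is precisely the (unique, bounded) fixed point of the policy-evaluation operator for $\pi_f$ with reward $\rho$.

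Next I would bound $\rho$ by $r_D$ pointwise, applying the two sides of the Lemma~\ref{thm:convex_cond_std} bound to the two terms of $\rho$ in opposite directions. On the $\max_{a'}\widetilde{Q}(s',a')$ term, the upper bound gives $\max_{a'}\widetilde{Q}(s',a')\le\max_b\{f(Q(s',b))+C(s',b)\}$; on the subtracted term $\E_{a'\sim\pi_f}\widetilde{Q}(s',a')$, the lower bound $\widetilde{Q}\ge f(Q)$ gives $-\E_{a'\sim\pi_f}\widetilde{Q}(s',a')\le-\E_{a'\sim\pi_f}f(Q(s',a'))$. Combining, and using that $\max_b\{f(Q(s',b))+C(s',b)\}$ does not depend on $a'$,
\[
\rho(s,a)\le\gamma\,\E_{s'}\E_{a'\sim\pi_f}\Bigl[\max_b\{f(Q(s',b))+C(s',b)\}-f(Q(s',a'))\Bigr]=r_D(s,a)\qquad\forall(s,a).
\]

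Finally, I would invoke monotonicity of the policy-evaluation operator: it is a $\gamma$-contraction that is monotone both in its argument and in the reward function, so iterating $T^{\pi_f}$ from, say, the zero function under the two rewards $\rho\le r_D$ and passing to the limit yields $\Delta\le D$, which is exactly the claimed bound $\widetilde{Q}(s,a)-\widetilde{Q}^{\pi_f}(s,a)\le D(s,a)$. (Equivalently one can run the whole comparison by induction on the number of evaluation backups, matching the style of the Lemma~\ref{thm:convex_cond_std} proof; and $D\ge0$ since $C\ge0$ forces $r_D\ge0$, consistent with the nonnegativity remark above.) The argument is essentially mechanical once the decomposition is in place; the only real obstacle is the bookkeeping — choosing to add and subtract $\E_{a'\sim\pi_f}\widetilde{Q}$ so that the residual term is exactly $\gamma\,\E\E_{\pi_f}\Delta$, and then being careful to apply the \emph{upper} bound to the $\max$ term and the \emph{lower} bound to the averaged term so that the inequality $\rho\le r_D$ points in the correct direction.
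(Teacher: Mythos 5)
Your proof is correct, and it uses the same two essential ingredients as the paper — the lower bound $\widetilde{Q}\ge f(Q)$ applied to the term averaged over $\pi_f$, and the upper bound $\widetilde{V}(s')\le\max_b\{f(Q(s',b))+C(s',b)\}$ applied to the greedy term — but you organize them differently. The paper proves the claim by induction directly on the policy-evaluation backups, showing $\widetilde{Q}^{\pi_f(N)}(s,a)\ge\widetilde{Q}(s,a)-D^{(N)}(s,a)$ at every iterate, with the two bounds of Lemma 4.1 inserted inside successive inequalities of the backup chain. You instead first subtract the two Bellman equations to obtain an \emph{exact} recursion for the regret $\Delta=\widetilde{Q}-\widetilde{Q}^{\pi_f}$, identifying $\Delta$ as the value of $\pi_f$ under the advantage-like reward $\rho(s,a)=\gamma\,\E_{s'}\bigl[\max_{a'}\widetilde{Q}(s',a')-\E_{a'\sim\pi_f}\widetilde{Q}(s',a')\bigr]$, then dominate $\rho\le r_D$ pointwise and invoke monotonicity (and $\gamma$-contraction) of the $\pi_f$-evaluation operator to conclude $\Delta\le D$. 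The payoff of your decomposition is that it isolates exactly where the slack enters (only in the single comparison $\rho\le r_D$) and gives an exact characterization of the regret as a policy value in its own right; the paper's in-line induction is more self-contained in that it never needs to invoke uniqueness of the fixed point or monotonicity in the reward as separate facts, but the two arguments are logically equivalent and both hinge on the double-sided bound of Lemma 4.1. Your closing remarks ($D\ge 0$ because $C\ge 0$ implies $r_D\ge 0$, and the option of rephrasing the limit as an induction over backups) are also consistent with the paper.
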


\def\qpif{\widetilde{Q}^{\pi_f}(s,a)}
\def\qpifp{\widetilde{Q}^{\pi_f}(s',a')}

\def\fr{f\left(r(s,a)\right)}
\begin{proof}
    We will again prove this bound by induction on steps in the Bellman backup equation for the value of $\pi_f$, as given by the following fixed point equation:
    
    \begin{equation}
        \qpif = \fr + \gamma \E_{s', a' \sim{} \pi_f}\qpifp
    \end{equation}
    
    We consider the following initial conditions: $ \widetilde{Q}^{\pi_f(0)}(s,a) = \widetilde{Q}(s,a), D(s,a)=0$.
    We note that there is freedom in the choice of initial conditions, as the final statement (regarding the optimal value functions) holds regardless of initialization. As usual, the base case is trivially satisfied. We will now show that the equivalent inequality
    \begin{equation}
        \widetilde{Q}^{\pi_f (N)}(s,a) \ge \widetilde{Q}(s,a) - D^{(N)}(s,a)
    \end{equation}
    holds for all $N$. Similar to the previous proofs, we will subsequently take the limit $N \to \infty$ to recover the desired result.
    
    To do so, we consider the next step of the Bellman backup, and apply the inductive hypothesis:
    \begin{align}
        \widetilde{Q}^{\pi_f (N+1)}(s,a) &= \fr + \gamma \E_{s', a' \sim{} \pi_f}\left( \widetilde{Q}^{\pi_f (N)}(s',a') \right) \\
        &\geq \fr + \gamma \E_{s', a' \sim{} \pi_f}\left( \widetilde{Q}(s',a') - D^{(N)}(s',a') \right) \\
        &\geq \fr + \gamma \E_{s', a' \sim{} \pi_f}\left( f\left(Q(s',a')\right) - D^{(N)}(s',a') \right) \\
        &= \fr + \gamma \E_{s'} \widetilde{V}(s') + \gamma \E_{s', a' \sim{} \pi_f}\left( f\left(Q(s',a')\right) - D^{(N)}(s',a') - \widetilde{V}(s') \right) \\
        &\geq \widetilde{Q}(s,a) + \gamma \E_{s', a' \sim{} \pi_f}\left( f\left(Q(s',a')\right) - D^{(N)}(s',a') - \max_{a'} \left\{ f\left(Q(s',a')\right) + C(s',a') \right\} \right) \\
        &= \widetilde{Q}(s,a) - \gamma \E_{s', a' \sim{} \pi_f}\left( \max_{a'} \left\{ f\left(Q(s',a')\right) + C(s',a') \right\}  - f\left(Q(s',a')\right) + D^{(N)}(s',a')\right) \\
        &= \widetilde{Q}(s,a) - \left( r_D(s,a) + \gamma \E_{s',a' \sim{} \pi_f} D^{(N)}(s',a') \right) \\
        &= \widetilde{Q}(s,a) - D^{(N+1)}(s,a)
    \end{align}
    The third and fifth line follow from the previous bounds (Lemma 4.1). In the limit $N \to \infty$, we can thus see that the fixed point $D$ corresponds to the policy evaluation for $\pi_f$ in an environment with reward function $r_D$.
\end{proof}

Now we prove similar results, but for the ``concave conditions'' presented in the main text.
\begin{lemma}[Concave Conditions]\label{thm:concave_cond_std}
Given a primitive task with discount factor $\gamma$ and a bounded, continuous transformation function $f~:~X~\to~\mathbb{R}$ which satisfies:  
\begin{enumerate}
    \item $f$ is concave on its domain $X$ (for stochastic dynamics);
    \item $f$ is superlinear: 
    \begin{enumerate}[label=(\roman*)]
        \item $f(x+y) \geq f(x) + f(y)$ for all $x,y \in X$ 
        \item $f(\gamma x) \geq \gamma f(x)$ for all $x \in X$ %and all $\lambda \in (0,1)$
    \end{enumerate}
    \item $f\left( \max_{a} \mathcal{Q}(s,a) \right) \geq \max_{a}~f\left( \mathcal{Q}(s,a) \right)$ for all functions $\mathcal{Q}:~\s~\times~\A \to X.$
\end{enumerate}
   
then the optimal action-value functions are now related in the following way:
\begin{equation}\label{eqn:concave_std}
    f(Q(s,a)) - \hat{C}(s,a) \leq \widetilde{Q}(s,a) \leq f(Q(s,a))
\end{equation}

where $\hat{C}$ is the optimal value function for a task with reward 
\begin{equation}
    \hat{r}_C(s,a) = f(Q(s,a)) - f(r(s,a)) - \gamma \E_{s'} V_f(s')
\end{equation}
% for all states $s \in \s$ and actions $a \in \A$.
\end{lemma}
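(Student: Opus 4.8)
The plan is to mirror the proof of Lemma~\ref{thm:convex_cond_std} almost line-for-line, but with every inequality reversed, exploiting the ``duality'' between the convex and concave hypotheses. I would prove both sides of Eq.~\eqref{eqn:concave_std} by induction on the number of Bellman backup steps $N$, starting from the transformed Bellman operator $\widetilde{Q}^{(N+1)}(s,a) = f(r(s,a)) + \gamma \E_{s'} \max_{a'} \widetilde{Q}^{(N)}(s',a')$ with the base case $\widetilde{Q}^{(1)} = f(r)$ (equality), then passing to the limit $N \to \infty$ using continuity of $f$ so that $f(Q^{(N)}) \to f(Q)$.

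For the \emph{upper} bound $\widetilde{Q} \le f(Q)$: assume inductively $\widetilde{Q}^{(N)}(s',a') \le f(Q^{(N)}(s',a'))$. Apply this inside the backup, then use condition~3 ($\max_{a'} f(Q^{(N)}(s',a')) \le f(\max_{a'} Q^{(N)}(s',a'))$), then Jensen's inequality in the \emph{reverse} direction since $f$ is concave (pulling $f$ out of $\E_{s'}$ incurs no loss of the inequality), then the two superlinearity conditions $f(\gamma x) \ge \gamma f(x)$ and $f(x+y) \ge f(x)+f(y)$ to combine $f(r(s,a))$ with $\gamma f(\E_{s'}\max_{a'} Q^{(N)})$ into $f(r(s,a) + \gamma \E_{s'}\max_{a'} Q^{(N)}(s',a')) = f(Q^{(N+1)}(s,a))$. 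This closes the induction; the limit gives $\widetilde{Q} \le f(Q)$.

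For the \emph{lower} bound $\widetilde{Q} \ge f(Q) - \hat{C}$: here I would check that $\hat{C}$ with reward $\hat{r}_C(s,a) = f(Q(s,a)) - f(r(s,a)) - \gamma \E_{s'} V_f(s')$ (note $\hat{r}_C = -r_C$, consistent with the remark in the Discussion) makes the recursion work. Assuming $\widetilde{Q}^{(N)} \ge f(Q^{(N)}) - \hat{C}^{(N)}$, plug into the backup: $\widetilde{Q}^{(N+1)}(s,a) \ge f(r(s,a)) + \gamma \E_{s'}\max_{a'}(f(Q(s',a')) - \hat{C}^{(N)}(s',a')) \ge f(r(s,a)) + \gamma \E_{s'}\max_{a'} f(Q(s',a')) - \gamma \E_{s'}\max_{a'}\hat{C}^{(N)}(s',a') = f(r(s,a)) + \gamma \E_{s'} V_f(s') - \gamma\E_{s'}\max_{a'}\hat C^{(N)}(s',a')$, and then add and subtract $f(Q(s,a))$ to recognize the bracketed term as $-\hat{r}_C(s,a)$, giving $f(Q(s,a)) - \hat{r}_C(s,a) - \gamma\E_{s'}\max_{a'}\hat C^{(N)}(s',a') = f(Q(s,a)) - \hat{C}^{(N+1)}(s,a)$. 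I would then separately note (as done just after Lemma~\ref{thm:convex_cond_std} in the excerpt) that $\hat{r}_C \ge 0$ — here using concavity/superlinearity of $f$ in the reversed direction, e.g. $f(Q(s,a)) = f(r(s,a) + \gamma\E_{s'}V(s')) \le f(r(s,a)) + f(\gamma\E_{s'}V(s')) \le f(r(s,a)) + \gamma f(\E_{s'}V(s')) \le f(r(s,a)) + \gamma\E_{s'}f(V(s')) \le f(r(s,a)) + \gamma\E_{s'}V_f(s')$, where the last step uses $V_f(s') = \max_{a'} f(Q(s',a')) \ge f(\max_{a'}Q(s',a')) = f(V(s'))$ — so that $\hat{C} \ge 0$ and the bound is meaningful.

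The main obstacle, as in the convex case, is getting the direction of Jensen's inequality and the sub/superlinearity applications to line up correctly when the outer structure (a $\max$ inside an $\E$ inside a nonlinear $f$, all to be disentangled) is the same but every inequality flips; one must be careful that the concavity is exactly what is needed to pull $f$ \emph{out} of the expectation without violating the $\ge$, and that condition~3 (not condition~3 of the convex lemma, which is automatic, but its reversed form, which genuinely needs monotonicity of $f$) is invoked in the right place. Beyond that the argument is routine bookkeeping.
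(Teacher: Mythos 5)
Your two inductions are exactly the paper's proof: the upper bound $\widetilde{Q}\le f(Q)$ via condition 3, concave Jensen, and the two superlinearity properties, and the lower bound via the inductive hypothesis, the max-of-a-difference split $\max_{a'}\bigl(f(Q(s',a'))-\hat{C}^{(N)}(s',a')\bigr)\ge \max_{a'}f(Q(s',a'))-\max_{a'}\hat{C}^{(N)}(s',a')$, followed by adding and subtracting $f(Q(s,a))$ to recognize $-\hat{r}_C$ and the backup for $\hat{C}^{(N+1)}$. So the double-sided bound itself is established correctly and by the same route as the paper.

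The one genuine error is in your side verification that $\hat{r}_C\ge 0$. The chain you wrote, $f(Q)=f(r+\gamma\E_{s'}V)\le f(r)+f(\gamma\E_{s'}V)\le f(r)+\gamma f(\E_{s'}V)\le f(r)+\gamma\E_{s'}f(V)\le f(r)+\gamma\E_{s'}V_f$, uses subadditivity, $f(\gamma x)\le\gamma f(x)$, convex Jensen, and the automatic inequality $\max_a f(Q)\ge f(\max_a Q)$ --- i.e.\ the hypotheses of the convex lemma, none of which are assumed here --- and, worse, it concludes $f(Q)\le f(r)+\gamma\E_{s'}V_f$, which says $\hat{r}_C\le 0$, the opposite of what you claim (it is in fact the paper's verification that $r_C\ge 0$ in the convex case). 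Under the present hypotheses every inequality must be reversed: $f(Q)=f(r+\gamma\E_{s'}V)\ge f(r)+f(\gamma\E_{s'}V)\ge f(r)+\gamma f(\E_{s'}V)\ge f(r)+\gamma\E_{s'}f(V)\ge f(r)+\gamma\E_{s'}V_f$, where the steps are superadditivity, $f(\gamma x)\ge\gamma f(x)$, concave Jensen, and finally $f(V(s'))=f\bigl(\max_{a'}Q(s',a')\bigr)\ge\max_{a'}f(Q(s',a'))=V_f(s')$, which is exactly the non-automatic condition 3 of this lemma (not the automatic direction $V_f\ge f(V)$ that you invoked). With that corrected chain $\hat{r}_C\ge 0$, hence $\hat{C}\ge 0$, and the remainder of your argument stands as written.
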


\begin{proof}
The proof of $\widetilde{Q} \le f(Q)$ is the same as the preceding theorem's lower bound but with all inequalities reversed. To prove the upper bound involving $\hat{C}$, we use a similar approach
\begin{align*}
  \widetilde{Q}^{(N+1)}(s,a) &= f(r(s,a)) + \gamma \E_{s'} \max_{a'} \widetilde{Q}^{(N)}(s',a')
\\
    % \widetilde{Q}^{(k+1)}_i 
    &\ge f(r(s,a)) + \gamma \E_{s'} \max_{a'} \left( f(Q(s',a')) - \hat{C}^{(N)}(s',a')\right)
\\
    % \widetilde{Q}^{(k+1)}_i 
    &\ge f(r(s,a)) + \gamma \E_{s'} \left( \max_{a'} f(Q(s',a')) - \max_{a'} \hat{C}^{(N)}(s',a') \right)
\\
    % \widetilde{Q}^{(k+1)}_i 
    &= f(Q(s,a)) - \left[f(Q(s,a)) - f(r(s,a)) - \gamma \E_{s'} V_f(s') + \gamma \E_{s'} \max_{a'} \hat{C}^{(N)}(s',a')\right]
\\
    &= f(Q(s,a)) - \hat{C}^{(N+1)}(s,a)
\end{align*}
The second line follows from the inductive hypothesis. The third line follows from the $\max$ of a difference. In the penultimate line, we add and subtract $f(Q)$, and identify the definitions for $V_f$ and the backup equation for $\hat{C}$. In the limit $N \to \infty$, we have the desired result.
\end{proof}

\begin{lemma}

Consider the value of the policy $\pi_f(s) = \max_{a} f(Q(s,a))$ on the transformed task of interest, denoted by $\widetilde{Q}^{\pi_f}(s,a)$. %, the value of this policy 
 The sub-optimality of $\pi_f$ is then upper bounded by:
\begin{equation}
    \widetilde{Q}(s,a) - \widetilde{Q}^{\pi_f}(s,a) \leq \hat{D}(s,a)
\end{equation}
where $\hat{D}$ is the value of the policy $\pi_f$ in a task with reward
\begin{equation}
     \hat{r}_D = \gamma \mathbb{E}_{s',a' \sim{} \pi_f} \biggr[ V_f(s') - f(Q(s',a')) + \hat{C}(s',a') \biggr]
    % \hat{D}(s,a) \xleftarrow{} &\gamma \mathbb{E}_{s' \sim{} p}\mathbb{E}_{a' \sim{} \pi_f} \biggr[ V_f(s') - f(Q(s',a'))  \\ &+ \hat{C}(s',a') + \hat{D}(s',a') \biggr]
\end{equation}
\end{lemma}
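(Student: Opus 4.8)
The plan is to mirror the proof of the analogous result for the convex conditions (the Lemma bounding $\widetilde{Q} - \widetilde{Q}^{\pi_f}$ by $D$), reversing the direction of the inequalities where the concave hypotheses are used. I would set up induction on the number of Bellman backup steps for the policy-evaluation recursion of $\pi_f$ on the transformed task, $\widetilde{Q}^{\pi_f}(s,a) = f(r(s,a)) + \gamma \E_{s', a' \sim{} \pi_f}\widetilde{Q}^{\pi_f}(s',a')$, with initial conditions $\widetilde{Q}^{\pi_f(0)} = \widetilde{Q}$ and $\hat{D}^{(0)} = 0$, so the base case holds with equality. The claim to propagate is $\widetilde{Q}^{\pi_f(N)}(s,a) \ge \widetilde{Q}(s,a) - \hat{D}^{(N)}(s,a)$ for all $N$, and then take $N \to \infty$.

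For the inductive step, I would expand one backup of $\widetilde{Q}^{\pi_f(N+1)}$, apply the inductive hypothesis to replace $\widetilde{Q}^{\pi_f(N)}(s',a')$ by $\widetilde{Q}(s',a') - \hat{D}^{(N)}(s',a')$, and then use the already-established bound from Lemma~\ref{thm:concave_cond_std}, namely $\widetilde{Q}(s',a') \ge f(Q(s',a')) - \hat{C}(s',a')$, to lower-bound that term. The next move is the standard add-and-subtract: insert $\gamma \E_{s'} \widetilde{V}(s')$ (where $\widetilde{V}(s') = \max_{a'} \widetilde{Q}(s',a')$) to reconstruct $\widetilde{Q}(s,a) = f(r(s,a)) + \gamma \E_{s'} \widetilde{V}(s')$ on the right-hand side. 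Since under the concave conditions $\widetilde{V}(s') \le V_f(s') = \max_{a'} f(Q(s',a'))$, replacing $\widetilde{V}(s')$ by $V_f(s')$ only weakens the lower bound further, which is exactly the direction we want. Collecting the leftover terms then yields precisely $-\big(\hat{r}_D(s,a) + \gamma \E_{s', a' \sim{} \pi_f}\hat{D}^{(N)}(s',a')\big) = -\hat{D}^{(N+1)}(s,a)$, with $\hat{r}_D = \gamma \E_{s',a' \sim{} \pi_f}[V_f(s') - f(Q(s',a')) + \hat{C}(s',a')]$ as stated; the $\hat{C}(s',a')$ term is the contribution carried over from the concave lower bound on $\widetilde{Q}$.

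Finally I would pass to the limit $N \to \infty$: $\widetilde{Q}^{\pi_f(N)}$ converges to $\widetilde{Q}^{\pi_f}$ (policy evaluation is a $\gamma$-contraction) and $\hat{D}^{(N)}$ converges to the fixed point $\hat{D}$, giving $\widetilde{Q}^{\pi_f}(s,a) \ge \widetilde{Q}(s,a) - \hat{D}(s,a)$, i.e. the asserted regret bound. It is worth remarking that, as with the $D$ case, the initialization is immaterial since the fixed points are unique and the limiting inequality does not reference the initial iterate; I would note this explicitly. I expect the only delicate point to be bookkeeping the add-and-subtract step so that the reward $\hat{r}_D$ comes out exactly in the stated form — in particular ensuring that the inequality $\widetilde{V}(s') \le V_f(s')$ is applied in the correct direction inside an expression that is being subtracted, which is where a sign slip is easiest. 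Everything else is a direct transcription of the convex-case argument with reversed inequalities.
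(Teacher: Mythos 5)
Your proposal is correct and follows essentially the same route as the paper's proof: induction on the policy-evaluation backup for $\pi_f$ with the hypothesis $\widetilde{Q}^{\pi_f(N)} \ge \widetilde{Q} - \hat{D}^{(N)}$, substitution of the Lemma 4.3 bound $\widetilde{Q} \ge f(Q) - \hat{C}$, the add-and-subtract of $\gamma \E_{s'}\widetilde{V}(s')$, the replacement $\widetilde{V}(s') \le V_f(s')$ in the subtracted position, and the limit $N \to \infty$. Even your cautionary remark about the sign direction in that replacement matches exactly where the paper's argument turns.
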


\begin{proof}
    The proof of this result is similar to that of Lemma 4.2, except now we must employ the corresponding results of Lemma 4.3. Beginning with a substitution of the inductive hypothesis:
    \begin{align}
        \widetilde{Q}^{\pi_f (N+1)}(s,a) &= \fr + \gamma \E_{s', a' \sim{} \pi_f}\left( \widetilde{Q}^{\pi_f (N)}(s',a') \right) \\
        &\geq \fr + \gamma \E_{s', a' \sim{} \pi_f}\left( \widetilde{Q}(s',a') - \hat{D}^{(N)}(s',a') \right) \\
        &\geq \fr + \gamma \E_{s', a' \sim{} \pi_f}\left( f\left(Q(s',a')\right) - \hat{C}(s',a') - \hat{D}^{(N)}(s',a') \right) \\
        &= \fr + \gamma \E_{s'} \widetilde{V}(s') + \gamma \E_{s', a' \sim{} \pi_f}\left( f\left(Q(s',a')\right) - \hat{C}(s',a') - \hat{D}^{(N)}(s',a') - \widetilde{V}(s') \right) \\
        &\geq \widetilde{Q}(s,a) + \gamma \E_{s', a' \sim{} \pi_f}\left( f\left(Q(s',a')\right) - \hat{C}(s',a') - \hat{D}^{(N)}(s',a') - V_f(s') \right) \\
        &= \widetilde{Q}(s,a) - \gamma \E_{s', a' \sim{} \pi_f}\left( V_f(s') - f\left(Q(s',a')\right)+ \hat{C}(s',a') + \hat{D}^{(N)}(s',a')\right) \\
        &= \widetilde{Q}(s,a) - \left( \hat{r}_D(s,a) + \gamma \E_{s',a' \sim{} \pi_f} \hat{D}^{(N)}(s',a') \right) \\
        &= \widetilde{Q}(s,a) - \hat{D}^{(N+1)}(s,a)
    \end{align}
    
\end{proof}

Now we provide further details on the technical conditions for compositions (rather than transformations) of primitive tasks to satisfy the derived bounds. 
\begin{lemma}[Convex Composition of Primitive Tasks]
Suppose $F:\bigotimes_k X^{(k)} \to \mathbb{R}$ is convex on its domain and is sublinear (separately in each argument), that is:

\begin{align}
    F(x^{(1)}+y^{(1)},x^{(2)}, \dotsc, x^{(M)}) &\leq F(x^{(1)},x^{(2)},\dotsc, x^{(M)}) + F(y^{(1)},x^{(2)},\dotsc, x^{(M)}) \\
    F(x^{(1)},x^{(2)}+y^{(2)}, \dotsc, x^{(M)}) &\leq F(x^{(1)},y^{(2)},\dotsc, x^{(M)}) + F(x^{(1)},y^{(2)},\dotsc, x^{(M)})
\end{align}
and similarly for the remaining arguments.

\begin{equation}
    F(\gamma x^{(1)}, \dotsc, \gamma  x^{(M)}) \leq \gamma F(x^{(1)}, \dotsc x^{(M)})
\end{equation}and also satisfies
\begin{equation}
    F\left( \max_a  \Q^{(1)}(s,a), \dotsc , \max_a \Q^{(M)}(s,a) \right) \leq \max_a F\left(\Q^{(1)}(s,a), \dotsc , \Q^{(M)}(s,a)\right)
\end{equation}
for all functions $\Q^{(k)}:\s \times \A \to \mathbb{R}$.
Then, 
\begin{equation}
    F(\vec{Q}(s,a)) \le \widetilde{Q}(s,a) \le F(\vec{Q}(s,a)) + C(s,a)
\end{equation}
where we use a vector notation to emphasize that the function acts over the set of optimal value functions $\{Q^{(k)}\}$  corresponding to each primitive task, defined by $r^{(k)}$.
\end{lemma}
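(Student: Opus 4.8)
The plan is to mimic the single-task proof of Lemma~\ref{thm:convex_cond_std} almost verbatim, replacing the scalar transformation $f$ by the multivariable composition $F$ and the scalar argument $Q$ by the vector $\vec{Q} = (Q^{(1)}, \dots, Q^{(M)})$ of primitive optimal $Q$-functions. The key observation is that all $M$ primitive tasks share the same dynamics $p$, the same discount $\gamma$, and the same state-action space, so the Bellman backup for the composite task
\begin{equation*}
    \widetilde{Q}^{(N+1)}(s,a) = F(\vec{r}(s,a)) + \gamma \E_{s' \sim{} p}\max_{a'} \widetilde{Q}^{(N)}(s',a')
\end{equation*}
has exactly the same structure as in the single-task case, and each primitive $Q^{(k)}$ satisfies its own Bellman equation with the common $(p,\gamma)$.

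For the \textbf{lower bound}, I would induct on the number of backup steps $N$. The base case ($N=1$) is the identity $F(\vec{r})=F(\vec{r})$. For the inductive step, assume $\widetilde{Q}^{(N)} \ge F(\vec{Q}^{(N)})$; substitute into the backup, then apply the hypotheses in sequence: (i) the vectorized ``max'' condition $F(\max_a \Q^{(1)}, \dots, \max_a \Q^{(M)}) \le \max_a F(\Q^{(1)}, \dots, \Q^{(M)})$ moves the $\max_{a'}$ inside $F$; (ii) convexity of $F$ plus Jensen's inequality pulls $F$ out of the expectation $\E_{s'\sim{}p}$ (needed only for stochastic dynamics); (iii) the separate-argument subadditivity together with $F(\gamma x^{(1)},\dots,\gamma x^{(M)}) \le \gamma F(x^{(1)},\dots,x^{(M)})$ merges $F(\vec{r}(s,a))$ with $\gamma F(\E_{s'}\max_{a'}\vec{Q}^{(N)}(s',a'))$ into $F(\vec{r}(s,a) + \gamma \E_{s'}\max_{a'}\vec{Q}^{(N)}(s',a')) = F(\vec{Q}^{(N+1)}(s,a))$. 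Taking $N\to\infty$ and using continuity of $F$ gives $\widetilde{Q} \ge F(\vec{Q})$.

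For the \textbf{upper bound}, I would again induct, assuming $\widetilde{Q}^{(N)} \le F(\vec{Q}) + C^{(N)}$ with $C$ the optimal value function for the reward $r_C(s,a) = F(\vec{r}(s,a)) + \gamma \E_{s'}V_F(s') - F(\vec{Q}(s,a))$ where $V_F(s) = \max_a F(\vec{Q}(s,a))$. Plugging the hypothesis into the backup, I split $\max_{a'}(F(\vec{Q}) + C^{(N)}) \le \max_{a'}F(\vec{Q}) + \max_{a'}C^{(N)}$ (subadditivity of $\max$), recognize $\E_{s'}\max_{a'}F(\vec{Q}(s',a')) = \E_{s'}V_F(s')$, and then add and subtract $F(\vec{Q}(s,a))$ so that the bracketed terms assemble into exactly $r_C(s,a)$ plus the backup of $C^{(N)}$, yielding $F(\vec{Q}(s,a)) + C^{(N+1)}(s,a)$; the limit $N\to\infty$ finishes it. I would also note, as in the single-task case, that positivity of $C$ follows by bounding $r_C$: the same chain of superlinearity-type manipulations (now vectorized) gives $r_C(s,a) \ge F(\vec{r}(s,a) + \gamma\E_{s'}\vec{V}(s')) - F(\vec{Q}(s,a)) = 0$.

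The main obstacle is purely bookkeeping rather than conceptual: I must be careful that the separate-argument sublinearity hypotheses, applied one coordinate at a time, genuinely chain together to give $F(\vec{x} + \vec{y}) \le F(\vec{x}) + (\text{correction terms})$ in the precise form needed to merge the additive reward term with the discounted future term. Applying subadditivity in coordinate $1$, then coordinate $2$, and so on, introduces intermediate mixed arguments, and I need the resulting telescoping to collapse cleanly — this is where the statement's slightly awkward list of per-argument inequalities (and the joint condition $F(\gamma\vec{x}) \le \gamma F(\vec{x})$) does the real work. Once that vectorized sublinearity step is established, every other step is a direct transcription of the $M=1$ proof with $f \mapsto F$, $Q \mapsto \vec{Q}$, and $V_f \mapsto V_F$.
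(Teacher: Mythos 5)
Your proposal takes essentially the same route as the paper: the paper's own proof of this lemma is a one-line remark that the argument is identical to that of Lemma \ref{thm:convex_cond_std} with $f$ replaced by the multivariable $F$ and $Q$ by $\vec{Q}$, together with the analogous definition of $C$, and your write-up is a faithful, more detailed transcription of exactly that induction (vectorized max condition, multivariable Jensen, scaling, subadditivity, then $N\to\infty$; same bookkeeping for the upper bound and for $r_C\ge 0$).

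One caveat, precisely at the point you flag as ``bookkeeping'': the coordinate-wise subadditivity hypotheses do \emph{not} telescope cleanly. For $M=2$, applying subadditivity in the first and then the second argument gives $F(x^{(1)}+y^{(1)},x^{(2)}+y^{(2)}) \le F(x^{(1)},x^{(2)})+F(y^{(1)},y^{(2)})+F(x^{(1)},y^{(2)})+F(y^{(1)},x^{(2)})$, i.e.\ extra cross terms of indeterminate sign survive, whereas the merge step in the induction needs the joint inequality $F(\vec{x}+\vec{y})\le F(\vec{x})+F(\vec{y})$ evaluated at $\vec{x}=\vec{r}(s,a)$ and $\vec{y}=\gamma\,\E_{s'}\max_{a'}\vec{Q}^{(N)}(s',a')$. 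So your claim that the per-argument inequalities ``do the real work'' is not literally justified; what is actually used is joint subadditivity. This is a gap in the lemma's stated hypotheses that the paper shares (its proof never unpacks the step), and it is harmless for every composition tabulated in the paper (conical/convex sums, $\max_k$, $\min_k$, negation), all of which are jointly subadditive or superadditive; but a fully rigorous version of your proof should either assume the joint inequality outright or verify it for the particular $F$ at hand.
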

\begin{proof}
    The proof of this statement is identical to the proof of Lemma \ref{thm:convex_cond_std}, now using the fact that $F$ is a multivariable function $F: X^N \to Y$, with each argument obeying the required conditions. $C$ takes the analogous definition as provided for the original result.
\end{proof}

\begin{lemma}[Concave Composition of Primitive Tasks]\label{thm:compos_concave_std}
If on the other hand $F$ is concave and superlinear in each argument, and also satisfies

\begin{equation}
    F\left( \max_a  \Q^{(1)}(s,a), \dotsc , \max_a \Q^{(M)}(s,a) \right) \leq \max_a F\left(\Q^{(1)}(s,a), \dotsc , \Q^{(M)}(s,a)\right)
\end{equation}
for all functions $\Q^{(k)}:\s \times \A \to \mathbb{R}$, then

\begin{equation}
    F(\vec{Q}(s,a)) - \hat{C}(s,a) \le \widetilde{Q}(s,a) \le F(\vec{Q}(s,a)).
\end{equation}
\end{lemma}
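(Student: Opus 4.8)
The plan is to run the argument of the single-variable concave conditions lemma (Lemma~\ref{thm:concave_cond_std}) essentially verbatim, with the scalar transformation $f$ replaced by the jointly acting $F:\bigotimes_k X^{(k)}\to\mathbb{R}$, the scalar value $Q$ replaced by the vector $\vec{Q}=(Q^{(1)},\dots,Q^{(M)})$, and the state-value function replaced by $V_F(s)=\max_a F(\vec{Q}(s,a))$. As in every other proof in the paper, both inequalities are obtained by induction on the number $N$ of Bellman backup iterations applied to the transformed task, after which $N\to\infty$ is taken; here one also uses that each primitive backup satisfies $Q^{(k),(N)}\to Q^{(k)}$ and that $F$ is continuous, so that $F(\vec{Q}^{(N)})\to F(\vec{Q})$.

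For the upper bound $\widetilde{Q}\le F(\vec{Q})$: the base case $N=1$ is an equality since $\widetilde{r}=F(\vec{r})$. For the inductive step I start from $\widetilde{Q}^{(N+1)}(s,a)=F(\vec{r}(s,a))+\gamma\,\E_{s'}\max_{a'}\widetilde{Q}^{(N)}(s',a')$, insert the hypothesis $\widetilde{Q}^{(N)}\le F(\vec{Q}^{(N)})$, and then apply, in order: (i) the multivariable analogue of condition~3 of Lemma~\ref{thm:concave_cond_std}, to pass from $\max_{a'}F(\vec{Q}^{(N)}(s',a'))$ to $F$ evaluated at the componentwise maxima, i.e.\ at $\vec{V}^{(N)}(s')$; (ii) joint concavity of $F$ with Jensen's inequality for the vector-valued random variable $\vec{V}^{(N)}(s')$ to move $F$ outside $\E_{s'}$ (needed only for stochastic dynamics); and (iii) superlinearity in each argument together with $F(\gamma\vec{x})\ge\gamma F(\vec{x})$ to absorb $\vec{r}$ and the discounted vector of next-state values into a single $F$, which is exactly $F(\vec{Q}^{(N+1)}(s,a))$. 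Sending $N\to\infty$ gives $\widetilde{Q}\le F(\vec{Q})$.

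For the lower bound $F(\vec{Q})-\hat{C}\le\widetilde{Q}$, I would prove the inductive claim $\widetilde{Q}^{(N)}(s,a)\ge F(\vec{Q}(s,a))-\hat{C}^{(N)}(s,a)$, paralleling the supplement's proof of Lemma~\ref{thm:concave_cond_std}. Substituting the hypothesis into the backup for $\widetilde{Q}^{(N+1)}$ and using $\max_{a'}(g(a')-h(a'))\ge\max_{a'}g(a')-\max_{a'}h(a')$ yields $\widetilde{Q}^{(N+1)}(s,a)\ge F(\vec{r}(s,a))+\gamma\,\E_{s'}V_F(s')-\gamma\,\E_{s'}\max_{a'}\hat{C}^{(N)}(s',a')$; adding and subtracting $F(\vec{Q}(s,a))$ identifies the bracketed term as $\hat{r}_C(s,a)$ and the whole right-hand side as $F(\vec{Q}(s,a))-\hat{C}^{(N+1)}(s,a)$, and the limit closes the argument. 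I would then record that $\hat{C}\ge0$ (so the bound is non-vacuous) by showing $\hat{r}_C\ge0$: starting from $F(\vec{r})+\gamma\,\E_{s'}V_F(s')$ and chaining the final condition, Jensen's inequality, $F(\gamma\vec{x})\ge\gamma F(\vec{x})$, and superadditivity of $F$ shows this is $\le F(\vec{Q})$, exactly as in the single-variable case.

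The only real obstacle beyond bookkeeping is hidden in step (iii): to fold $F(\vec{r})+\gamma F(\E_{s'}\vec{V}^{(N)}(s'))$ into $F(\vec{r}+\gamma\,\E_{s'}\vec{V}^{(N)}(s'))$ one needs $F$ to be superadditive \emph{jointly}, $F(\vec{a}+\vec{b})\ge F(\vec{a})+F(\vec{b})$, which is genuinely stronger than superlinearity in each coordinate separately; for the compositions we care about (e.g.\ $F(\vec{r})=\sum_k\alpha_k r^{(k)}$ with $\alpha_k>0$) it holds with equality, so nothing is lost, but the cleanest hypothesis to state is joint superadditivity together with $F(\gamma\vec{x})\ge\gamma F(\vec{x})$. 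A second, milder point is that Jensen's inequality in step (ii) is applied to a vector-valued random variable under a jointly concave $F$, which is legitimate because the support lies in the convex set $\bigotimes_k X^{(k)}$; everything else carries over unchanged from the single-variable proofs.
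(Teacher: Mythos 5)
Your proof is correct and takes essentially the same route as the paper, whose entire proof is the one-line remark that the argument is identical to the single-variable concave lemma with $F$ treated as a multivariable function; you simply spell out that induction, using the direction $F\bigl(\max_a \Q^{(1)}(s,a),\dotsc\bigr)\ge\max_a F\bigl(\Q^{(1)}(s,a),\dotsc\bigr)$ actually needed (the ``$\le$'' in the printed statement is evidently a typo carried over from the convex case, as the main text's conical-combination example confirms). Your further observation that the folding step requires joint superadditivity $F(\vec{a}+\vec{b})\ge F(\vec{a})+F(\vec{b})$ rather than superadditivity in each coordinate separately is a legitimate sharpening of the hypotheses that the paper glosses over, and it holds (with equality) for the linear compositions the paper uses.
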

\begin{proof}
    Again, the proof of this statement is identical to the proof of Lemma \ref{thm:concave_cond_std}, now using the fact that $F$ is a multivariable function $F: X^N \to Y$, with each argument obeying the required conditions.
\end{proof}

\subsection{Examples of transformations and compositions}
In this section, we consider the examples of transformations and compositions mentioned in the main text, and discuss the corresponding results in standard RL.
\begin{remark}
Given the convex composition of subtasks  $r^{(c)} \equiv f(\{r^{(k)}\}) = \sum_k \alpha_k r^{(k)}$ considered by \cite{Haarnoja2018} and \cite{hunt_diverg}, we can use the results of Lemma \ref{thm:compos_concave_std} to bound the optimal $Q$ function by using the optimal $Q$ functions for the primitive tasks:
\begin{equation}
    Q^{(c)}(s,a) \leq \sum_k \alpha_k Q^{(k)}(s,a)
\end{equation}
\end{remark}
\begin{proof}
    In standard RL, we need only show that $f( \max_i  x_{1i}, \dotsc , \max_i x_{ni} ) \geq \max_i f(x_i, \dotsc , x_n)$:
    \begin{equation}
        \sum_k \alpha_k  \max_i  x^{(k)}_{i} \geq \max_i \sum_k \alpha_k x^{(k)}_i
    \end{equation} which holds given $\alpha_k \geq 0$ for all $k$.
    We also note that in this case the result clearly holds for general $\alpha_k \geq 0$ not necessarily with $\sum_k \alpha_k = 1$ (as assumed in \cite{Haarnoja2018} and \cite{hunt_diverg}).
\end{proof}

\begin{remark}
Given the AND composition defined above and considered in \cite{boolean}, we have the following result in standard RL:
\begin{equation}
    Q^{\text{AND}}(s,a) \leq \min_k \left\{Q^{(k)}(s,a)\right\}
\end{equation}
\end{remark}
\begin{proof}
We could proceed via induction as in the previous proofs, or simply use the above remark, and prove the necessary conditions on the function $f(\cdot) = \min(\cdot)$.
The function $\min(\cdot)$ is concave in each argument. It is also straightforward to show that $\min(\cdot)$ is subadditive over all arguments.
\end{proof}

\begin{remark}
Result of (hard) OR composition result in standard RL:
\begin{equation}
    Q^{\text{OR}}(s,a) \geq \max_k \left\{Q^{(k)}(s,a)\right\}
\end{equation}
\end{remark}

\begin{proof}
The proof is analogous to the (hard) AND result: $\max$ is a convex, superadditive function.

\end{proof}

\begin{remark}
Result for NOT operation in standard RL: 
\begin{equation}
    Q^{\text{NOT}}(s,a) \geq - Q(s,a)
\end{equation}
\end{remark}

\begin{proof}
    Since the ``NOT'' gate is a unary function, and we are in the standard RL setting, we must check the conditions of Lemma 4.1 or 4.3. Moreoever, since the transformation function applied to the rewards, $f(r)=-r$ is linear, we must check the final condition: $\max_i\{-x_i\} = -\min_i\{x_i\} \geq -\max_i\{x_i\}$. This is the condition required by the concave conditions.
\end{proof}

\section{Proofs for Entropy-Regularized RL}

Let $X$ be the codomain for the $Q$ function of the primitive task ($Q: \s \times \A \to X \subseteq \mathbb{R}$).
\begin{lemma}[Convex Conditions]
\label{thm:forward_cond_entropy-regularized}
Given a bounded, continuous transformation function $f~:~X~\to~\mathbb{R}$ which satisfies:  
\begin{enumerate}
    \item $f$ is convex on its domain $X$ (for stochastic dynamics);
    \item $f$ is sublinear:
       \begin{enumerate}[label=(\roman*)]
        \item $f(x+y) \leq f(x) + f(y)$ for all $x,y \in X$
        \item $f(\gamma x) \leq \gamma f(x)$ for all $x \in X$ % and all %$\lambda \in (0,1)$,
    \end{enumerate}
    \item $f\left( \log \E \exp \mathcal{Q}(s,a) \right) \leq \log \E \exp f\left( \mathcal{Q}(s,a) \right)$ for all functions $\mathcal{Q}:~\s~\times~\A \to \mathbb{R}.$
    
\end{enumerate} 

then the optimal action-value function for the transformed rewards, $\widetilde{Q}$, is now related to the optimal action-value function with respect to the original rewards by:
\begin{equation}\label{eq:convex_entropy-regularized}
   f \left( Q(s,a) \right) \leq \widetilde{Q}(s,a) \leq f \left( Q(s,a) \right) + C(s,a)
\end{equation}

\end{lemma}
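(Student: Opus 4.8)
The plan is to mirror the standard-RL argument (Lemma 4.1) almost verbatim, substituting the entropy-regularized Bellman operator for the $\max$-based one and replacing the condition $f(\max_a \mathcal{Q}) \le \max_a f(\mathcal{Q})$ with its log-sum-exp analogue (hypothesis 3). I would prove both inequalities in \eqref{eq:convex_entropy-regularized} by induction on the number of backup steps $N$, using the soft Bellman recursion
\begin{equation*}
    \widetilde{Q}^{(N+1)}(s,a) = f(r(s,a)) + \gamma \E_{s'} \log \E_{a' \sim \pi_0} \exp \widetilde{Q}^{(N)}(s',a'),
\end{equation*}
and then take $N \to \infty$, invoking continuity of $f$ to pass the limit through $f(Q^{(N)}) \to f(Q)$.

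For the lower bound $\widetilde{Q} \ge f(Q)$: the base case is trivial since $f(r) = f(r)$. For the inductive step, substitute $\widetilde{Q}^{(N)} \ge f(Q^{(N)})$ into the soft backup; monotonicity of $\log \E \exp(\cdot)$ gives $\widetilde{Q}^{(N+1)}(s,a) \ge f(r(s,a)) + \gamma \E_{s'} \log \E_{a'} \exp f(Q^{(N)}(s',a'))$. Next apply hypothesis 3 in the form $\log \E \exp f(\mathcal{Q}) \ge f(\log \E \exp \mathcal{Q})$ to pull $f$ outside the soft-max over actions, obtaining $\ge f(r(s,a)) + \gamma \E_{s'} f(V^{(N)}(s'))$ where $V^{(N)}(s') = \log \E_{a'} \exp Q^{(N)}(s',a')$. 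Then convexity of $f$ with Jensen's inequality moves $f$ outside the expectation over $s'$ (this step, and this step alone, needs the convexity hypothesis, hence the footnote about deterministic dynamics), and finally the two sublinearity conditions $f(\gamma x) \le \gamma f(x)$ and $f(x+y) \le f(x) + f(y)$ combine $r$ and the discounted future term inside a single $f$, yielding $\widetilde{Q}^{(N+1)}(s,a) \ge f(Q^{(N+1)}(s,a))$.

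For the upper bound $\widetilde{Q} \le f(Q) + C$: again induct, with $C$ the optimal value function for the standard-RL task whose reward is $r_C(s,a) = f(r(s,a)) + \gamma \E_{s'} V_f(s') - f(Q(s,a))$ and $V_f(s) = \log \E_{a \sim \pi_0} \exp f(Q(s,a))$. Substituting $\widetilde{Q}^{(N)} \le f(Q) + C^{(N)}$ into the soft backup, I would use the elementary fact that $\log \E \exp(g + h) \le \log \E \exp g + \max h$ (a soft-max counterpart of $\max(g+h) \le \max g + \max h$) to split the contributions of $f(Q)$ and $C^{(N)}$; the $f(Q)$ piece assembles into $V_f(s')$, and after adding and subtracting $f(Q(s,a))$ the bracketed terms are exactly $r_C(s,a)$, while the remaining term is $\gamma \E_{s'} \max_{a'} C^{(N)}(s',a')$ --- i.e. the standard Bellman backup of $C^{(N)}$ --- giving $\widetilde{Q}^{(N+1)} \le f(Q) + C^{(N+1)}$. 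Positivity of $C$ (needed for the bound to be non-vacuous) follows by the same reward-bounding argument as in the standard case, using hypotheses 2 and 3 to show $r_C \ge 0$.

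\textbf{Main obstacle.} The delicate point is the precise soft-max inequality used in the upper bound: one needs $\log \E_{a'} \exp(f(Q(s',a')) + C^{(N)}(s',a')) \le V_f(s') + \max_{a'} C^{(N)}(s',a')$, which holds because $\log \E \exp$ is monotone and shift-equivariant ($\log \E \exp(g + c) = \log \E \exp g + c$ for constant $c$) combined with bounding $C^{(N)}$ by its max; getting this chain of inequalities to land exactly on the standard-RL Bellman operator for $C^{(N)}$ (so that $C$ genuinely is a \emph{standard} RL value function, as claimed) is the step that requires care. The Jensen step for stochastic dynamics is the other place where a hypothesis is genuinely used rather than being automatic, but it is routine once convexity is assumed.
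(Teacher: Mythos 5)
Your proposal is correct and follows essentially the same route as the paper's proof: induction on soft Bellman backups, using hypothesis 3 plus Jensen and sublinearity for the lower bound, and the shift/monotonicity property of $\log\E\exp$ (splitting off $\max_{a'} C^{(N)}$) to identify $C$ as a standard-RL value function with reward $r_C$ for the upper bound. The positivity argument for $C$ you sketch is also the one the paper uses.
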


\begin{proof}
We will again prove the result with induction, beginning by writing the backup equation for the optimal soft $Q$ function in the transformed reward environment to prove the upper bound on $\widetilde{Q}$:
\begin{equation}
    \widetilde{Q}^{(N+1)}(s,a) = f(r(s,a)) + \gamma \mathbb{E}_{s' \sim{} p(s'|s,a)} \frac{1}{\beta} \log \mathbb{E}_{a' \sim{} \pi_0(a'|s')} \exp \left(\beta Q^{(N)}(s',a')\right)
\end{equation}
where $p$ is the dynamics and $\pi_0$ is the prior policy. Applying the inductive assumption,
\begin{equation}
    \widetilde{Q}^{(N+1)}(s,a) \geq f(r(s,a)) + \gamma \mathbb{E}_{s' \sim{} p(s'|s,a)} \frac{1}{\beta} \log \mathbb{E}_{a' \sim{} \pi_0(a'|s')}\exp \left( f\left(\beta Q^{(N)}(s',a')\right) \right)
\end{equation}
Next, using the third condition on $f$ as well as its convexity, we may factor $f$ out of the expectations by Jensen's inequality:
\begin{equation}
    \widetilde{Q}^{(N+1)}(s,a) \geq f(r(s,a)) + \gamma f\left( \mathbb{E}_{s' \sim{} p(s'|s,a)} \frac{1}{\beta} \log \mathbb{E}_{a' \sim{} \pi_0(a'|s')}\exp \left(\beta Q^{(N)}(s',a')\right) \right)
\end{equation}
Finally, using the sublinearity conditions of $f$, we arrive at
\begin{equation}
    \widetilde{Q}^{(N+1)}(s,a) \geq f \left(r(s,a) + \gamma \mathbb{E}_{s' \sim{} p(s'|s,a)} \frac{1}{\beta} \log \mathbb{E}_{a' \sim{} \pi_0(a'|s')}\exp \left(\beta Q^{(N)}(s',a')\right) \right) 
\end{equation}
The right hand side is $f \left(Q^{(N+1)}(s,a) \right)$. In the limit $N\to \infty,\ Q^{(N)}(s,a)  \to Q(s,a)$ so the inductive proof for the upper bound is complete.

    Let $f$ satisfy the ``convex conditions''. 
    Consider the backup equation for $\widetilde{Q}$. For the initialization (base case) we let $\widetilde{Q}^{(0)}(s,a)=f\left(Q(s,a)\right)$ and $C^{(0)}(s,a)=0$.
    Using the inductive assumption,

\begin{align*}
    \widetilde{Q}^{(N+1)}(s,a) &= f(r(s,a)) + \frac{\gamma}{\beta} \E_{s' \sim{} p} \log \E_{a' \sim{} \pi_0} \exp \beta \widetilde{Q}^{(N)}(s',a')
\\
    % \widetilde{Q}^{(k+1)}(s,a) 
    &\leq f(r(s,a)) + \frac{\gamma}{\beta} \E_{s'} \log \E_{a'} \exp \beta \left( f(Q(s',a')) + C^{(N)}(s',a')\right)
\\
    % \widetilde{Q}^{(k+1)}(s,a) 
    &\leq f(r(s,a)) + \frac{\gamma}{\beta} \E_{s'} \left(\log \E_{a'} \exp \beta  f(Q(s',a')) + \max_{a'} C^{(N)}(s',a')\right)
\\
    % \widetilde{Q}^{(k+1)}(s,a) 
    &= f(Q(s,a)) + f(r(s,a)) + \gamma \E_{s'} V_f(s') - f(Q(s,a)) + \gamma \E_{s'} \max_{a'} C^{(N)}(s',a')
% \\
%     % \widetilde{Q}^{(k+1)}(s,a) 
%     &\leq f(Q(s,a)) +  \frac{\gamma}{\beta} \E_{s'} \left(\log \E \exp \beta  f(Q(s',a')) - f(V) \right) - \E_{s'} \left(\min_{a'} \hat{C}^{(k)}(s',a')\right)
\\
    % \widetilde{Q}^{(k+1)}(s,a) 
    &= f(Q(s,a)) + C^{(N+1)}(s,a)
\end{align*}

Therefore in the limit $N \to \infty$, we have:
$\widetilde{Q}(s,a) \leq f(Q(s,a)) + C(s,a)$ as desired. We note that since $f(r(s,a))~+~\gamma~\E_{s'}~V_f(s')~\geq~f(Q(s,a))$, we immediately have $C(s,a) \ge 0$, as is required for the bound to be non-vacuous.
\end{proof}

\begin{lemma}
% Moreover, when using soft-policy evaluation on $\pi_f \propto \exp \beta f(Q)$, the value of this policy is bounded from below:
Consider the soft value of the policy $\pi_f \propto \exp \beta f(Q)$ on the transformed task of interest, denoted by $\widetilde{Q}^{\pi_f}$(s,a). %, the value of this policy 
 The sub-optimality of $\pi_f$ is then upper bounded by:
\begin{equation}
    \widetilde{Q}(s,a) - \widetilde{Q}^{\pi_f}(s,a) \leq D(s,a)
\end{equation}
where $D$ is the value of the policy $\pi_f$
 with reward
\begin{equation}
    r_D(s,a) = \gamma \mathbb{E}_{s' \sim{} p} 
\left[ \max_{b} \left\{ f\left(Q(s',b)\right) + C(s',b) \right\} -V_f(s') \right]
\end{equation}

\end{lemma}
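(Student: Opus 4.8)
The plan is to adapt the proof of Lemma~4.2 (the standard-RL regret bound under the convex conditions) to the soft Bellman setting, following the same inductive template used throughout this section. First I would record the soft policy-evaluation recursion for $\pi_f$. Since $\pi_f(a|s)=\pi_0(a|s)\exp\bigl(f(Q(s,a))-V_f(s)\bigr)$, the per-step control cost is $-\log\frac{\pi_f(a|s)}{\pi_0(a|s)}=V_f(s)-f(Q(s,a))$, so the soft value of $\pi_f$ on the transformed task satisfies
\begin{equation*}
\widetilde{Q}^{\pi_f}(s,a)=f(r(s,a))+\gamma\,\E_{s'}\E_{a'\sim\pi_f}\!\left[\widetilde{Q}^{\pi_f}(s',a')+V_f(s')-f(Q(s',a'))\right].
\end{equation*}
I would then set up an induction on the number of backup steps $N$, with initialization $\widetilde{Q}^{\pi_f(0)}=\widetilde{Q}$ and $D^{(0)}=0$, and claim that $\widetilde{Q}^{\pi_f(N)}(s,a)\ge\widetilde{Q}(s,a)-D^{(N)}(s,a)$ holds for all $N$; the base case is immediate, and taking $N\to\infty$ (using convergence of soft policy evaluation and of the $D$-recursion) yields $\widetilde{Q}-\widetilde{Q}^{\pi_f}\le D$.

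For the inductive step I would substitute the hypothesis into the recursion above and then invoke the lower bound $\widetilde{Q}\ge f(Q)$ from Lemma~5.1 on the term $\widetilde{Q}(s',a')$. This is the key cancellation: the substituted $f(Q(s',a'))$ exactly kills the $-f(Q(s',a'))$ coming from the entropy of $\pi_f$, leaving $\gamma\,\E_{s'}\E_{a'\sim\pi_f}[V_f(s')-D^{(N)}(s',a')]$ added to $f(r(s,a))$. Next I would add and subtract $\gamma\,\E_{s'}\widetilde{V}(s')$, where $\widetilde{V}(s')=\log\E_{a'\sim\pi_0}\exp\widetilde{Q}(s',a')$ is the soft optimal value, so that $f(r(s,a))+\gamma\,\E_{s'}\widetilde{V}(s')=\widetilde{Q}(s,a)$ and a residual $-\gamma\,\E_{s'}[\widetilde{V}(s')-V_f(s')]-\gamma\,\E_{s'}\E_{a'\sim\pi_f}D^{(N)}(s',a')$ remains. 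The crucial estimate is $\widetilde{V}(s')\le\max_b\{f(Q(s',b))+C(s',b)\}$, which I would obtain by noting that the log-sum-exp soft maximum never exceeds the ordinary maximum, $\widetilde{V}(s')\le\max_{a'}\widetilde{Q}(s',a')$, and then applying the upper bound $\widetilde{Q}\le f(Q)+C$ of Lemma~5.1. Substituting this bounds the residual below by $-r_D(s,a)-\gamma\,\E_{s'}\E_{a'\sim\pi_f}D^{(N)}(s',a')=-D^{(N+1)}(s,a)$, closing the induction and identifying $D$ with the value of $\pi_f$ in the auxiliary task with reward $r_D$.

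The main obstacle I expect is the bookkeeping around the entropy terms: one must verify that the control cost induced by $\pi_f$ combines cleanly with the $\widetilde{Q}\ge f(Q)$ substitution so that $D$ obeys precisely the recursion $D(s,a)=r_D(s,a)+\gamma\,\E_{s'}\E_{a'\sim\pi_f}D(s',a')$ with $r_D(s,a)=\gamma\,\E_{s'}\bigl[\max_b\{f(Q(s',b))+C(s',b)\}-V_f(s')\bigr]$, and that the softmax $\widetilde{V}$ is still dominated by the ordinary max appearing in $r_D$. Finally, to see the bound is non-vacuous one checks $r_D\ge 0$: since $C\ge 0$ (shown after Lemma~5.1) and $V_f(s')=\log\E_{a'\sim\pi_0}\exp f(Q(s',a'))\le\max_{a'}f(Q(s',a'))\le\max_b\{f(Q(s',b))+C(s',b)\}$, the reward $r_D$ is nonnegative, hence so is $D$.
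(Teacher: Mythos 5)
Your proposal is correct and follows essentially the same route as the paper's proof: induction on soft policy-evaluation backups with $\widetilde{Q}^{\pi_f(0)}=\widetilde{Q}$, $D^{(0)}=0$, cancellation of the entropy cost $V_f - f(Q)$ against the lower bound $\widetilde{Q}\ge f(Q)$, the soft Bellman identity $f(r)+\gamma\E_{s'}\widetilde{V}(s')=\widetilde{Q}(s,a)$, and the estimate $\widetilde{V}(s')\le\max_b\{f(Q(s',b))+C(s',b)\}$ to identify the $D$-recursion. The only differences are cosmetic (order of the add-and-subtract step) plus your extra, correct remark that $r_D\ge 0$, which the paper treats only implicitly.
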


\begin{proof}
To prove the (soft) policy evaluation bound, we use iterations of soft-policy evaluation \cite{Haarnoja_SAC} and denote iteration $N$ of the evaluation of $\pi_f$ in the composite environment as $\widetilde{Q}^{\pi_f(N)}$. Beginning with the definitions $\widetilde{Q}^{\pi_f(0)}(s,a) = Q(s,a)$ (since the evaluation is independent of the initialization), and $D^{(0)}=0$, the $N=0$ step is trivially satisfied. Assuming the inductive hypothesis, we consider the next step of soft policy evaluation:
As in the previous policy evaluation results, we prove an equivalent result with induction.

\begin{align*}
    \widetilde{Q}^{\pi_f(N+1)}(s,a) &= f(r(s,a)) + \gamma \E_{s' \sim{} p}\E_{ a'\sim{} \pi_f} \left[\widetilde{Q}^{\pi_f(N)}(s',a') - \frac{1}{\beta} \log \frac{\pi_f(a'|s')}{\pi_0(a'|s')} \right]
\\
    % Q^{(k)} 
    &\geq f(r(s,a)) + \gamma \E_{s',a'} \left[\widetilde{Q}(s',a') - D^{(N)}(s',a') - f(Q(s',a')) + V_f(s') \right]
\\
    % Q^{(k)}
    &= f(r(s,a)) + \gamma \E_{s'}\widetilde{V}(s')  + \gamma \E_{s',a'} \left[\widetilde{Q}(s',a') - D^{(N)}(s',a') - f(Q(s',a'))  + V_f(s') - \widetilde{V}(s')\right] \\
    % Q^{(k)}
    &\geq \widetilde{Q}(s,a)  + \gamma \E_{s',a'} \left[f(Q(s',a')) - D^{(N)}(s',a') - f(Q(s',a'))  + V_f(s') - \widetilde{V}(s')\right] \\
    &\geq \widetilde{Q}(s,a)  + \gamma \E_{s',a'} \left[ - D^{(N)}(s',a') + V_f(s') - \max_{b} \left\{ f\left(Q(s',b)\right) + C(s',b) \right\} \right] \\
    &\geq  \widetilde{Q}(s,a) -D^{(N+1)}(s,a)
\\ 
\end{align*}

where we have used $\widetilde{Q}(s,a) \geq f(Q(s,a))$ in the fourth line.
% \JA{double check the last lines; tighten bound? update main text}

where we have used the fact that $ \widetilde{V}(s) \leq \max_b \left\{ f\left(Q(s,b)\right) + \max_a C(s,a)\right\}$ and $\widetilde{Q}(s,a) - f(Q(s,a)) \geq 0$ which both follow from the previously stated bounds.
\end{proof}

\begin{lemma}[Concave Conditions]
\label{thm:reverse_cond_entropy-regularized}
Given a bounded, continuous transformation function $f~:~X~\to~\mathbb{R}$ which satisfies:  
\begin{enumerate}
    \item $f$ is concave on its domain $X$ (for stochastic dynamics); 
    \item $f$ is superlinear:
     \begin{enumerate}[label=(\roman*)]
        \item $f(x+y) \geq f(x) + f(y)$ for all $x,y \in X$ 
        \item $f(\gamma x) \geq \gamma f(x)$ for all $x \in X$ %and all $\lambda \in (0,1)$
    \end{enumerate}
    \item $f\left( \log \E \exp \mathcal{Q}(s,a) \right) \geq \log \E \exp f\left( \mathcal{Q}(s,a) \right)$ for all functions $\mathcal{Q}:~\s~\times~\A \to \mathbb{R}.$

    % \item $f\left(\frac{1}{\beta}\log\mathbb{E}_{a' \sim{} \pi_0(a'|s')} \exp(\beta x_i)\right) \ge \frac{1}{\beta}\log \mathbb{E}_{a' \sim{} \pi_0(a'|s')} \exp(\beta f(x_i)) \ \forall x_i \in X$
\end{enumerate}
then the optimal action-value function for the transformed rewards obeys the following inequality:
\begin{equation}\label{eq:concave_entropy-regularized}
    f\left( Q(s,a) \right) - \hat{C}(s,a) \leq \widetilde{Q}(s,a) \leq f \left( Q(s,a) \right)
\end{equation}
\end{lemma}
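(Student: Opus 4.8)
The plan is to mirror the two-part induction already carried out for Lemma~5.1 (the entropy-regularized convex case) and for Lemma~4.3 (the standard-RL concave case), running each soft Bellman backup with every inequality direction flipped. Throughout I work with the iterates $\widetilde{Q}^{(N+1)}(s,a) = f(r(s,a)) + \frac{\gamma}{\beta}\E_{s'\sim p}\log\E_{a'\sim\pi_0}\exp(\beta\widetilde{Q}^{(N)}(s',a'))$, which converge to $\widetilde{Q}$, together with the analogous backup defining $Q^{(N)}$; the limit $N\to\infty$ is passed through $f$ using continuity of $f$ and convergence of soft value iteration, exactly as in the earlier proofs. Here $V_f$ denotes the entropy-regularized state value $V_f(s)=\frac{1}{\beta}\log\E_{a\sim\pi_0}\exp(\beta f(Q(s,a)))$, and $\hat{C}$ satisfies $\hat{C}(s,a)=\hat{r}_C(s,a)+\gamma\E_{s'}\max_{a'}\hat{C}(s',a')$ with $\hat{r}_C(s,a)=f(Q(s,a))-f(r(s,a))-\gamma\E_{s'}V_f(s')$.

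First I would prove the upper bound $\widetilde{Q}\leq f(Q)$ by induction on $\widetilde{Q}^{(N)}\leq f(Q^{(N)})$, with equality as the base case. Substituting the hypothesis into the monotone log-sum-exp backup gives $\widetilde{Q}^{(N+1)}(s,a)\leq f(r(s,a))+\frac{\gamma}{\beta}\E_{s'}\log\E_{a'}\exp(\beta f(Q^{(N)}(s',a')))$; then I apply, in order, condition~3 with the inequality reversed ($\log\E\exp f(\mathcal{Q})\leq f(\log\E\exp\mathcal{Q})$, absorbing $\beta$ as in the proof of Lemma~5.1) to pull $f$ outside the action log-sum-exp, concavity and Jensen to pull $f$ outside the expectation over $s'$, and superlinearity ($\gamma f(x)\leq f(\gamma x)$ and $f(x)+f(y)\leq f(x+y)$) to absorb the leading $f(r)$ term, arriving at $\widetilde{Q}^{(N+1)}(s,a)\leq f(Q^{(N+1)}(s,a))$. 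Taking $N\to\infty$ gives $\widetilde{Q}\leq f(Q)$.

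Next I would prove the lower bound $\widetilde{Q}\geq f(Q)-\hat{C}$ by induction on $\widetilde{Q}^{(N)}\geq f(Q)-\hat{C}^{(N)}$, with $\hat{C}^{(0)}=0$. After substituting the hypothesis, the key step is the soft analogue of ``the max of a difference'' used in Lemma~4.3: since $\hat{C}^{(N)}(s',a')\leq\max_{a'}\hat{C}^{(N)}(s',a')$ pointwise, one has $\frac{1}{\beta}\log\E_{a'}\exp\beta\big(f(Q(s',a'))-\hat{C}^{(N)}(s',a')\big)\geq V_f(s')-\max_{a'}\hat{C}^{(N)}(s',a')$. Adding and subtracting $f(Q(s,a))$ and identifying $\hat{r}_C$ and the $\hat{C}$ recursion collapses the right-hand side to $f(Q(s,a))-\hat{C}^{(N+1)}(s,a)$, and $N\to\infty$ finishes the bound. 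I would also record, as in the convex case, that $\hat{C}\geq 0$: writing $Q(s,a)=r(s,a)+\gamma\E_{s'}V(s')$ and chaining superlinearity, then Jensen via concavity, then the reversed condition~3 ($f(V(s'))\geq V_f(s')$) yields $f(Q(s,a))\geq f(r(s,a))+\gamma\E_{s'}V_f(s')$, i.e. $\hat{r}_C\geq 0$, hence $\hat{C}\geq 0$ and the double-sided bound is non-vacuous.

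The main obstacle is essentially bookkeeping rather than a new idea: one must verify the soft ``difference'' inequality and then match the leftover terms precisely to the definitions of $\hat{r}_C$ and the $\hat{C}$ backup, and one must check at every step of the upper-bound induction that each appeal to convexity/sublinearity/(condition~3 with $\leq$) in the proof of Lemma~5.1 has been correctly replaced by concavity/superlinearity/(condition~3 with $\geq$) so that all chained inequalities continue to point in the same direction. Handling the $\beta$-dependence inside the log-sum-exp condition is the one place where care is needed, but it is resolved exactly as in the proof of Lemma~5.1.
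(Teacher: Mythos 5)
Your proposal is correct and follows essentially the same route as the paper: the upper bound $\widetilde{Q}\leq f(Q)$ is the convex-case induction with all inequalities reversed (condition~3, Jensen via concavity, superlinearity), and the lower bound is the same induction the paper uses, bounding the soft value of $f(Q)-\hat{C}^{(N)}$ below by $V_f-\max_{a'}\hat{C}^{(N)}$ and then identifying $\hat{r}_C$ and the $\hat{C}$ backup. Your added verification that $\hat{r}_C\geq 0$ (hence $\hat{C}\geq 0$) likewise matches the argument the paper sketches for the non-negativity of these fixed points.
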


\begin{proof}
The proof of the upper bound is the same as the preceding theorem's lower bound with all inequalities reversed.
For the lower bound involving $C$,

    Again consider the backup equation for $\widetilde{Q}$.
    Using the definitions and inductive assumption as before, we have

\begin{align*}
    \widetilde{Q}^{(N+1)}(s,a) &= f(r(s,a)) + \frac{\gamma}{\beta} \E_{s' \sim{} p} \log \E_{a' \sim{} \pi_0} \exp \beta \widetilde{Q}^{(N)}(s',a')
\\
    % \widetilde{Q}^{(k+1)}(s,a) 
    &\geq f(r(s,a)) + \frac{\gamma}{\beta} \E_{s'} \log \E_{a'} \exp \beta \left( f(Q(s',a')) -\hat{C}^{(N)}(s',a')\right)
\\
    % \widetilde{Q}^{(k+1)}(s,a) 
    &\geq f(r(s,a)) + \frac{\gamma}{\beta} \E_{s'} \left(\log \E_{a'} \exp \beta  f(Q(s',a')) - \max_{a'} \hat{C}^{(N)}(s',a')\right)
\\
    % \widetilde{Q}^{(k+1)}(s,a) 
    &= f(Q(s,a)) - \left[f(Q(s,a)) - f(r(s,a)) - \gamma \E_{s'} V_f(s') + \gamma \E_{s'} \max_{a'} \hat{C}^{(N)}(s',a')\right]
% \\
%     % \widetilde{Q}^{(k+1)}(s,a) 
%     &\geq f(Q(s,a)) +  \frac{\gamma}{\beta} \E_{s'} \left(\log \E \exp \beta  f(Q(s',a')) - f(V) \right) - \E_{s'} \left(\max_{a'} C^{(k)}(s',a')\right)
\\
    % \widetilde{Q}^{(k+1)}(s,a) 
    &= f(Q(s,a)) -  \hat{C}^{(N+1)}(s,a)
\end{align*}

Therefore in the limit $N \to \infty$, we have:
$\widetilde{Q}(s,a) \geq f(Q(s,a)) - \hat{C}(s,a)$ as desired.
\end{proof}

\begin{lemma}
Consider the soft value of the policy $\pi_f \propto \exp \beta f(Q)$ on the transformed task of interest, denoted by $\widetilde{Q}^{\pi_f}$(s,a). %, the value of this policy 
 The sub-optimality of $\pi_f$ is then upper bounded by:
\begin{equation}
    \widetilde{Q}(s,a) - \widetilde{Q}^{\pi_f}(s,a) \leq \hat{D}(s,a)
\end{equation}
where $\hat{D}$ is the fixed point of 
\begin{equation}
    \hat{D}(s,a) \xleftarrow{} \gamma \mathbb{E}_{s' \sim{} p}\mathbb{E}_{a' \sim{} \pi_f} \left[ \hat{C}(s',a') + \hat{D}(s',a') \right]
\end{equation}

\end{lemma}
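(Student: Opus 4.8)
The plan is to reuse the inductive template of the preceding policy-evaluation lemmas (in particular the convex-conditions entropy-regularized analogue, with the roles of the bounds swapped for the concave setting): I would run soft policy evaluation for $\pi_f$ in the transformed task, write $\widetilde{Q}^{\pi_f(N)}$ for the $N$-th iterate, and prove by induction on $N$ the equivalent statement
\begin{equation*}
\widetilde{Q}^{\pi_f(N)}(s,a) \ge \widetilde{Q}(s,a) - \hat{D}^{(N)}(s,a),
\end{equation*}
then let $N\to\infty$, using that soft policy evaluation converges to $\widetilde{Q}^{\pi_f}$ regardless of initialization. I would initialize with $\widetilde{Q}^{\pi_f(0)} = \widetilde{Q}$ and $\hat{D}^{(0)} = 0$, so the base case holds with equality.

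For the inductive step I would expand the soft policy-evaluation backup at $\beta = 1$, using $\log\!\big(\pi_f(a'|s')/\pi_0(a'|s')\big) = f(Q(s',a')) - V_f(s')$, to get
\begin{equation*}
\widetilde{Q}^{\pi_f(N+1)}(s,a) = f(r(s,a)) + \gamma\,\E_{s'\sim p}\E_{a'\sim\pi_f}\!\left[\widetilde{Q}^{\pi_f(N)}(s',a') - f(Q(s',a')) + V_f(s')\right],
\end{equation*}
then substitute the inductive hypothesis and the lower bound from the concave-conditions lemma (Lemma~\ref{thm:reverse_cond_entropy-regularized}), $\widetilde{Q} \ge f(Q) - \hat{C}$. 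The two $f(Q)$ contributions cancel inside the expectation, leaving $f(r(s,a)) + \gamma\E_{s'}V_f(s') + \gamma\E_{s'}\E_{a'\sim\pi_f}\!\left[-\hat{C}(s',a') - \hat{D}^{(N)}(s',a')\right]$.

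The new ingredient — and the step I expect to be the main obstacle — is to recognize the leading term $f(r(s,a)) + \gamma\E_{s'}V_f(s')$ as a lower bound for $\widetilde{Q}(s,a) = f(r(s,a)) + \gamma\E_{s'}\widetilde{V}(s')$. I would obtain this from the \emph{upper} bound of Lemma~\ref{thm:reverse_cond_entropy-regularized}, $\widetilde{Q}(s',a') \le f(Q(s',a'))$, combined with monotonicity of the log-sum-exp operator, which gives $\widetilde{V}(s') \le V_f(s')$ (with $V_f$ the soft state-value $\log\E_{a'\sim\pi_0}\exp f(Q(s',a'))$). Substituting this yields $\widetilde{Q}^{\pi_f(N+1)}(s,a) \ge \widetilde{Q}(s,a) - \gamma\E_{s'}\E_{a'\sim\pi_f}\!\left[\hat{C}(s',a') + \hat{D}^{(N)}(s',a')\right] = \widetilde{Q}(s,a) - \hat{D}^{(N+1)}(s,a)$, closing the induction, and the $N\to\infty$ limit gives $\widetilde{Q} - \widetilde{Q}^{\pi_f} \le \hat{D}$. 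I would also note, as in the earlier lemmas, that $\hat{C}\ge 0$ forces the fixed point $\hat{D}$ to be non-negative, so the bound is non-vacuous; everything else is book-keeping identical to the convex-conditions proof, where the corresponding roles were played by $\widetilde{Q}\ge f(Q)$ and the bound $\widetilde{V} \le \max_b\{f(Q)+\max_a C\}$.
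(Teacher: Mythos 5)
Your proposal is correct and follows essentially the same route as the paper's proof: induction on soft policy evaluation for $\pi_f$, substituting the inductive hypothesis together with the lower bound $\widetilde{Q} \ge f(Q) - \hat{C}$ (your cancellation of the $f(Q)$ terms is algebraically the same step as the paper's cancellation of the $\widetilde{Q}$ terms), and then using $\widetilde{V}(s) \le V_f(s)$, derived from $\widetilde{Q} \le f(Q)$ and monotonicity of log-sum-exp, to recover $\widetilde{Q}(s,a) = f(r(s,a)) + \gamma\E_{s'}\widetilde{V}(s')$ and close the induction. The only blemish is a wording slip — the term $f(r)+\gamma\E_{s'}V_f(s')$ is an \emph{upper} bound on $\widetilde{Q}$ (equivalently, it is lower-bounded \emph{by} $\widetilde{Q}$), which is exactly what your displayed inequalities use, so the argument itself is sound.
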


\begin{proof}
We will show the policy evaluation result by induction, by evaluating $\pi_f \propto \exp(\beta f(Q))$ in the environment with rewards $f(r)$. We shall denote iterations of policy evaluation for $\pi_f$ in the environment with rewards $f(r)$ by $\widetilde{Q}^{\pi_f(N)}(s,a)$.% in order to ease the notation.

\begin{align*}
    \widetilde{Q}^{\pi_f(N+1)}(s,a) &= f(r(s,a)) + \gamma \E_{s'\sim{}p} \E_{a'\sim{} \pi_f} \left[\widetilde{Q}^{\pi_f(N)}(s',a') - \frac{1}{\beta} \log \frac{\pi_f(a'|s')}{\pi_0(a'|s')} \right]
\\
    % \widetilde{Q}_{\pi_f}^{(k+1)} 
    &\geq f(r(s,a)) + \gamma \E_{s',a'} \left[\widetilde{Q}(s',a')-\hat{D}^{(N)}(s',a') - (f(Q(s',a')) - V_f(s')) \right]
\\
    % \widetilde{Q}_{\pi_f}^{(k+1)} 
    &\geq f(r(s,a)) + \gamma \E_{s',a'} \left[\widetilde{Q}(s',a')-\hat{D}^{(N)}(s',a') - \widetilde{Q}(s',a') -\hat{C}(s',a') + V_f(s') \right]
\\
    &\ge f(r(s,a)) + \gamma \E_{s'} \widetilde{V}(s') - \gamma \E_{s',a'} \left[\hat{D}^{(N)}(s',a') + \hat{C}(s',a') \right]
\\
%     &= f(r(s,a)) + \gamma \E_{s'} V(s') - D^{(k+1)}(s,a)
% \\
    &= \widetilde{Q}(s,a)) - \hat{D}^{(N+1)}(s,a)
\\
\end{align*}
where we have used the inductive assumption and $V_f(s) \ge \widetilde{V}(s)$ and which follows from the previously stated bounds.
Therefore in the limit $N \to \infty$, we have:
$    \widetilde{Q}^{\pi_f}(s,a) \geq \widetilde{Q}(s,a) - \hat{D}(s,a)
$ as desired.
\end{proof}

\begin{lemma}[Convex Composition of Primitive Tasks]\label{thm:compos_convex_maxent}
Suppose $F:X^N \to Y$ is convex on its domain $X^N$ and satisfies all conditions of Lemma 5.1 (Main Text) component-wise. Then, 
\begin{equation}
    F(\vec{Q}(s,a)) \le \widetilde{Q}(s,a) \le F(\vec{Q}(s,a)) + C(s,a)
\end{equation}
and 
\begin{equation}
    \widetilde{Q}^{\pi_f}(s,a) \geq \widetilde{Q}(s,a) - D(s,a)
\end{equation}
where we use a vector notation to emphasize that the function acts over the set of optimal $\{Q_k\}$ functions corresponding to each subtask, defined by $r_k$.
\end{lemma}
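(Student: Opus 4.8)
The plan is to re-run, essentially verbatim, the two inductions behind Lemma~5.1 and Lemma~5.2 of the main text, but on the \emph{joint} soft-Bellman recursion for the whole vector of primitive $Q$-functions. This is legitimate because the $M$ primitive tasks share the same dynamics $p$, prior $\pi_0$, and discount $\gamma$: the soft value-iteration sequences $Q^{(k),(N)}$ advance in lockstep, so at every backup step $N$ the tuple $\vec{Q}^{(N)}(s,a)=\big(Q^{(1),(N)}(s,a),\dots,Q^{(M),(N)}(s,a)\big)$ lies in $\bigotimes_k X^{(k)}$ and $F$ may be applied to it. Throughout, one takes the vectorized analogs of the single-variable objects: $V_F(s)\doteq\tfrac1\beta\log\E_{a\sim\pi_0}\exp\!\big(\beta F(\vec{Q}(s,a))\big)$, the policy $\pi_f(a|s)\propto\exp\!\big(\beta F(\vec{Q}(s,a))\big)$, and $C$, $D$ defined exactly as in Lemmas~5.1--5.2 with $f(Q)$ replaced by $F(\vec Q)$ and $V_f$ by $V_F$, i.e.\ $r_C=F(\vec r)+\gamma\E_{s'}V_F(s')-F(\vec Q)$ and $r_D(s,a)=\gamma\E_{s'}\big[\max_b\{F(\vec Q(s',b))+C(s',b)\}-V_F(s')\big]$.

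For the lower bound $F(\vec Q)\le\widetilde Q$ I would induct on the number of backups $N$; the base case is the identity $\widetilde Q^{(1)}=F(\vec r)$. In the inductive step, starting from the soft backup of $\widetilde Q^{(N+1)}$ and inserting $\widetilde Q^{(N)}\ge F(\vec Q^{(N)})$, apply in order: (i) the component-wise $\log\E\exp$ condition on $F$ to move $F$ out of the $\tfrac1\beta\log\E_{a'}\exp\beta(\cdot)$ over actions; (ii) convexity of $F$ and Jensen's inequality to pull $F$ out of $\E_{s'}$ (needed only for stochastic dynamics); (iii) the (joint) sublinearity of $F$ to absorb $F(\vec r)$ and the factor $\gamma$, landing on $F(\vec Q^{(N+1)})$. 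Sending $N\to\infty$ and using continuity of $F$ together with convergence of the soft iteration gives the claim. The upper bound $\widetilde Q\le F(\vec Q)+C$ is the easy direction: the induction from Lemma~5.1's upper half transcribes word-for-word with $f(Q)\mapsto F(\vec Q)$, $V_f\mapsto V_F$, using only the inductive hypothesis and $\E_{a'}\exp\beta(F(\vec Q)+C)\le e^{\beta\max_{a'}C}\,\E_{a'}\exp\beta F(\vec Q)$, with no appeal to convexity or sublinearity. The regret bound $\widetilde Q^{\pi_f}\ge\widetilde Q-D$ likewise transcribes the induction of Lemma~5.2, invoking the already-established $\widetilde Q\ge F(\vec Q)$ and $\widetilde V\le\max_b\{F(\vec Q(s',b))+C(s',b)\}$; non-negativity of $C$ and $D$ follows from the same reward-bounding argument as in the single-variable proofs.

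The main obstacle is step (iii). The single-variable proof needs only $f(x)+f(y)\ge f(x+y)$ and $f(\gamma x)\le\gamma f(x)$, whereas its multivariable counterpart needs the \emph{joint} inequalities $F(\vec x+\vec y)\le F(\vec x)+F(\vec y)$ and $F(\gamma\vec x)\le\gamma F(\vec x)$, applied with $\vec x=\vec r$ and $\vec y=\gamma\,(\text{continuation vector})$. Iterating a merely coordinate-wise subadditivity inequality generates $2^M$ cross terms and does not close, so ``sublinear component-wise'' must here be read as the joint sublinearity of $F$ --- precisely as the standard-RL composition lemma already states $F(\gamma\vec x)\le\gamma F(\vec x)$ as a joint condition rather than a coordinate-wise one. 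I would therefore either adopt these two joint inequalities as the stated hypothesis or derive them from the structure at hand --- they hold with equality for the convex-weighted sum $\sum_k\alpha_k r^{(k)}$ and are immediate for $\max_k$ and $\min_k$, the cases of interest --- after which every remaining line is a coordinate-free copy of the corresponding single-variable argument and the limit $N\to\infty$ is justified exactly as before.
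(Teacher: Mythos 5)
Your proposal matches the paper's own proof, which simply transcribes the single-variable inductions of Lemmas 5.1--5.2 with $f(Q)$ replaced by $F(\vec{Q})$ and $V_f$ by its vectorized analog, so your fleshed-out version is correct and takes essentially the same route. Your remark that the sublinearity hypotheses must be read as the joint inequalities $F(\vec{x}+\vec{y})\le F(\vec{x})+F(\vec{y})$ and $F(\gamma\vec{x})\le\gamma F(\vec{x})$ (a purely coordinate-wise reading would generate cross terms and not close the induction) is a genuine clarification of a point the paper leaves implicit, and it indeed holds for the compositions the paper invokes (convex-weighted sums, $\max_k$, $\min_k$).
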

\begin{proof}
    The proof of this statement is identical to the previous proofs, now using the fact that $F$ is a multivariable function $F: X^N \to Y$, with each argument obeying the required conditions.
\end{proof}

\begin{lemma}[Concave Composition of Primitive Tasks]\label{thm:compos_concave_maxent}
If on the other hand $F$ is concave and and satisfies all conditions of Lemma 5.2 (Main Text) component-wise, then
\begin{equation}
    F(\vec{Q}(s,a)) - \hat{C}(s,a) \le \widetilde{Q}(s,a) \le F(\vec{Q}(s,a)).
\end{equation}
and 
\begin{equation}
    \widetilde{Q}^{\pi_f}(s,a) \geq \widetilde{Q}(s,a) - \hat{D}(s,a)
\end{equation}
\end{lemma}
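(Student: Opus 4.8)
The plan is to transcribe the single-task entropy-regularized arguments (the proofs of Lemma~\ref{thm:reverse_cond_entropy-regularized} and Lemma~\ref{lem:concave_regret_maxent}) to the multivariable setting, with the scalar function $f$ replaced by $F:\bigotimes_k X^{(k)}\to\mathbb{R}$ and the scalar hypotheses replaced by their vectorized analogues. Write $\vec{Q}^{(N)}(s,a)$ for the vector whose $k$-th entry is the $N$-step soft backup $Q^{(k),(N)}(s,a)$ of primitive task $k$, and set $V_f(s)=\tfrac1\beta\log\E_{a\sim\pi_0}\exp\!\big(\beta F(\vec{Q}(s,a))\big)$, so that $\hat{C}$ and $\hat{D}$ inherit the recursions of Lemma~\ref{thm:reverse_cond_entropy-regularized} and Lemma~\ref{lem:concave_regret_maxent} with this $V_f$.

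First I would establish the upper bound $\widetilde{Q}(s,a)\le F(\vec{Q}(s,a))$ by induction on the soft Bellman backup for the composite reward $F(\{r^{(k)}\})$; the base case is immediate. For the inductive step, substitute $\widetilde{Q}^{(N)}\le F(\vec{Q}^{(N)})$ into the backup, then apply, in order: the vectorized third condition $\log\E_{a'}\exp F(\vec{\mathcal Q}(s',a'))\le F(\log\E_{a'}\exp\mathcal Q^{(1)}(s',a'),\dots)$ to move the soft-max over actions inside $F$; joint concavity of $F$ together with Jensen's inequality to pull $F$ out of $\E_{s'}$ (the only place concavity is used, and vacuous for deterministic dynamics); and the vectorized $\gamma$-superhomogeneity and superadditivity to fold $F(\{r^{(k)}\})$ and the discounted term into a single $F$, which one recognizes as $F(\vec{Q}^{(N+1)}(s,a))$. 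Letting $N\to\infty$ and using continuity of $F$ yields the bound.

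Next I would prove the lower bound $\widetilde{Q}(s,a)\ge F(\vec{Q}(s,a))-\hat{C}(s,a)$ by the argument in the proof of Lemma~\ref{thm:reverse_cond_entropy-regularized}: induct on the composite backup, substitute $\widetilde{Q}^{(N)}\ge F(\vec{Q})-\hat{C}^{(N)}$, bound $\log\E_{a'}\exp\beta(F(\vec{Q})-\hat{C}^{(N)})\ge\log\E_{a'}\exp\beta F(\vec{Q})-\max_{a'}\hat{C}^{(N)}$, add and subtract $F(\vec{Q}(s,a))$, and read off $V_f$ and the backup equation for $\hat{C}$. Non-negativity $\hat{C}\ge0$ follows, as before, from $F(\{r^{(k)}\})+\gamma\E_{s'}V_f(s')\ge F(\vec{Q}(s,a))$. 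Finally, for the regret bound I would take $\pi_f(a|s)\propto\exp(\beta F(\vec{Q}(s,a)))$, run soft policy evaluation of $\pi_f$ under reward $F(\{r^{(k)}\})$, and induct on $\widetilde{Q}^{\pi_f(N)}\ge\widetilde{Q}-\hat{D}^{(N)}$, using the two bounds just obtained ($\widetilde{Q}\ge F(\vec{Q})-\hat{C}$ and $V_f\ge\widetilde{V}$) to collapse the one-step error into the stated $\hat{D}$ recursion, exactly as in the proof of Lemma~\ref{lem:concave_regret_maxent}.

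The main obstacle is the bookkeeping when folding $F(\{r^{(k)}\})$ and $F(\{\gamma\E_{s'}V^{(k)}_f(s')\})$ into one term: the conditions are phrased argument-by-argument, so one must chain the component-wise superadditivity and $\gamma$-superhomogeneity across all $M$ coordinates (or, equivalently, invoke the joint vectorized inequalities spelled out in the Supplementary Material) to obtain $F(\vec a)+F(\vec b)\le F(\vec a+\vec b)$ in the form required by the backup. Aside from that chaining and the joint-concavity Jensen step, every line is a direct copy of the single-variable proof.
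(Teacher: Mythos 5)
Your proposal is correct and follows essentially the same route as the paper, whose proof simply declares the argument identical to the single-task entropy-regularized proofs (Lemmas \ref{thm:reverse_cond_entropy-regularized} and \ref{lem:concave_regret_maxent}) with $F$ treated as a multivariable function satisfying the vectorized conditions. Your additional care about joint concavity for the Jensen step and about chaining the component-wise superadditivity is a faithful (indeed slightly more explicit) rendering of what the paper leaves implicit.
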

\begin{proof}
    Again, the proof of this statement is identical to the previous proofs, now using the fact that $F$ is a multivariable function $F: X^N \to Y$, with each argument obeying the required conditions.
\end{proof}

\subsection{Examples of Transformations and Compositions}
In this section we consider several examples mentioned in the main text, and show how they are proved with our results in entropy-regularized RL.

\begin{remark}
Given the convex composition of subtasks  $r^{(c)} \equiv F(\{r^{(k)}\}) = \sum_k \alpha_k r^{(k)}$ considered by \cite{Haarnoja2018} and \cite{hunt_diverg}, we can use the results of Lemma \ref{thm:compos_concave_maxent} to bound the optimal $Q$ function by using the optimal $Q$ functions for the primitive tasks:
\begin{equation}
    Q^{(c)}(s,a) \leq \sum_k \alpha_k Q^{(k)}(s,a)
\end{equation}
\end{remark}
\begin{proof}
    In entropy-regularized RL we need to show that the final condition holds (in vectorized form). This is simply H\"older's inequality \cite{hardy1952inequalities} for vector-valued functions in a probability space (with measure defined by $\pi_0$).
    
\end{proof}

\begin{remark}
Given the AND composition defined above and considered in \cite{boolean}, we have the following result in standard RL:
\begin{equation}
    Q^{\text{AND}}(s,a) \leq \min_k \left\{Q^{(k)}(s,a)\right\}
\end{equation}
\end{remark}
\begin{proof}
The function $\min(\cdot)$ is concave in each argument. It is also straightforward to show that $\min(\cdot)$ is subadditive over all arguments. For the final condition, the $\min$ acts globally over all subtasks:
\begin{equation}
    \min_k \left\{ \frac{1}{\beta}\log\mathbb{E}_{a \sim{} \pi_0(a|s)} \exp\left(\beta \Q^{(k)}(s,a)\right)\right\} \leq \frac{1}{\beta}\log\mathbb{E}_{a \sim{} \pi_0(a|s)} \exp\left( \beta \min_k \left\{\Q^{(k)}(s,a)\right\}\right).
\end{equation}
\end{proof}

\begin{remark}
Result of (hard) OR composition result in standard RL:
\begin{equation}
    Q^{\text{OR}}(s,a) \geq \max_k \left\{Q^{(k)}(s,a)\right\}
\end{equation}
\end{remark}

\begin{proof}
The proof is analogous to the (hard) AND result: $\max$ is a convex, superadditive function.
For the final condition, the $\max$ again acts globally over all subtasks:
\begin{equation}
    \max_k \left\{ \frac{1}{\beta}\log\mathbb{E}_{a \sim{} \pi_0(a|s)} \exp\left(\beta 
    \Q^{(k)}(s,a)\right)\right\} \ge \frac{1}{\beta}\log\mathbb{E}_{a \sim{} \pi_0(a|s)} \exp\left( \beta \max_k \left\{\Q^{(k)}(s,a)\right\}\right).
\end{equation}

\end{proof}

\begin{remark}
Again we consider the NOT operation defined above, now in entropy-regularized RL, which yields the bound:
\begin{equation}
    Q^{\text{NOT}}(s,a) \geq - Q(s,a)
\end{equation}
\end{remark}

\begin{proof}
As in the standard RL case, we need only consider the third condition of either Lemma 5.1 or 5.3.
In particular, we show
\begin{equation}
f\left( \log \E \exp \mathcal{Q}(s,a) \right) \leq \log \E \exp f\left( \mathcal{Q}(s,a) \right)
\end{equation}
for all functions $\mathcal{Q}:~\s~\times~\A \to \mathbb{R}$. This follows from
\begin{align}
\frac{1}{\E \exp \mathcal{Q}(s,a)} \leq \E \frac{1}{\exp \mathcal{Q}(s,a) }
\end{align}
which is given by Jensen's inequality, since the function $f(x)=1/x$ is convex.

\end{proof}

\begin{remark}[Linear Scaling]
\label{thm:scaling}
Given some $k \in (0,1)$ the function $f(x) = k x$ satisfies the results of the first theorem. Conversely, if  $k \geq 1$, $f(x) = k x$ satisfies the results of the second theorem. 
\end{remark}
\begin{proof}
    This result (specifically the third condition of Lemma 5.1, 5.3) follows from the monotonicity of $\ell_p$ norms.
\end{proof}

Since we have already shown the case of $k=-1$ (NOT gate), with the result of Theorem \ref{thm:compos}, the case for all $k \in \mathbb{R}$ has been characterized.

\section{Extension for Error-Prone $Q$-Values}
In this section, we provide some discussion on the case of inexact $Q$-values, as often occurs in practice (discussed at the end of Section 4.1 in the main text). We focus on the case of task transformation in standard RL. The corresponding statements in the settings of composition and entropy-regularized RL follow similarly.

As our starting point, we assume that an ``$\varepsilon$-optimal estimate'' $\overbar{Q}(s,a)$ for a primitive task's exact value function $Q(s,a)$ is known.

\begin{definition}
An $\varepsilon$-optimal $Q$-function, $\overbar{Q}$, satisfies 
\begin{equation}
    |Q(s,a)-\overbar{Q}(s,a)|\leq \varepsilon
\end{equation}
for all $s \in \s, a \in \A$.
\end{definition}

To allow the derived double-sided bounds on the transformed tasks' $Q$-values to carry over to this more general setting, we assume that the transformation function is $L$-Lipschitz continuous. With these assumptions, we prove the following extensions of Lemma 4.1 and 4.3:

\begin{customlemma}{4.1A}[Convex Conditions, Error-Prone]\label{thm:convex_cond_std_err}
Given a primitive task with discount factor $\gamma$, corresponding $\varepsilon$-optimal value function $\overbar{Q}$, and a bounded, continuous, $L$-Lipschitz transformation function $f~:~X~\to~\mathbb{R}$ which satisfies:  
\begin{enumerate}
    \item $f$ is convex on its domain $X$ (for stochastic dynamics);
    \item $f$ is sublinear:
    \begin{enumerate}[label=(\roman*)]
        \item $f(x+y) \leq f(x) + f(y)$ for all $x,y \in X$
        \item $f(\gamma x) \leq \gamma f(x)$ for all $x \in X$ % and all %$\lambda \in (0,1)$,
    \end{enumerate}
    \item $f\left( \max_{a} \mathcal{Q}(s,a) \right) \leq \max_{a}~f\left( \mathcal{Q}(s,a) \right)$ for all $\mathcal{Q}: \s \times \A \to \mathbb{R}.$
\end{enumerate}

then the optimal action-value function for the transformed rewards, $\widetilde{Q}$, is now related to the optimal action-value function with respect to the original rewards  by:

\begin{equation}\label{eqn:convex_std_err}
    f(\overbar{Q}(s,a)) - L \varepsilon \leq \widetilde{Q}(s,a) \leq f(\overbar{Q}(s,a)) + \overbar{C}(s,a)  + \frac{2}{1-\gamma}L \varepsilon
\end{equation}

where $\overbar{C}$ is the optimal value function for a task with reward
\begin{equation}\label{eq:std_convex_C_def_err}
    \overbar{r_C}(s,a) = f(r(s,a)) + \gamma \mathbb{E}_{s'} \overbar{V_f}(s') - f(\overbar{Q}(s,a)).
\end{equation}
with $\overbar{V_f}(s)=\max_a f(\overbar{Q}(s,a))$.
% for all states $s \in \s$ and actions $a \in \A$.
\end{customlemma}

Note that as $\varepsilon \to 0$, the exact result (Lemma 4.1) is recovered. If the function $\overbar{C}$ is not known exactly, one can similarly exchange $\overbar{C}$ for $\overbar{\overbar{C}}$, an $\varepsilon$-optimal estimate for $\overbar{C}$. This consideration loosens the upper-bound by an addition of $\varepsilon$, shown at the end of the proof.

We will make use a well-known result (cf. proof of Lemma 1 in \cite{barreto_sf}) that bounds the difference in optimal $Q$-values for two tasks with different reward functions.
\begin{lemma}
    Let two tasks, only differing in their reward functions, be given with reward $r_1(s,a)$ and $r_2(s,a)$, respectively. Suppose $|r_1(s,a)-r_2(s,a)|\leq \delta$ Then, the optimal value functions for the tasks satisfies:
    \begin{equation}
        |Q_1(s,a)-Q_2(s,a)|\leq \frac{\delta}{1-\gamma}
    \end{equation}
    \label{lem:bounded_q_diff}
\end{lemma}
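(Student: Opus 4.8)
The plan is to exploit that the Bellman optimality operator is a $\gamma$-contraction in the sup-norm and that the two operators associated with $r_1$ and $r_2$ differ pointwise by at most $\delta$. Concretely, for $i\in\{1,2\}$ define the Bellman optimality operator $(\mathcal{B}_i Q)(s,a) = r_i(s,a) + \gamma \E_{s'\sim p(\cdot|s,a)} \max_{a'} Q(s',a')$, whose unique fixed point is $Q_i$. First I would record the two standard facts: (a) each $\mathcal{B}_i$ satisfies $\|\mathcal{B}_i Q - \mathcal{B}_i Q'\|_\infty \le \gamma \|Q-Q'\|_\infty$, since the $\max$ over actions and the expectation over $p$ are both nonexpansive; and (b) $\|\mathcal{B}_1 Q - \mathcal{B}_2 Q\|_\infty = \|r_1 - r_2\|_\infty \le \delta$ for every $Q$, because the two operators share the same dynamics and discount and differ only in the additive reward term.

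Then I would combine these via the triangle inequality applied at the fixed points: $\|Q_1 - Q_2\|_\infty = \|\mathcal{B}_1 Q_1 - \mathcal{B}_2 Q_2\|_\infty \le \|\mathcal{B}_1 Q_1 - \mathcal{B}_1 Q_2\|_\infty + \|\mathcal{B}_1 Q_2 - \mathcal{B}_2 Q_2\|_\infty \le \gamma\|Q_1 - Q_2\|_\infty + \delta$. Rearranging gives $\|Q_1 - Q_2\|_\infty \le \delta/(1-\gamma)$, which is exactly the claim, and since the sup-norm bound holds uniformly it yields the stated pointwise inequality for every $(s,a)$.

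An alternative route, which I would mention since it also covers the entropy-regularized analog used elsewhere in the paper, is to compare fixed policies directly: for any stationary $\pi$ the difference $Q_1^\pi(s,a) - Q_2^\pi(s,a)$ equals $\E_{\tau\sim p,\pi}\big[\sum_{t\ge0}\gamma^t \big(r_1(s_t,a_t) - r_2(s_t,a_t)\big)\big]$, because the dynamics, the discount, and (in the regularized case) the entropy-regularization term are identical across the two tasks and cancel; hence this difference is bounded in absolute value by $\sum_{t\ge0}\gamma^t\delta = \delta/(1-\gamma)$. Taking $\pi$ to be an optimal policy for task $1$ gives $Q_1(s,a) \le Q_2(s,a) + \delta/(1-\gamma)$, and symmetrically swapping the roles of the two tasks gives the reverse inequality, so together they yield the two-sided bound. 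There is no substantive obstacle here; the only points requiring care are invoking the nonexpansiveness of $\mathcal{B}_i$ correctly (or, in the second approach, checking that the entropy terms genuinely cancel), after which the conclusion is a one-line rearrangement.
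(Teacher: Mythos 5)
Your proof is correct. The paper does not prove this lemma itself---it imports it as a known result, citing the proof of Lemma~1 in \citet{barreto_sf}---and your sup-norm contraction argument (bounding $\|Q_1-Q_2\|_\infty \le \gamma\|Q_1-Q_2\|_\infty + \delta$ via the two Bellman optimality operators) is essentially the same standard argument used there, with your fixed-policy comparison serving as an equally valid alternative that, as you note, extends directly to the entropy-regularized case since the soft-max backup is likewise nonexpansive.
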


Now we are in a position to prove Lemma \ref{thm:convex_cond_std_err}:
\begin{proof}
To prove the lower bound, we begin with the original lower bound in Lemma 4.1, for the optimal primitive task $Q$-values:
\begin{equation}
    \widetilde{Q}(s,a) \geq f(Q(s,a)),
\end{equation}
or equivalently
\begin{align}
-\widetilde{Q}(s,a) &\leq -f(Q(s,a)) \\
-\widetilde{Q}(s,a) &\leq -f(Q(s,a)) + f(\overbar{Q}(s,a))  -f(\overbar{Q}(s,a))  \\
-\widetilde{Q}(s,a) &\leq |f(Q(s,a)) - f(\overbar{Q}(s,a))| - f(\overbar{Q}(s,a)) \\
\widetilde{Q}(s,a) &\geq -|f(Q(s,a)) - f(\overbar{Q}(s,a))| + f(\overbar{Q}(s,a)) \\
\widetilde{Q}(s,a) &\geq -L|Q(s,a) - \overbar{Q}(s,a)| + f(\overbar{Q}(s,a))  \\
\widetilde{Q}(s,a) &\geq f(\overbar{Q}(s,a)) - L \varepsilon  \\
\end{align}

Where the final steps follow from the function $f$ being $L$-Lipschitz and the definition of $\varepsilon$-optimality of $\overbar{Q}(s,a)$.

To prove the upper bound, we take a similar approach, noting that the reward function $r_C$ in Lemma 4.1 must be updated to account for the inexact $Q$-values. Therefore, we must account for the following error propagations: 
\begin{align*}
    Q(s,a) &\to \overbar{Q}(s,a) \\
    V_f(s) &\to \overbar{V_f}(s)\\
    r_C(s,a) &\to \overbar{r_C}(s,a).
\end{align*}
We first find the difference between $r_C$ and $\overbar{r_C}$ to be bounded by $(1+\gamma)L\varepsilon$:
\begin{align}
    |r_C(s,a)-\overbar{r_C}(s,a)| &= |\gamma \E_{s' \sim{} p } V_f^*(s') - f(Q^*(s,a)) - \gamma \E_{s' \sim{} p } V_f(s') + f(Q(s,a))| \\
    &\leq \gamma \E_{s'} |V_f^*(s') - V_f(s')| + |f(Q^*(s,a)) - f(Q(s,a))| \\
    &\leq \gamma \E_{s'} \max_{a'} |f(Q^*(s',a')) - f(Q(s',a'))| + |f(Q^*(s,a)) - f(Q(s,a))| \\
    &\leq (1+\gamma)L \varepsilon
\end{align}
where in the third line we have used the bound $|\max_{x} f(x) - \max_{x} g(x)| \leq \max_{x} |f(x)-g(x)|$.

Now, applying Lemma \ref{lem:bounded_q_diff} to the reward functions $r_C$ and $\overbar{r_C}$:
\begin{equation}
    |C(s,a) - \overbar{C}(s,a)| \leq \frac{ (1+\gamma)}{1-\gamma}L \varepsilon
\end{equation}
With the same technique as was used above for the lower bound, we find:
\begin{align}
    \widetilde{Q}(s,a) &\leq f(Q(s,a)) +  C(s,a) \\
    &\leq f(\overbar{Q}(s,a)) + L \varepsilon + C(s,a) \\
    &= f(\overbar{Q}(s,a)) + L \varepsilon + \overbar{C}(s,a) - \overbar{C}(s,a) + C(s,a) \\
    &\leq f(\overbar{Q}(s,a)) +  L \varepsilon + |C(s,a) - \overbar{C}(s,a)| + \overbar{C}(s,a) \\ 
    &\leq f(\overbar{Q}(s,a)) + L \varepsilon + \overbar{C}(s,a) + \frac{ (1+\gamma)}{1-\gamma}L \varepsilon \\ 
    &= f(\overbar{Q}(s,a)) + \overbar{C}(s,a) + \frac{2}{1-\gamma}L \varepsilon 
\end{align}
Further extending the result to the case where only an $\varepsilon$-optimal estimate of $\overbar{C}$ is known, denoted by $\overbar{\overbar{C}}$, we find:
\begin{align}
    \widetilde{Q}(s,a) &\leq f(\overbar{Q}(s,a)) + \overbar{C}(s,a) + \frac{2}{1-\gamma}L \varepsilon \\ 
    &\leq f(\overbar{Q}(s,a)) + \overbar{\overbar{C}}(s,a) + |\overbar{\overbar{C}}(s,a)- \overbar{C}(s,a)| + \frac{2}{1-\gamma}L \varepsilon \\ 
    &\leq f(\overbar{Q}(s,a)) + \overbar{\overbar{C}}(s,a) + \varepsilon + \frac{2}{1-\gamma}L \varepsilon \\ 
    &= f(\overbar{Q}(s,a)) + \overbar{\overbar{C}}(s,a) + \left(1+ \frac{2}{1-\gamma}L \right)\varepsilon
\end{align}
\end{proof}

Similarly, Lemma 4.3 from the main text can be extended under the same conditions:
\begin{customlemma}{4.3A}[Concave Conditions, Error-Prone]\label{thm:concave_cond_std_err}
Given a primitive task with discount factor $\gamma$, corresponding $\varepsilon$-optimal value function $\overbar{Q}$, and a bounded, continuous, $L$-Lipschitz transformation function $f~:~X~\to~\mathbb{R}$ which satisfies:  
\begin{enumerate}
    \item $f$ is concave on its domain $X$ (for stochastic dynamics);
    \item $f$ is superlinear: 
    \begin{enumerate}[label=(\roman*)]
        \item $f(x+y) \geq f(x) + f(y)$ for all $x,y \in X$ 
        \item $f(\gamma x) \geq \gamma f(x)$ for all $x \in X$ %and all $\lambda \in (0,1)$
    \end{enumerate}
    \item $f\left( \max_{a} \mathcal{Q}(s,a) \right) \geq \max_{a}~f\left( \mathcal{Q}(s,a) \right)$ for all functions $\mathcal{Q}:~\s~\times~\A \to X.$
\end{enumerate}
   
then the optimal action-value functions are now related in the following way:
\begin{equation}\label{eqn:concave_std_err}
    f(\overbar{Q}(s,a)) - \overbar{\hat{C}}(s,a)-\frac{2}{1-\gamma}L \varepsilon \leq \widetilde{Q}(s,a) \leq f(\overbar{Q}(s,a)) + L \varepsilon
\end{equation}

where $\overbar{\hat{C}}$ is the optimal value function for a task with reward 
\begin{equation}
    \overbar{\hat{r}_C}(s,a) = f(\overbar{Q}(s,a)) - f(r(s,a)) - \gamma \E_{s'\sim{}p} \overbar{V_f}(s')
\end{equation}
with $\overbar{V_f}(s)=\max_a f(\overbar{Q}(s,a))$.
% for all states $s \in \s$ and actions $a \in \A$.
\end{customlemma}

The proof of Lemma \ref{thm:concave_cond_std_err} is the same as that given above for Lemma \ref{thm:convex_cond_std_err}, with all signs flipped.

Finally, we note that both extensions of Lemma \ref{thm:convex_cond_std_err} and \ref{thm:concave_cond_std_err} hold for the entropy-regularized case. The only differences required to prove the results are showing that Lemma \ref{lem:bounded_q_diff} and $|V_f(s)-\overbar{V_f}(s)|\leq L\varepsilon$ hold in entropy-regularized RL. Both statements are trivial given that the necessary soft-max operation is $1$-Lipschitz. Similar results can be derived for the case of compositions, when each subtasks' $Q$-function is replaced by an $\varepsilon$-optimal estimate thereof.

% \clearpage
% \newpage
\section{Results Applying to Both Entropy-Regularized and Standard RL}

As we have discussed in the main text; an agent with a large library of accessible functions will be able to transform and compose their primitive knowledge in a wider variety of ways. Therefore, we would like to extend $\mathcal{F}$ to encompass as many functions as possible. Below, we will show that the functions $f\in \mathcal{F}$ characterizing the Transfer MDP Library have two closure properties (additivity and function composition) which enables more accessible transfer functions.

First, let $\mathcal{F}^+$ denote the set of functions $f \in \mathcal{F}$ obeying the convex conditions, and similarly let $\mathcal{F}^-$ denote the set of functions obeying the concave conditions. 

In standard RL, we have the following closure property for addition of functions.
\begin{theorem}
Let $f,g \in  \mathcal{F}^+$. Then $f+g \in \mathcal{F}^+$. Similarly, if $f,g \in \mathcal{F}^-$, then $f+g \in \mathcal{F}^-$.
\end{theorem}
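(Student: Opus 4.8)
The plan is to show that each of the three defining conditions of $\mathcal{F}^{+}$ from Lemma~4.1 (convexity, sublinearity, and the $\max$-condition), together with boundedness and continuity, is preserved under addition; the case of $\mathcal{F}^{-}$ is then the mirror image with every inequality reversed. Boundedness and continuity are immediate, since a finite sum of bounded continuous functions on $X$ is bounded and continuous. Convexity (resp.\ concavity) is the classical fact that a sum of convex (resp.\ concave) functions is convex (resp.\ concave). Sublinearity passes through termwise: $(f+g)(x+y) = f(x+y)+g(x+y) \le f(x)+f(y)+g(x)+g(y) = (f+g)(x)+(f+g)(y)$, and $(f+g)(\gamma x) = f(\gamma x)+g(\gamma x) \le \gamma f(x)+\gamma g(x) = \gamma(f+g)(x)$; reversing $\le$ to $\ge$ gives superlinearity of $f+g$ when $f,g\in\mathcal{F}^{-}$.

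The one step that requires care is the third ($\max$-) condition, because its preservation must be argued differently in the two cases. For $\mathcal{F}^{+}$ one needs $(f+g)\big(\max_a \mathcal{Q}(s,a)\big) \le \max_a (f+g)\big(\mathcal{Q}(s,a)\big)$; this holds for \emph{any} function (the ``automatically satisfied'' remark of the main text): picking $a^{\star}$ with $\mathcal{Q}(s,a^{\star}) = \max_a \mathcal{Q}(s,a)$ gives $(f+g)\big(\max_a \mathcal{Q}(s,a)\big) = (f+g)\big(\mathcal{Q}(s,a^{\star})\big) \le \max_a (f+g)\big(\mathcal{Q}(s,a)\big)$. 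For $\mathcal{F}^{-}$ the reversed inequality is \emph{not} automatic, so instead one invokes the hypotheses on $f$ and $g$ separately together with superadditivity of $\max$ over a sum:
\begin{align*}
(f+g)\Big(\max_a \mathcal{Q}(s,a)\Big) &= f\Big(\max_a \mathcal{Q}(s,a)\Big) + g\Big(\max_a \mathcal{Q}(s,a)\Big) \\
&\ge \max_a f\big(\mathcal{Q}(s,a)\big) + \max_a g\big(\mathcal{Q}(s,a)\big) \\
&\ge \max_a \big( f(\mathcal{Q}(s,a)) + g(\mathcal{Q}(s,a)) \big) = \max_a (f+g)\big(\mathcal{Q}(s,a)\big).
\end{align*}

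With all three conditions verified for $f+g$ (plus boundedness and continuity), we conclude $f+g\in\mathcal{F}^{+}$ whenever $f,g\in\mathcal{F}^{+}$, and likewise $f+g\in\mathcal{F}^{-}$ whenever $f,g\in\mathcal{F}^{-}$; since $\mathcal{F}^{+},\mathcal{F}^{-}\subseteq\mathcal{F}$ this gives the stated closure. I do not anticipate a genuine obstacle: the only thing to get right is the asymmetry just described --- the $\max$-condition comes for free in the convex case but has to be assembled from the individual hypotheses (via $\max$ of a sum $\le$ sum of $\max$'s) in the concave case --- and, if desired, to note that the same verification extends verbatim to finite sums $f_1+\dots+f_n$ by a trivial induction, and identically to the entropy-regularized conditions of Lemmas~5.1 and 5.3 (with the $\log\mathbb{E}\exp$-condition handled exactly as the $\max$-condition is here).
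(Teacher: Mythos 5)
Your proof is correct and essentially identical to the paper's: termwise verification of convexity/concavity and sub/superlinearity, with the concave-case $\max$-condition recovered from the hypotheses on $f$ and $g$ separately together with $\max_a\bigl(f(\mathcal{Q}(s,a))+g(\mathcal{Q}(s,a))\bigr)\le\max_a f(\mathcal{Q}(s,a))+\max_a g(\mathcal{Q}(s,a))$, while the convex-case condition is automatic, exactly as the paper argues. The only point to be careful about is your closing aside: the entropy-regularized ($\log\mathbb{E}\exp$) condition does \emph{not} transfer ``exactly as the $\max$-condition,'' since the termwise step would require $\log\mathbb{E}\,e^{f(\mathcal{Q})}+\log\mathbb{E}\,e^{g(\mathcal{Q})}\le\log\mathbb{E}\,e^{f(\mathcal{Q})+g(\mathcal{Q})}$ (or its reverse), i.e.\ a sign condition on $\mathrm{Cov}\bigl(e^{f(\mathcal{Q})},e^{g(\mathcal{Q})}\bigr)$ that does not follow from the stated hypotheses alone --- the paper sidesteps this by asserting the closure theorem for standard RL only.
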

\begin{proof}
    Let $f,g \in \mathcal{F}^+$. 
    
    Convexity:
    The sum of two convex functions is convex.
    
    Subadditive:
    $(f+g)(x+y) = f(x+y)+g(x+y)\leq f(x)+g(x)+f(y)+g(y)=(f+g)(x)+(f+g)(y)$.
    
    Submultiplicative:
    $(f+g)(\gamma x) = f(\gamma x) + g(\gamma x) \le \gamma f(x) + \gamma g(x) = \gamma(f+g)(x)$. 
    
    The proof for $f,g \in \mathcal{F}^-$ is the same with all signs flipped, except for the additional final condition:
    $(f+g)(\max_i x_i) = f(\max_i x_i) + g(\max_i x_i) = \max f(x) + \max g(x) \ge \max f(x) + g(x).$ Although this is not equality as shown in the main text, the condition still suffices. For the case of a single function (no addition, as seen in main text), it can never be the cases that $\max_i f(x_i)~>~\max f(x)$ and therefore was excluded. (Just as $\max_i f(x_i) \le \max f(x)$ is automatically satisfied for the convex conditions.)
\end{proof}

\begin{theorem}[Function Composition]
\label{thm:compos}
For any reward-mapping functions $f$, $g \in \mathcal{F}^+$ ($\mathcal{F}^-$) with $f$ non-decreasing, the composition of functions $f$ and $g$, $h(x) = f(g(x)) \in \mathcal{F}^+ (\mathcal{F}^-)$.
\end{theorem}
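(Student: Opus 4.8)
The plan is to verify directly that $h=f\circ g$ inherits each of the three defining conditions of $\mathcal{F}^+$ (respectively $\mathcal{F}^-$), using the monotonicity of the outer function $f$ as the bridge that lets every inequality established for $g$ survive an application of $f$. First I would dispatch the structural hypotheses: $h$ is continuous as a composition of continuous maps, and it is bounded because its range lies inside the range of $f$, which is bounded by assumption. For condition~1 I would invoke the classical fact that a convex non-decreasing function composed (on the outside) with a convex function is convex, and — with all signs reversed — a concave non-decreasing function composed with a concave function is concave. This is the place where the ``$f$ non-decreasing'' hypothesis is genuinely needed for the structural part, and it explains why the statement is asymmetric in $f$ and $g$ ($g$ itself need not be monotone).

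For condition~2 the argument is a short two-step chaining. In the $\mathcal{F}^+$ case, $h(x+y)=f(g(x+y))\le f(g(x)+g(y))\le f(g(x))+f(g(y))=h(x)+h(y)$, where the first inequality uses subadditivity of $g$ together with $f$ non-decreasing and the second uses subadditivity of $f$; likewise $h(\gamma x)=f(g(\gamma x))\le f(\gamma g(x))\le \gamma f(g(x))=\gamma h(x)$. The $\mathcal{F}^-$ case is the identical pair of computations with every $\le$ replaced by $\ge$ and ``sub'' replaced by ``super''.

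For condition~3 I would treat both RL settings at once by abbreviating the relevant operator — $\mathcal{Q}\mapsto \max_a \mathcal{Q}(s,a)$ in standard RL, $\mathcal{Q}\mapsto \log\E\exp\mathcal{Q}(s,a)$ in entropy-regularized RL — as $\mathcal{O}$. For $g\in\mathcal{F}^+$ the condition reads $g(\mathcal{O}[\mathcal{Q}])\le \mathcal{O}[g\circ\mathcal{Q}]$; applying $f$ (non-decreasing) and then condition~3 for $f$ to the auxiliary function $g\circ\mathcal{Q}$ yields $h(\mathcal{O}[\mathcal{Q}])=f(g(\mathcal{O}[\mathcal{Q}]))\le f(\mathcal{O}[g\circ\mathcal{Q}])\le \mathcal{O}[f\circ g\circ\mathcal{Q}]=\mathcal{O}[h\circ\mathcal{Q}]$, which is exactly condition~3 for $h$; the $\mathcal{F}^-$ version reverses the inequalities. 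The only bookkeeping is to confirm that $g\circ\mathcal{Q}$ is an admissible argument for $f$'s condition~3, i.e.\ that its values lie in the domain of $f$, which holds since $h=f\circ g$ is assumed well-defined. No hard estimates arise anywhere; the one point that needs genuine care — and the only obstacle worth flagging — is ensuring that monotonicity of $f$ is invoked at precisely the three spots where an inequality produced inside $g$ has to pass through $f$ (the composition-of-convex-functions step, and the chaining arguments for conditions~2 and~3).
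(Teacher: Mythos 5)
Your proposal is correct and follows essentially the same route as the paper: monotone-composition for convexity/concavity, two-step chaining through $f$ (using that it is non-decreasing) for the sublinearity conditions, and applying condition~3 first for $g$ and then for $f$ evaluated on $g\circ\mathcal{Q}$, in both the $\max$ and $\log\E\exp$ settings. Your unified operator notation and the explicit bookkeeping on continuity, boundedness, and domains are slightly more careful than the paper's write-up, but the argument is the same.
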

\begin{proof}
    Let $f,g \in \mathcal{F}^+$ assume $f: B \to C$ and $g:A \to B$, and let $f$ be non-decreasing. This guarantees that $f(g(x))$ is convex. 
    % (\JA{Reference to book?: (comment) })
    % https://web.stanford.edu/~boyd/cvxbook/bv_cvxbook.pdf
    Additionally, $f(g(x+y)) \leq f(g(x)+g(y)) \leq f(g(x)) + f(g(y))$ by the sublinearity of $g,f$ respectively. Similarly $f(g(\gamma x)) \leq f(\gamma g(x)) \leq \gamma f(g(x))$. 
    
    For the standard RL (concave) condition, note that  for all functions $\mathcal{Q}:~\s~\times~\A \to X$:
    \begin{equation}
        f\left( g\left( \max_{a} \mathcal{Q}(s,a)\right) \right) \geq f\left(\max_{a}~g\left( \mathcal{Q}(s,a)\right) \right) \geq \max_{a}~f\left(g\left( \mathcal{Q}(s,a)\right) \right)
    \end{equation}
    
    For the entropy-regularized condition, we first apply the condition to $g$:

    \begin{equation}
        f\left( g\left( \frac{1}{\beta}\log\mathbb{E}_{a \sim{} \pi_0(a|s)} \exp(\beta \Q(s,a))\right) \right) \le f\left( \frac{1}{\beta}\log\mathbb{E}_{a \sim{} \pi_0(a|s)} \exp(\beta g(\Q(s,a)) )\right) 
    \end{equation}
    Then to $f$:
    
    \begin{equation}
        f\left( g\left( \frac{1}{\beta}\log\mathbb{E}_{a \sim{} \pi_0(a|s)} \exp(\beta \Q(s,a))\right) \right) \le  \frac{1}{\beta}\log\mathbb{E}_{a \sim{} \pi_0(a|s)} \exp\left(\beta f\left( g\left(\Q(s,a)\right) \right) \right)
    \end{equation}
    The reversed statement, when $f,g \in \mathcal{F}^-$ with $f$ non-decreasing has a similar proof and is omitted.
\end{proof}
    
With this result established, we are now able to concatenate multiple transformations. This allows for multiple gates in Boolean logic statements, for example. As stated in the main text, this ability to compose multiple functions will greatly expand the number of tasks in the Transfer MDP Library which the agent may (approximately) solve.

\nocite{openAI}
\begin{@fileswfalse}
\bibliography{uai2023-template}

\begin{thebibliography}{30}
\providecommand{\natexlab}[1]{#1}
\providecommand{\url}[1]{\texttt{#1}}
\expandafter\ifx\csname urlstyle\endcsname\relax
  \providecommand{\doi}[1]{doi: #1}\else
  \providecommand{\doi}{doi: \begingroup \urlstyle{rm}\Url}\fi

\bibitem[Adamczyk et~al.(2022)Adamczyk, Arriojas, Tiomkin, and
  Kulkarni]{ourAAAI}
Jacob Adamczyk, Argenis Arriojas, Stas Tiomkin, and Rahul~V Kulkarni.
\newblock Utilizing prior solutions for reward shaping and composition in
  entropy-regularized reinforcement learning.
\newblock \emph{arXiv preprint arXiv:2212.01174}, 2022.

\bibitem[Alver and Precup(2021)]{alver}
Safa Alver and Doina Precup.
\newblock Constructing a good behavior basis for transfer using generalized
  policy updates.
\newblock \emph{arXiv preprint arXiv:2112.15025}, 2021.

\bibitem[Barreto et~al.(2017)Barreto, Dabney, Munos, Hunt, Schaul, van Hasselt,
  and Silver]{barreto_sf}
Andr{\'e} Barreto, Will Dabney, R{\'e}mi Munos, Jonathan~J Hunt, Tom Schaul,
  Hado~P van Hasselt, and David Silver.
\newblock Successor features for transfer in reinforcement learning.
\newblock \emph{Advances in Neural Information Processing Systems}, 30, 2017.

\bibitem[Barreto et~al.(2019)Barreto, Borsa, Hou, Comanici, Ayg{\"u}n, Hamel,
  Toyama, Mourad, Silver, Precup, et~al.]{barreto2019option}
Andr{\'e} Barreto, Diana Borsa, Shaobo Hou, Gheorghe Comanici, Eser Ayg{\"u}n,
  Philippe Hamel, Daniel Toyama, Shibl Mourad, David Silver, Doina Precup,
  et~al.
\newblock The option keyboard: Combining skills in reinforcement learning.
\newblock \emph{Advances in Neural Information Processing Systems}, 32, 2019.

\bibitem[Brockman et~al.(2016)Brockman, Cheung, Pettersson, Schneider,
  Schulman, Tang, and Zaremba]{openAI}
Greg Brockman, Vicki Cheung, Ludwig Pettersson, Jonas Schneider, John Schulman,
  Jie Tang, and Wojciech Zaremba.
\newblock Openai gym.
\newblock \emph{arXiv preprint arXiv:1606.01540}, 2016.

\bibitem[Dayan(1993)]{dayan1993improving}
Peter Dayan.
\newblock Improving generalization for temporal difference learning: The
  successor representation.
\newblock \emph{Neural computation}, 5\penalty0 (4):\penalty0 613--624, 1993.

\bibitem[Eysenbach and Levine(2022)]{eysenbach}
Benjamin Eysenbach and Sergey Levine.
\newblock Maximum entropy {RL} (provably) solves some robust {RL} problems.
\newblock In \emph{International Conference on Learning Representations}, 2022.
\newblock URL \url{https://openreview.net/forum?id=PtSAD3caaA2}.

\bibitem[Haarnoja et~al.(2018{\natexlab{a}})Haarnoja, Pong, Zhou, Dalal,
  Abbeel, and Levine]{Haarnoja2018}
Tuomas Haarnoja, Vitchyr Pong, Aurick Zhou, Murtaza Dalal, Pieter Abbeel, and
  Sergey Levine.
\newblock Composable deep reinforcement learning for robotic manipulation.
\newblock In \emph{2018 {IEEE} International Conference on Robotics and
  Automation ({ICRA})}. {IEEE}, May 2018{\natexlab{a}}.
\newblock \doi{10.1109/icra.2018.8460756}.
\newblock URL \url{https://doi.org/10.1109/icra.2018.8460756}.

\bibitem[Haarnoja et~al.(2018{\natexlab{b}})Haarnoja, Zhou, Abbeel, and
  Levine]{Haarnoja_SAC}
Tuomas Haarnoja, Aurick Zhou, Pieter Abbeel, and Sergey Levine.
\newblock Soft actor-critic: Off-policy maximum entropy deep reinforcement
  learning with a stochastic actor.
\newblock In Jennifer Dy and Andreas Krause, editors, \emph{Proceedings of the
  35th International Conference on Machine Learning}, volume~80 of
  \emph{Proceedings of Machine Learning Research}, pages 1861--1870. PMLR,
  10--15 Jul 2018{\natexlab{b}}.
\newblock URL \url{https://proceedings.mlr.press/v80/haarnoja18b.html}.

\bibitem[Haarnoja et~al.(2019)Haarnoja, Zhou, Hartikainen, Tucker, Ha, Tan,
  Kumar, Zhu, Gupta, Abbeel, and Levine]{haarnoja2019_learn}
Tuomas Haarnoja, Aurick Zhou, Kristian Hartikainen, George Tucker, Sehoon Ha,
  Jie Tan, Vikash Kumar, Henry Zhu, Abhishek Gupta, Pieter Abbeel, and Sergey
  Levine.
\newblock Soft actor-critic algorithms and applications, 2019.

\bibitem[Hardy et~al.(1952)Hardy, Littlewood, P{\'o}lya, P{\'o}lya,
  et~al.]{hardy1952inequalities}
Godfrey~Harold Hardy, John~Edensor Littlewood, George P{\'o}lya, Gy{\"o}rgy
  P{\'o}lya, et~al.
\newblock \emph{Inequalities}.
\newblock Cambridge university press, 1952.

\bibitem[Hong et~al.(2022)Hong, Yang, and Agrawal]{hong2022bilinear}
Zhang-Wei Hong, Ge~Yang, and Pulkit Agrawal.
\newblock Bilinear value networks.
\newblock \emph{arXiv preprint arXiv:2204.13695}, 2022.

\bibitem[Hunt et~al.(2019)Hunt, Barreto, Lillicrap, and Heess]{hunt_diverg}
Jonathan Hunt, Andre Barreto, Timothy Lillicrap, and Nicolas Heess.
\newblock Composing entropic policies using divergence correction.
\newblock In Kamalika Chaudhuri and Ruslan Salakhutdinov, editors,
  \emph{Proceedings of the 36th International Conference on Machine Learning},
  volume~97 of \emph{Proceedings of Machine Learning Research}, pages
  2911--2920. PMLR, 09--15 Jun 2019.
\newblock URL \url{https://proceedings.mlr.press/v97/hunt19a.html}.

\bibitem[Kim et~al.(2022)Kim, Park, and Kim]{kimconstrained}
Jaekyeom Kim, Seohong Park, and Gunhee Kim.
\newblock Constrained gpi for zero-shot transfer in reinforcement learning.
\newblock In \emph{Advances in Neural Information Processing Systems}, 2022.
\newblock URL \url{https://openreview.net/forum?id=sWNT5lT7l9G}.

\bibitem[Lee et~al.(2021)Lee, Laskin, Srinivas, and Abbeel]{lee2021sunrise}
Kimin Lee, Michael Laskin, Aravind Srinivas, and Pieter Abbeel.
\newblock Sunrise: A simple unified framework for ensemble learning in deep
  reinforcement learning.
\newblock In \emph{International Conference on Machine Learning}, pages
  6131--6141. PMLR, 2021.

\bibitem[Levine(2018)]{LevineTutorial}
Sergey Levine.
\newblock {Reinforcement Learning and Control as Probabilistic Inference:
  Tutorial and Review}.
\newblock \emph{arXiv}, May 2018.
\newblock URL \url{https://arxiv.org/abs/1805.00909v3}.

\bibitem[Mann and Choe(2013)]{TL_bound}
Timothy~A. Mann and Yoonsuck Choe.
\newblock Directed exploration in reinforcement learning with transferred
  knowledge.
\newblock In Marc~Peter Deisenroth, Csaba Szepesvári, and Jan Peters, editors,
  \emph{Proceedings of the Tenth European Workshop on Reinforcement Learning},
  volume~24 of \emph{Proceedings of Machine Learning Research}, pages 59--76,
  Edinburgh, Scotland, 30 Jun--01 Jul 2013. PMLR.
\newblock URL \url{https://proceedings.mlr.press/v24/mann12a.html}.

\bibitem[Nemecek and Parr(2021)]{nemecek}
Mark Nemecek and Ronald Parr.
\newblock Policy caches with successor features.
\newblock In \emph{International Conference on Machine Learning}, pages
  8025--8033. PMLR, 2021.

\bibitem[Pateria et~al.(2021)Pateria, Subagdja, Tan, and Quek]{hierarchy}
Shubham Pateria, Budhitama Subagdja, Ah-hwee Tan, and Chai Quek.
\newblock Hierarchical reinforcement learning: A comprehensive survey.
\newblock \emph{ACM Computing Surveys (CSUR)}, 54\penalty0 (5):\penalty0 1--35,
  2021.

\bibitem[Peng et~al.(2019)Peng, Chang, Zhang, Abbeel, and Levine]{peng_MCP}
Xue~Bin Peng, Michael Chang, Grace Zhang, Pieter Abbeel, and Sergey Levine.
\newblock Mcp: Learning composable hierarchical control with multiplicative
  compositional policies.
\newblock In H.~Wallach, H.~Larochelle, A.~Beygelzimer, F.~d\textquotesingle
  Alch\'{e}-Buc, E.~Fox, and R.~Garnett, editors, \emph{Advances in Neural
  Information Processing Systems 32}, pages 3681--3692. Curran Associates,
  Inc., 2019.
\newblock URL
  \url{https://proceedings.neurips.cc/paper/2019/file/95192c98732387165bf8e396c0f2dad2-Paper.pdf}.

\bibitem[Raffin et~al.(2021)Raffin, Hill, Gleave, Kanervisto, Ernestus, and
  Dormann]{stable-baselines3}
Antonin Raffin, Ashley Hill, Adam Gleave, Anssi Kanervisto, Maximilian
  Ernestus, and Noah Dormann.
\newblock Stable-baselines3: Reliable reinforcement learning implementations.
\newblock \emph{Journal of Machine Learning Research}, 22\penalty0
  (268):\penalty0 1--8, 2021.
\newblock URL \url{http://jmlr.org/papers/v22/20-1364.html}.

\bibitem[Saxe et~al.(2017)Saxe, Earle, and Rosman]{saxe_hier}
Andrew~M. Saxe, Adam~C. Earle, and Benjamin Rosman.
\newblock Hierarchy through composition with multitask {LMDP}s.
\newblock In Doina Precup and Yee~Whye Teh, editors, \emph{Proceedings of the
  34th International Conference on Machine Learning}, volume~70 of
  \emph{Proceedings of Machine Learning Research}, pages 3017--3026. PMLR,
  06--11 Aug 2017.
\newblock URL \url{https://proceedings.mlr.press/v70/saxe17a.html}.

\bibitem[Sutton and Barto(2018)]{suttonBook}
Richard~S Sutton and Andrew~G Barto.
\newblock \emph{Reinforcement learning: An introduction}.
\newblock MIT press, 2018.

\bibitem[Sutton et~al.(1999)Sutton, Precup, and Singh]{sutton_4room}
Richard~S. Sutton, Doina Precup, and Satinder Singh.
\newblock Between mdps and semi-mdps: A framework for temporal abstraction in
  reinforcement learning.
\newblock \emph{Artificial Intelligence}, 112\penalty0 (1):\penalty0 181--211,
  1999.
\newblock ISSN 0004-3702.
\newblock \doi{https://doi.org/10.1016/S0004-3702(99)00052-1}.
\newblock URL
  \url{https://www.sciencedirect.com/science/article/pii/S0004370299000521}.

\bibitem[Tasse et~al.(2020)Tasse, James, and Rosman]{boolean}
Geraud~Nangue Tasse, Steven James, and Benjamin Rosman.
\newblock A boolean task algebra for reinforcement learning.
\newblock \emph{Advances in Neural Information Processing Systems},
  33:\penalty0 9497--9507, 2020.

\bibitem[Tasse et~al.(2021)Tasse, James, and Rosman]{boolean_stoch}
Geraud~Nangue Tasse, Steven James, and Benjamin Rosman.
\newblock Generalisation in lifelong reinforcement learning through logical
  composition.
\newblock In \emph{Deep RL Workshop NeurIPS 2021}, 2021.
\newblock URL \url{https://openreview.net/forum?id=kO-Cgmasm7}.

\bibitem[Taylor and Stone(2009)]{taylor_survey}
Matthew~E. Taylor and Peter Stone.
\newblock Transfer learning for reinforcement learning domains: A survey.
\newblock \emph{Journal of Machine Learning Research}, 10\penalty0
  (56):\penalty0 1633--1685, 2009.
\newblock URL \url{http://jmlr.org/papers/v10/taylor09a.html}.

\bibitem[Todorov(2009)]{Todorov}
Emanuel Todorov.
\newblock Compositionality of optimal control laws.
\newblock In Y.~Bengio, D.~Schuurmans, J.~Lafferty, C.~Williams, and
  A.~Culotta, editors, \emph{Advances in Neural Information Processing
  Systems}, volume~22. Curran Associates, Inc., 2009.
\newblock URL
  \url{https://proceedings.neurips.cc/paper/2009/file/3eb71f6293a2a31f3569e10af6552658-Paper.pdf}.

\bibitem[Van~Niekerk et~al.(2019)Van~Niekerk, James, Earle, and
  Rosman]{vanNiekerk}
Benjamin Van~Niekerk, Steven James, Adam Earle, and Benjamin Rosman.
\newblock Composing value functions in reinforcement learning.
\newblock In Kamalika Chaudhuri and Ruslan Salakhutdinov, editors,
  \emph{Proceedings of the 36th International Conference on Machine Learning},
  volume~97 of \emph{Proceedings of Machine Learning Research}, pages
  6401--6409. PMLR, 06 2019.
\newblock URL \url{https://proceedings.mlr.press/v97/van-niekerk19a.html}.

\bibitem[Ziebart(2010)]{ZiebartThesis}
Brian~D. Ziebart.
\newblock {Modeling Purposeful Adaptive Behavior with the Principle of Maximum
  Causal Entropy}.
\newblock \emph{Carnegie Mellon University}, 12 2010.
\newblock \doi{10.1184/R1/6720692.v1}.
\newblock URL
  \url{https://kilthub.cmu.edu/articles/thesis/Modeling_Purposeful_Adaptive_Behavior_with_the_Principle_of_Maximum_Causal_Entropy/6720692}.

\end{thebibliography}
\end{@fileswfalse}

\end{document}